\title{Differentially Private Online-to-Batch for Smooth Losses}
\author{%
  Qinzi Zhang\\
  Dept. of Electrical and Computer Engineering\\
  Boston University\\
  \texttt{qinziz@bu.edu}\\
   \And
  Hoang Tran\\
  Dept. of Electrical and Computer Engineering\\
  Boston University\\
  \texttt{tranhp@bu.edu}\\
  \AND
  Ashok Cutkosky\\
  Dept. of Electrical and Computer Engineering\\
  Boston University\\
  \texttt{ashok@cutkosky.com}\\
}
\newtheorem{theorem}{Theorem}
\newtheorem{lemma}[theorem]{Lemma}
\newtheorem{proposition}[theorem]{Proposition}
\theoremstyle{definition}
\newtheorem{definition}{Definition}
\newtheorem{assumption}{Assumption}
\newtheorem{remark}[theorem]{Remark}
\newcommand{\RR}{\mathbb{R}}
\newcommand{\A}{\mathcal{A}}
\renewcommand{\L}{\mathcal{L}}
\newcommand{\D}{\mathcal{D}}
\renewcommand{\P}{\mathop{\mathcal{P}}}
\newcommand{\E}{\mathop{\mathbb{E}}}
\newcommand{\R}{\mathbb{R}}
\newcommand{\argmin}{\mathop{\text{argmin}}}
\newcommand{\argmax}{\mathop{\text{argmax}}}
\newcommand{\N}{\mathcal{N}}
\newcommand{\IN}{\textsc{IN}}
\newcommand{\OUT}{\textsc{OUT}}
\newcommand{\Z}{\mathcal{Z}}
\newcommand{\nSG}{\mathrm{nSG}}
\newcommand{\reg}{\mathrm{Regret}}
\newcommand{\K}{\mathcal{K}}
\newcommand{\weight}{\beta}
\begin{document}

\maketitle

\begin{abstract}
  We develop a new reduction that converts any online convex optimization algorithm suffering $O(\sqrt{T})$ regret into an $\epsilon$-differentially private stochastic convex optimization algorithm with the optimal convergence rate $\tilde O(1/\sqrt{T} + \sqrt{d}/\epsilon T)$ on smooth losses in linear time, forming a direct analogy to the classical non-private ``online-to-batch'' conversion. 
By applying our techniques to more advanced adaptive online algorithms, we produce adaptive differentially private counterparts whose convergence rates depend on apriori unknown variances or parameter norms.
\end{abstract}

\section{Introduction} \label{sec:intro}

Solving stochastic convex optimization (SCO) problems forms a core component of training models in machine learning, and is the topic of this paper. The SCO problem is to optimize an objective $\mathcal{L}$:
\begin{align}
    \min_{x\in W} \L(x) = \E_{z\sim P_z}[\ell(x,z)]\label{eqn:stochopt}
\end{align}
Here, $x$ represents model parameters or weights lying in a convex domain $W\subset \RR^d$, $P_z$ is some unknown distribution over examples $z$ and $\ell(x,z)$ represents a loss function we will assume to be convex and smooth in $x$. Although we do not know $P_z$, we do know the loss $\ell$, and we have access to an i.i.d. dataset $Z=(z_1,\dots,z_T)$ that may have been obtained via user surveys or volunteer tests. Using this information, we would like to solve the optimization problem (\ref{eqn:stochopt}). The quality of a putative solution $\hat x$ is measured by the \emph{suboptimality} gap $\L(\hat x) - \L(x_\star)$ for $x_\star \in \argmin_{x\in W} \L(x)$.

In addition to solving (\ref{eqn:stochopt}), we also wish to \emph{preserve privacy} for the people who contributed to the dataset $Z$. To this end, we require our algorithms to be \emph{differentially private} \citep{dwork2006calibrating, dwork2014algorithmic}, which means that replacing any one $z_t$ with a different $z_t'$ has a negligible effect on $\hat x$. There is a delicate interplay between privacy, dataset size, and solution quality. As $T$ grows, the influence of any individual $z_t$ on $\hat x$ decreases, increasing privacy. However, for any finite $T$, one must necessarily leak \emph{some} information about the dataset in order to achieve a nontrivial $\L(\hat x)-\L(x_\star)$. The goal, then, is to minimize $\L(\hat x)-\L(x_\star)$ subject to the privacy constraint.

This problem has been well-studied in the literature, and by now the optimal tradeoffs are known and achievable\footnote{We focus on the harder stochastic optimization problem rather than empirical risk minimization, which is also well-studied, e.g. \citep{chaudhuri2011differentially, kifer2012private}.}. In particular, \cite{bassily2019private} exhibits a polytime algorithm that achieves $\L(\hat x)-\L(x_\star)\le \tilde O(1/\sqrt{T} + \sqrt{d}/\epsilon T)$ where $\epsilon$ is a measure of privacy loss (large $\epsilon$ means less private), and moreover they show that this bound is tight in the worst case. Then, \cite{feldman2020private} provides an improved algorithm that obtains the same guarantee in $O(T)$ time. 
Further work on this problem considers
assumptions on the geometry \citep{asi2021private}, gradient distributions \citep{kamath2021improved}, or smoothness \citep{bassily2020stability, kulkarni2021private}.

All private optimization algorithms we are aware of fall into one of two camps: either they employ some simple pre-processing that ``sanitizes'' the inputs to a non-private optimization algorithm (e.g. the empirically popular DP-SGD of \cite{abadi2016deep}), or they make a careful analysis of the dynamics of their algorithm (e.g. bounding the sensitivity of a single step of stochastic gradient descent). In the former case, the algorithm typically does \emph{not} achieve the optimal convergence rate for stochastic optimization. In the latter case, the algorithm becomes more rigidly tied to the privacy analysis, resulting in delicate ``theory-crafted'' methods that are less popular in practice.

In contrast, in the non-private setting, there is a simple and general technique to produce stochastic optimization algorithms with optimal convergence guarantees: the \emph{online-to-batch conversion} \citep{cesa2004generalization}. This method directly converts any \emph{online convex optimization} (OCO)\citep{shalev2011online, hazan2019introduction, orabona2019modern} algorithm into a stochastic optimization algorithm. OCO is a simple and elegant game-theoretic formulation of the optimization problem that has witnessed an explosion of diverse algorithms and techniques, so that this conversion result immediately implies a vast array of practical optimization algorithms.
In summary: producing \emph{private} stochastic optimization algorithms with optimal convergence rates is currently delicate and difficult, while producing non-private algorithms is essentially trivial.

Our goal is to rectify this issue. To do so, we produce a new \emph{differentially private} online-to-batch conversion. In direct analogy to the non-private conversion, our method converts any OCO algorithm into a \emph{private} stochastic optimization algorithm. After using our conversion, any OCO algorithm that achieves the optimal regret (the standard measure of algorithm quality in OCO), will automatically achieve the optimal suboptimality gap of $\tilde O(1/\sqrt{T} + \sqrt{d}/\epsilon T)$. Our conversion has additional desirable properties: although convexity is required for the guarantee on suboptimality, it is \emph{not} required for the privacy guarantee, meaning that the method can be easily applied to non-convex tasks (e.g. in deep learning). Further, by largely decoupling the privacy analysis from the algorithm design through this reduction, we can leverage the rich literature of OCO to build private algorithms with new \emph{adaptive} guarantees. For two explicit examples, (1) we construct an algorithm guaranteeing $\L(\hat  x)-\L(x_\star)\le \tilde O(\sigma /\sqrt{T}+ \sqrt{d}/\epsilon T)$, where $\sigma$ is the apriori unknown standard deviation in the gradient $\nabla \ell(w,z)$, and (2), we construct an algorithm guaranteeing $\L(\hat  x)-\L(x_\star)\le \tilde O(\|\hat x-x_1\|/\sqrt{T} + \sqrt{d}/\epsilon T) $, where $x_1$ is any user-supplied point. This last guarantee may have applications in private \emph{fine tuning} \citep{li2022private, yu2021differentially, hoory2021learning, kurakin2022toward, mehta2022large}, as the bound automatically improves when the algorithm is provided with a good initialization point.

\subsection{Preliminaries}

\paragraph{Problem setup} 

We define the loss function as $\ell: W \times Z \to \RR$ where $W \subseteq \RR^d$ is a convex domain and $Z=(z_1,\dots,z_T)$ is an element of $\mathcal{Z}^T$ for some dataspace $\mathcal{Z}$. We assume $Z$ is an i.i.d. dataset and $z_t\sim P_z$ for some unknown disribution over $\Z$. We then define $\L(x) = \E_{z\sim P_z}[\ell(x,z)]$.

Let $\|\cdot\|$ be a norm on $\RR^d$, with dual norm $\|\cdot\|_*$ defined by $\|g\|_*=\sup_{\|x\|\leq 1} \langle g,x\rangle$. By definition, $\langle g,x\rangle \leq \|g\|_*\|x\|$, (\emph{Fenchel-Young's inequality}).
We make the following assumptions:

\begin{assumption}
\label{as:sc-norm}
$\|\cdot\|^2$ is $\lambda$-strongly convex w.r.t. $\|\cdot\|$: for all $x,y\in \RR^d$ and $g\in \partial\|x\|^2$,
\begin{align*}
    \|y\|^2 \geq \|x\|^2 + \langle g, y-x\rangle + \frac{\lambda}{2}\|y-x\|^2.
\end{align*}
\end{assumption}

\begin{assumption}
\label{as:bounded-domain}
$W$ has diameter at most $D$: $\forall$ $x,y\in W$, $\|x-y\|\leq D$.
\end{assumption}

\begin{assumption}
\label{as:Lipschitz}
$\ell(x,z)$ is $G$-Lipschitz in $x$: $\forall$ $x\in W, z\in Z$, $\|\nabla\ell(x,z)\|_* \leq G$.
\end{assumption}

\begin{assumption}
\label{as:smooth}
$\ell(x,z)$ is $H$-smooth in $x$: $\forall$ $x,y\in W, z\in Z$, $\|\nabla\ell(x,z) - \nabla\ell(y,z)\|_* \leq H\|x-y\|$.
\end{assumption}

\begin{assumption}
\label{as:sigma-G}
$\E[\|\nabla\L(x) - \nabla\ell(x,z)\|_*^2] \leq \sigma_G^2$ for all $x,z$.
\end{assumption}

\begin{assumption}
\label{as:sigma-H}
$\E[\|[\nabla\L(x)-\nabla\L(y)] - [\nabla\ell(x,z)-\nabla\ell(y,z)]\|_*^2] \leq \sigma_H^2\|x-y\|^2$ for all $x,y,z$.
\end{assumption}

Note that if $\ell$ is $G$-Lipschitz and $H$-smooth in $x$, then so it $\L$, and Assumption \ref{as:sigma-G} and \ref{as:sigma-H} hold with $\sigma_G\leq 2G$ and $\sigma_H\leq 2H$. Moreover, notice that Assumption \ref{as:bounded-domain} - \ref{as:sigma-H} depend on the norm $\|\cdot\|$. 

\paragraph{Differential Privacy} 

We now provide a formal definition of our privacy metric, differential privacy (DP). The definition hinges on the notion of \emph{neighboring datasets}: 
datasets $Z=(z_1,\ldots,z_T)$ and $Z'=(z_1',\ldots,z_T')$ in $\Z^T$ are said to be neighbors if $|Z-Z'|\triangleq|\{t\ |\  z_t\ne z'_t\}| = 1$. 

\begin{definition}[$(\epsilon,\delta)$-DP \citep{dwork2014algorithmic}] A randomized algorithm $M: \Z^T \to \RR^d$ is $(\epsilon,\delta)$-differentially private for $\epsilon,\delta\geq0$ if for any neighboring $Z,Z'\in \mathcal{Z}^T$ and $S\in\RR^d$:
\begin{align*}
    \P\{M(Z) \in S\} &\le \exp(\epsilon)\P\{M(Z')\in S\} + \delta
\end{align*}
\end{definition}

An alternative definition is \textit{R\'enyi differential privacy} (RDP), which is a generalization of DP that allows us to compose mechanisms more easily and achieve tighter privacy bounds in certain cases. 

\begin{definition}[$(\alpha,\epsilon)$-RDP \citep{mironov2017renyi}] A randomized mechanism $M: \Z^T \to \RR^d$ is said to be $(\alpha,\epsilon)$-R\'enyi differentially private for $\alpha>1,\epsilon\geq0$ if for any neighboring datasets $Z,Z' \in \mathcal{Z}^T$:
\begin{align*}
    D_{\alpha}(M(Z)||M(Z'))\le \epsilon
\end{align*}
where $D_{\alpha}(P||Q) \triangleq \frac{1}{\alpha -1}\log \E_{x \sim Q} \left(\frac{P(x)}{Q(x)}\right)$.
\end{definition}

RDP can be easily converted to the usual $(\epsilon,\delta)$-DP as follows \citep{mironov2017renyi}: if a randomized algorithm $M$ is $(\alpha,\epsilon)$-RDP, then it is also $(\epsilon+\frac{\log1/\delta}{\alpha-1},\delta)$-DP for all $\delta\in(0,1)$. In particular, if $M$ is $(\alpha, \alpha\rho^2/2)$-RDP for all $\alpha>1$, then it is also $(2\rho\sqrt{\log(1/\delta)}, \delta)$-DP for all $\delta\ge \exp(-\rho^2)$.

Throughout the paper, we also frequently use the notion of sensitivity:

\begin{definition}[Sensitivity] 
The sensitivity of a function $f: \Z^T \to \RR^d $ w.r.t. norm $\|\cdot\|$ is:
\begin{align*}
    \Delta_f = \sup_{|Z-Z'|=1}\|f(Z) - f(Z')\|_*.
\end{align*}
Almost all methods for ensuring differential privacy involve injecting noise whose scale increases with the sensitivity of the output. In other words, small sensitivity implies high privacy.
\end{definition}


\section{Diffentially Private Online-to-Batch}
\label{sec:OTB}


In this section, we present our main differentially private online-to-batch algorithm. To start, we need to define \emph{online convex optimization} (OCO). OCO is a ``game'' in which for $T$ rounds, $t=1,\ldots,T$, an algorithms predicts a parameter $w_t\in W$. It then receives a convex loss $\ell_t:W\to \RR$ and pays the loss $\ell_t(w_t)$. The quality of the algorithm is measured by the regret w.r.t. a competitor $u$, defined as $\reg_T(u)=\sum_{t=1}^T \ell_t(w_t)-\ell_t(u)$.

Online-to-batch algorithms \citep{cesa2004generalization} convert OCO algorithms (online learners) into stochastic convex optimization algorithms. In particular, for $\weight_1,\ldots,\weight_T>0$, the anytime online-to-batch conversion \citep{cutkosky2019anytime} defines the $t$-th loss as $\ell_t(w)=\langle \weight_t\nabla\ell(x_t,z_t),w\rangle$, where $x_t=\sum_{i=1}^t \frac{\weight_iw_i}{\weight_{1:t}}$.\footnote{Throughout this paper, we denote $\weight_{1:t}=\sum_{i=1}^t\weight_i$.} Convergence of anytime online-to-batch builds on the following key result, and its proof is in Appendix \ref{app:proof-A} for completeness.


\begin{restatable}[\cite{cutkosky2019anytime}]{theorem}{basiclemma}
\label{lem:basic-lemma}
For any sequence of $\weight_t>0, g_t\in\RR^d$, suppose an online learner predicts $w_t$ and receives $t$-th loss $\ell_t(w)=\langle g_t,w\rangle$. Define $x_t=\sum_{i=1}^t \frac{\weight_iw_i}{\weight_{1:t}}$ where $\weight_{1:t}=\sum_{i=1}^t\weight_i$. Then for any convex and differentiable $\L$,
\begin{align*}
    \weight_{1:T}(\L(x_T)-\L(u)) \leq \reg_T(u) + \sum_{t=1}^T \langle \weight_t\nabla\L(x_t) - g_t, w_t-u\rangle, \ \forall u\in \RR^d.
\end{align*}
\end{restatable}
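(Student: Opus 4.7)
The plan is to first simplify the target inequality by unfolding the definition of regret. Since $\ell_t(w) = \langle g_t, w\rangle$, we have $\reg_T(u) = \sum_{t=1}^T \langle g_t, w_t - u\rangle$, and the $g_t$ terms on the right-hand side cancel exactly. The claim therefore reduces to the purely analytic statement
\[
    \weight_{1:T}(\L(x_T) - \L(u)) \leq \sum_{t=1}^T \weight_t \langle \nabla\L(x_t), w_t - u\rangle,
\]
which no longer involves $g_t$ or the online learner at all.

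I would prove this reduced inequality by induction on $T$. The base case $T=1$ is immediate from convexity of $\L$ since $x_1 = w_1$. For the inductive step, subtracting the induction hypothesis from the target and using $\weight_{1:T} = \weight_{1:T-1} + \weight_T$ reduces the task to
\[
\weight_{1:T}\L(x_T) - \weight_{1:T-1}\L(x_{T-1}) - \weight_T \L(u) \leq \weight_T\langle \nabla\L(x_T), w_T - u\rangle.
\]
I would then invoke convexity of $\L$ at the anchor $x_T$ against both $u$ and $x_{T-1}$ to upper bound $-\weight_T \L(u)$ and $-\weight_{1:T-1}\L(x_{T-1})$ in terms of $\L(x_T)$ plus linear corrections through $\nabla\L(x_T)$. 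After summing, the $\L(x_T)$ coefficients cancel exactly because $\weight_{1:T} - \weight_{1:T-1} - \weight_T = 0$, leaving only an inner product with $\nabla\L(x_T)$.

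The crucial algebraic identity, and the step where the anytime averaging scheme really pays off, is
\[
\weight_{1:T-1}(x_T - x_{T-1}) = \weight_T(w_T - x_T),
\]
which is a direct rearrangement of the defining recursion $\weight_{1:T}x_T = \weight_{1:T-1}x_{T-1} + \weight_T w_T$. Substituting this into the surviving inner product collapses the combination $\weight_{1:T-1}(x_T - x_{T-1}) + \weight_T(x_T - u)$ to $\weight_T(w_T - u)$, producing precisely the required bound and closing the induction.

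I do not expect a genuine obstacle here: the only nontrivial content is spotting that the weighted averaging identity eliminates the $x_{T-1}$ dependence in a single stroke. Everything else is convexity applied at the moving anchor $x_T$ and careful bookkeeping of the weights $\weight_{1:t}$.
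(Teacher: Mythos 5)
Your proof is correct and is essentially the same argument as the paper's: both hinge on the averaging identity $\weight_{1:t-1}(x_t-x_{t-1})=\weight_t(w_t-x_t)$ and two applications of convexity at the anchor $x_t$ (against $u$ and against $x_{t-1}$). The paper sums over $t$ and telescopes in one pass rather than packaging the recursion as an induction, and it carries the $g_t$ terms through the bound rather than cancelling them upfront as you do, but these are cosmetic reorganizations of the same proof.
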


A tighter bound is possible \citep{joulani2020simpler}, but the simpler expression above suffices for our analysis.
As an immediate result, choosing $\weight_t=1$ and $g_t=\weight_t\nabla\ell(x_t,z_t)$ satisfies $\E[g_t|x_t]=\weight_t\nabla\L(x_t)$, so $\E[\L(x_T)-\L(u)]\leq \E[\reg_T(u)]/T$. Therefore, any online learner with sub-linear regret guarantees convergence for the last iterate $x_T$. Notice that due to the formulation of the anytime online-to-batch, the iterate $x_t$ is \emph{stable}. Consider the case where $\weight_t = 1$ for all t.
Then, we have $\|x_t-x_{t-1}\|= \|w_t-x_{t-1}\|/t\le D/t$ \emph{regardless of what the online learner does}. This guarantee is significantly stronger than the classical online-to-batch result of \cite{cesa2004generalization}. We would like to take advantage of this stability to design our private online-to-batch algorithm. Intuitively, the algorithm has much lower sensitivity due to the stability of the iterates, which can be exploited to improve privacy.



Our goal will be to make the entire sequence $g_1,\dots,g_T$ private, which, in turn, makes the entire algorithm private. To do so, we would like to add noise to the $g_t$'s while still having $g_t$ be a good estimate of $\nabla \L(x_t)$. 
In standard non-private online-to-batch, $g_t$ is usually defined as the stochastic gradient $\nabla\ell(x_t,z_t)$. However, directly adding noise to $\nabla\ell(x_t,z_t)$ is not a good idea because its sensitivity is $O(1)$ (more specifically the sensitivity is bounded by $2G$ by Lipschitzness). Consequently, we need to add noise to $g_t$ whose variance is of order $O(1/\epsilon)$ in order to achieve $\epsilon$-differential privacy.

Instead, we express $g_t=\sum_{i=1}^t \delta_i$, where $\delta_i=\weight_i\nabla\ell(x_i,z_i)-\weight_{i-1}\nabla\ell(x_{i-1},z_i)$ and $\weight_0\equiv 0$. Since we assume $\ell$ is smooth, if we set $\weight_i=1$ then $\|\delta_i\|_*\leq H\|x_i-x_{i-1}\|\leq DH/i$, i.e., the sensitivity of $\delta_i$ is $O(1/i)$.
As a result, we can privately estimate $g_t$ with error roughly $\tilde O(1/t\epsilon)$ using the \emph{tree aggregation mechanism} \citep{dwork2010differential, chan2011private}, an advanced technique that privately estimates running sums, such as our $\sum_{i=1}^t \delta_i$. Compared to directly adding noise to $\nabla\ell(x_t,z_t)$, this method adds less noise ($\tilde O(1/t\epsilon)$ compared to $O(1/\epsilon)$) and thus allows us to achieve the optimal rate.

On the other hand, after using these advanced aggregation techniques, $g_t$ is no longer an conditionally unbiased estimator of $\weight_t\nabla\L(x_t)$. More specifically, although it remains the case that $\E_{z_i}[\delta_i|z_1,\ldots,z_{i-1}]=\weight_i\nabla\L(x_i)-\weight_{i-1}\nabla\L(x_{i-1})$, it is not necessarily true that $\E[g_t|z_1,\ldots,z_{t-1}]\neq \weight_t\nabla\L(x_t)$. Unbiasedness plays a key role in standard convergence analyses, but we will need a much more delicate analysis. 

Moreover, although we mostly discuss the $\weight_t =1$ case above for intuition, our algorithm is analyzed using the general case $\weight_t = t^k$ for $k\ge 1$. The guiding principle for this formula is the sensitivity of $\delta_t$. For $k\geq 1$, the sensitivity of $\delta_t$ is of order $O(t^{k-1})$. For $k=1$, this is a constant sensitivity, which is particularly intuitive for analysis. For $k=0$ (i.e. the standard weighting in online-to-batch), the sensitivity is actually $O(1/t)$, which is much more complicated to analyze. In order to apply the tree aggregation easily, we want the sensitivity of $\delta_t$ to be polynomial in $t$, rather than the inverse polynomial $1/t$, so we define $\beta_t=t^k$ and ask $t\geq k$.
Furthermore, in all cases except for the parameter-free case (Section \ref{sec:parameter-free}), our results hold for all $k\geq 1$. In the parameter-free case, we choose $k=3$ for algebraic reasons.

The pseudo-code is presented in Algorithm \ref{alg:DP-OTB}, which has linear time complexity. It is similar to anytime online-to-batch, while we replace $g_t$ with the more complicated definition and add noise $\gamma_t$ generated by the $\textsc{Noise}$ subroutine, which implements the tree aggregation. More specifically, given random noises $\{R_t\}$, $\textsc{Noise}(t)$ returns $\sum_{i\in I_t}R_i$, where $I_t$ is the set of cumulative sums of the binary expansion of $t$. That is, for some $n\ge \lfloor \log_2(t)\rfloor+1$, we define $\mathrm{bin}(t)\in\{0,1\}^n$ by $t=\sum_{i=1}^n \mathrm{bin}(t)[i] 2^{n-i}$. Then, $I_t$ consists of all non-zero sums of the form $\sum_{k=1}^i \mathrm{bin}(t)[i] 2^{n-i}$. For examples, $7=4+2+1$, so $I_7=\{4,6,7\}$; $8=8$, so $I_8=\{8\}$.

\begin{algorithm}[]
\begin{algorithmic}[1]
    \State \textbf{Input:} OCO algorithm $\A$ with domain $W$, positive sequence $\{\weight_t\}$, 
    distribution $\D$, dataset $Z$.
    \State \textbf{Initialize:} Set $\weight_0 = 0, x_0 = 0$ and $g_0 = 0$. Set global variable $\mathcal{R}=\{\}$.
    \For{$t=1,\ldots,T$}
        \State Get $w_t$ from $\A$ and compute $x_t = (\weight_{1:t-1}x_{t-1}+\weight_tw_t)/\weight_{1:t}$.
        \State Compute gradient difference $\delta_t = \weight_t\nabla\ell(x_t,z_t)-\weight_{t-1}\nabla\ell(x_{t-1},z_t)$.
        \State Update $g_t = g_{t-1}+\delta_t$ and generate noise $\gamma_t=\textsc{Noise}(t)$.
        \State Send $\ell_t(w) = \langle g_t+\gamma_t, w\rangle$ to $\A$ as the $t$-th loss.
    \EndFor
    
    \vspace{0.5em}
    
    \Function{Noise}{$t$}
        \State \textbf{Initialize: } Set $k=0$, $I_t=\{\}$, $\mathrm{bin}(t)$ be the binary encoding of $t$, and $n=\lfloor \log_2 t\rfloor+1$.
        \For{$i=1,\ldots,n$}
            \State If $\mathrm{bin}(t)[i] = 1$, update $k = k+2^{n-i}$ and $I_t=I_t\cup\{k\}$.
        \EndFor 
        \State Generate noise $\tilde R_t\sim \D$, compute $R_t=\sigma_t\tilde R_t$, and update $\mathcal{R}=\mathcal{R}\cup\{R_t\}$.
        \State \textbf{Return} $\gamma_t = \sum_{i\in I_t} R_i$.
    \EndFunction
\end{algorithmic}
\caption{Differentially-Private Online-to-Batch}
\label{alg:DP-OTB}
\end{algorithm}

\paragraph{Convergence}

Following Theorem \ref{lem:basic-lemma} and Fenchel-Young's inequality, Algorithm \ref{alg:DP-OTB} satisfies:
\begin{align}
    \label{eq:basic-decomposition}
    \weight_{1:T}(\L(x_T)-\L(x^*)) 
    &\leq \reg_T(x^*) + \sum_{t=1}^T \langle \weight_t\nabla\L(x_t) - g_t - \gamma_t, w_t-x^*\rangle \\
    &\leq \reg_T(x^*) + \sum_{t=1}^TD\|\weight_t\nabla\L(x_t)-g_t\|_* + \sum_{t=1}^T D\|\gamma_t\|_*.
    \label{eq:basic-decomposition-2}
\end{align}

\eqref{eq:basic-decomposition-2} decomposes the suboptimality gap $\L(x_T)-\L(x^*)$ into three components: (i) regret $\reg_T(x^*)$, (ii) error associated with variance of $g_t$, measured by $\|\weight_t\nabla\L(x_t)-g_t\|_*$, and (iii) error from DP mechanism, measured by $\|\gamma_t\|_*$.
To get a tight bound, we observe that $\weight_t\nabla\L(x_t)-g_t$ and $\gamma_t$ are sums of conditionally mean-zero random vectors (Lemma \ref{lem:variance-expectation-bound} in Appendix \ref{app:proof-A}). With a martingale bound in high dimension with general norm (Lemma \ref{lem:martingale-bound-general-norm}), we can derive the following bounds. In particular, we show (Lemma \ref{lem:variance-expectation-bound}) that if Assumption \ref{as:sc-norm} - \ref{as:sigma-H} hold and set $\weight_t=t^k$, then
\begin{align}
    \E[\|\weight_t\nabla\L(x_t)-g_t\|_*^2] \leq 4(k+1)^2(\sigma_G^2+D^2\sigma_H^2)t^{2k-1}/\lambda, \label{eq:variance-norm}.
\end{align}
Moreover, we show (Lemma \ref{lem:privacy-expectation}) that if $\E[R_t]=0$ and $\E[\|R_t\|_*^2]\leq \bar \sigma_t^2$, then
\begin{align}
    \E[\|\gamma_t\|_*^2] \leq 2(\max_{i\leq t}\bar \sigma_i^2)\log_2(2t)/\lambda. \label{eq:privacy-norm}
\end{align}

\paragraph{Privacy}

The next step is to determine how much noise is sufficient for Algorithm \ref{alg:DP-OTB} to be RDP. We first make the following assumption on the distribution $\D$.




\begin{definition}[$(V,\alpha)$-RDP distribution]
\label{def:RDP-dist}
A distribution $\D$ on $\RR^d$ is said to be a \emph{$(V,\alpha)$-RDP distribution on $\|\cdot\|$} if it satisfies that for $R\sim\D$ (i) $\E[R]=0$, (ii) $\E[\|R\|_*^2]\leq V$, and (iii) for all $\rho>0$ and $\mu,\mu'\in\RR^d$, if $\sigma^2\geq \|\mu-\mu'\|_*^2/\rho^2$, then $D_\alpha(\sigma R+\mu\|\sigma R+\mu')\leq \alpha\rho^2/2$.\footnote{Here we slightly abuse the notation. For random vectors $X,Y$, $D_\alpha(X\|Y)$ denotes the R\'enyi divergence of the underlying distributions of $X$ and $Y$.}
\end{definition}

\begin{remark}
\label{rmk:Gaussian-RDP-dist}
A standard $(V,\alpha)$-RDP distribution on the $L_2$ norm is the multivariate normal distribution. Let $R\sim\N(0,I)$ then it is clear that $\E[R]=0$ and $\E[\|R\|_2^2]=d$. For the third condition, we can show that for all $\mu,\mu'$ and $\alpha>1$, $D_\alpha(\N(\mu, \sigma^2I)\|\N(\mu',\sigma^2I))\leq \alpha\|\mu-\mu'\|_2^2/2\sigma^2$, which is further bounded by $\alpha\rho^2/2$ for all $\sigma^2 \geq \|\mu-\mu'\|_2^2/\rho^2$  (Lemma \ref{lemma:renyigaussian}) .
Therefore, $\N(0,I)$ is a $(d,\alpha)$-RDP distribution for all $\alpha>1$.
\end{remark}


As its name suggests, adding noise sampled from RDP distribution is sufficient to make a deterministic algorithm RDP.
Consider function $\hat{f}(Z)=f(Z)+\sigma R$, where $R\sim \D$ and $\D$ is a $(V,\alpha)$-RDP distribution on $\|\cdot\|$. Let $\Delta$ be the sensitivity of $f$ w.r.t $\|\cdot\|$. Set $\sigma^2\geq \Delta^2/\rho^2$, then by Definition \ref{def:RDP-dist},
\begin{align*}
    D_\alpha(\hat{f}(Z)\|\hat{f}(Z')) 
    &= D_\alpha(f(Z)+\sigma R \| f(Z')+\sigma R) \leq \alpha\rho^2/2.
\end{align*}
Thus, $\hat{f}$ is $(\alpha,\alpha\rho^2/2)$-RDP.
Moreover, we can compose RDP mechanisms via the tree aggregation mechanism. Specifically, we set $\weight_t=t^k$ and define the variance $\sigma_t^2$ in Algorithm \ref{alg:DP-OTB} as follows:
\begin{align}
    \sigma_t^2 
    &= \frac{4(k+1)^2}{\rho^2}(G+H\max_{i\in[t]}\|w_i-x_{i-1}\|)^2\log_2(2T)t^{2k-2}, \label{eq:RDP-noise}
\end{align}
We assume $\|w_i-x_{i-1}\|\leq D$. 
Upon substituting \eqref{eq:RDP-noise} into \eqref{eq:privacy-norm}, we get $\E[\|R_t\|_*^2] \leq \bar \sigma_t^2 \leq V\sigma_t^2$ and:
\begin{align}
    \E[\|\gamma_t\|_*^2]
    &\leq \frac{8(k+1)^2V(G+DH)^2}{\lambda\rho^2} \log_2^2(2T)t^{2k-2}.
    \label{eq:gamma-bound}
\end{align}
The following theorem shows that Algorithm \ref{alg:DP-OTB} is R\'enyi differentially private if we define $\sigma_t^2$ as in \eqref{eq:RDP-noise}. Its proof is presented in Appendix \ref{app:tree-aggregation-RDP}. Note that the privacy guarantee does \emph{not} require i.i.d. $Z$.
\begin{restatable}{theorem}{TreeRDP}
\label{thm:main-privacy}
Suppose $\|\cdot\|^2$ is $\lambda$-strongly convex, $W$ is bounded by $D$, $\ell$ is $G$-Lipschitz and $H$-smooth, and $\D$ is a $(V,\alpha)$-RDP distribution. If $\weight_t = t^k$ and $\sigma_t^2$ is as defined in \eqref{eq:RDP-noise}, then Algorithm \ref{alg:DP-OTB} is $(\alpha,\alpha\rho^2/2)$-DP for all datasets $Z$.
\end{restatable}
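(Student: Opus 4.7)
The plan is to prove that the sequence of losses $(g_t+\gamma_t)_{t=1}^T$ that Algorithm~\ref{alg:DP-OTB} sends to the online learner $\A$ is $(\alpha,\alpha\rho^2/2)$-RDP; every other quantity the algorithm might output (the $w_t$, the $x_t$, the final iterate) is a deterministic function of this sequence together with $\A$'s data-independent randomness, so the guarantee transfers to the whole algorithm by post-processing. This route does not appeal to the i.i.d.\ structure of $Z$, consistent with the theorem's claim for \emph{all} datasets.

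The first step is to bound the sensitivity of a single gradient difference $\delta_t$ with respect to its own datum $z_t$, holding $w_t,x_{t-1},x_t$ fixed---this is the relevant notion when we later condition on the transcript. Set $M_t=\max_{i\le t}\|w_i-x_{i-1}\|$ (which is at most $D$ by the posited bound). Decomposing
\[
\delta_t-\delta_t' = (\beta_t-\beta_{t-1})[\nabla\ell(x_t,z_t)-\nabla\ell(x_t,z_t')] + \beta_{t-1}\bigl([\nabla\ell(x_t,z_t)-\nabla\ell(x_{t-1},z_t)] - [\nabla\ell(x_t,z_t')-\nabla\ell(x_{t-1},z_t')]\bigr),
\]
the first bracket is controlled by $G$-Lipschitzness together with $\beta_t-\beta_{t-1}\le kt^{k-1}$, and the second by $H$-smoothness together with $\|x_t-x_{t-1}\|\le \beta_t\|w_t-x_{t-1}\|/\beta_{1:t}\le (k+1)\|w_t-x_{t-1}\|/t$. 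The net bound is $\|\delta_t-\delta_t'\|_*\le 2(k+1)(G+HM_t)\,t^{k-1}$, which is exactly $\sigma_t$ from \eqref{eq:RDP-noise} scaled by $\rho/\sqrt{\log_2(2T)}$.

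Next I interpret \textsc{Noise} as standard binary-tree aggregation: the fresh noise $R_i$ generated at round $i$ is attached to the tree node whose subtree of leaves $\mathrm{sub}(i)$ is the contiguous block that ``completes'' at round $i$, and $I_t$ is the canonical $O(\log t)$-element cover of $\{1,\dots,t\}$. By induction on $t$, the transcript $(g_t+\gamma_t)_{t=1}^T$ and the noised node sums $S_i=\bigl(\sum_{j\in\mathrm{sub}(i)}\delta_j\bigr)+R_i$, $i=1,\dots,T$, determine each other, so it suffices to analyze the privacy of the sequential release of $S_1,\dots,S_T$. For the adaptive composition, fix neighbors $Z,Z'$ differing only at position $j^*$ and condition on the event that the first $i-1$ released sums $S_{1:i-1}$ coincide: this pins down the learner's state, hence $w_{1:i}$ and $x_{1:i}$, identically under $Z$ and $Z'$, so $\delta_j=\delta_j'$ for every $j\ne j^*$. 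Consequently $S_i=S_i'$ exactly whenever $j^*\notin\mathrm{sub}(i)$, contributing zero conditional R\'enyi divergence, and only the at most $\log_2(2T)$ ancestor nodes of leaf $j^*$ incur any cost. For each such ancestor $i$, the conditional sensitivity equals $\|\delta_{j^*}-\delta_{j^*}'\|_*\le 2(k+1)(G+HM_i)\,i^{k-1}$ (using $j^*\le i$, $M_{j^*}\le M_i$, and $k\ge 1$), so by Definition~\ref{def:RDP-dist} the noise $\sigma_iR_i$ yields $(\alpha,\alpha\rho^2/(2\log_2(2T)))$-RDP; summing via standard adaptive RDP composition over the $\le \log_2(2T)$ ancestors produces the claimed $(\alpha,\alpha\rho^2/2)$-RDP.

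The main obstacle is precisely the conditioning argument just sketched. Although $\delta_j$ for $j>j^*$ certainly depends on $z_{j^*}$ through the online learner's feedback loop, one must argue rigorously that fixing the past (noisy) sums $S_{1:i-1}$ determines the learner's state and hence forces $\delta_j=\delta_j'$ for every $j\ne j^*$, so that non-ancestor tree nodes truly contribute zero divergence. Once this ``conditioning-on-transcript'' framework is set up cleanly, the remaining ingredients---matching the per-node sensitivity to $\sigma_i$ and summing the RDP budgets along the root path of leaf $j^*$---are mechanical.
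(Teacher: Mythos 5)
Your proposal matches the paper's proof in essence: it also argues that the noisy node sums $S_i$ determine the transcript, conditions on the transcript to show non-ancestor nodes contribute zero conditional divergence for a fixed differing index $j^*$, and composes the RDP budget only over the $\le\log_2(2T)$ ancestors—precisely what the paper formalizes via Assumption~\ref{as:data-dependence} and Theorem~\ref{thm:renyiaggregation}, with Theorem~\ref{thm:main-privacy} then identifying $F_t=\sum_{i\in S_t}\delta_i$, $\hat f_t=F_t+\sigma_t\tilde R_t$, and $g_t+\gamma_t=\sum_{i\in I_t}\hat f_i$. The one small divergence is in the per-node sensitivity: you bound $\|\delta_t-\delta_t'\|_*$ directly via a two-way split into a Lipschitz term and a smoothness term, whereas the paper bounds $\|\delta_t\|_*$ and $\|\delta_t'\|_*$ separately by Lemma~\ref{lem:variance-expectation-bound} and takes the triangle inequality; both give $2(k+1)(G+HM_t)t^{k-1}$, exactly matching the $\sigma_t$ of~\eqref{eq:RDP-noise}.
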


\paragraph{Main Result.} Now we can combine all the previous results to prove the privacy and convergence guarantee of our algorithm.

\begin{theorem}
\label{thm:DP-OTB-general-norm}
Suppose Assumption \ref{as:sc-norm} - \ref{as:sigma-H} hold, and $\D$ is a $(V,\alpha)$-RDP distribution. If we set $\weight_t=t^k$ and define $\sigma_t^2$ as in \eqref{eq:RDP-noise}, then Algorithm \ref{alg:DP-OTB} is $(\alpha,\alpha\rho^2/2)$-RDP and $\E[\L(x_T) - \L(x^*)]$ is bounded by:
\begin{align*}
    \frac{(k+1)\E[\reg_T(x^*)]}{T^{k+1}}
    + \frac{2(k+1)^2D}{\sqrt{\lambda}}\left(\frac{\sigma_G+D\sigma_H}{\sqrt{T}}+\frac{\sqrt{2V}(G+DH)\log_2(2T)}{\rho T}\right).
\end{align*}
Moreover, recall that the online learner receives $t$-th loss $\ell_t(w)=\langle g_t+\gamma_t,w\rangle$. It holds that
\begin{align*}
    \E[\|g_t+\gamma_t\|_*^2] \leq 3t^{2k}\left(G^2 + \frac{4(k+1)^2}{\lambda}\left(\frac{(\sigma_G+D\sigma_H)^2}{t}+\frac{2V(G+DH)^2\log_2^2(2T)}{\rho^2t^2}\right) \right).
\end{align*}
\end{theorem}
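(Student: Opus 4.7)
The privacy portion is immediate: Theorem \ref{thm:main-privacy} already establishes that Algorithm \ref{alg:DP-OTB} is $(\alpha,\alpha\rho^2/2)$-RDP under the stated noise scaling, so no new work is required there. The real task is the convergence bound and the second-moment control on $g_t+\gamma_t$, and both reduce to assembling pieces that are already available in the excerpt.

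For the convergence bound, I would start from the decomposition in \eqref{eq:basic-decomposition-2}, take expectations on both sides, and control the two error sums term by term using Jensen's inequality $\E[\|X\|_*]\le \sqrt{\E[\|X\|_*^2]}$. Substituting the variance bound \eqref{eq:variance-norm} gives
\begin{align*}
\E[\|\weight_t\nabla\L(x_t)-g_t\|_*] \le \frac{2(k+1)\sqrt{\sigma_G^2+D^2\sigma_H^2}}{\sqrt\lambda}\, t^{k-1/2} \le \frac{2(k+1)(\sigma_G+D\sigma_H)}{\sqrt\lambda}\, t^{k-1/2},
\end{align*}
and substituting \eqref{eq:gamma-bound} analogously gives $\E[\|\gamma_t\|_*]\le \frac{2\sqrt{2V}(k+1)(G+DH)\log_2(2T)}{\sqrt\lambda\, \rho}\, t^{k-1}$. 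The elementary inequality $\sqrt{a^2+b^2}\le a+b$ for $a,b\ge 0$ is what lets me collapse the mixed radical into the clean form that appears in the theorem statement.

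Next I would sum over $t\in[T]$ using the crude bounds $\sum_{t=1}^T t^{k-1/2}\le T^{k+1/2}$ and $\sum_{t=1}^T t^{k-1}\le T^k$, which are tight enough for our purposes. Dividing through by $\weight_{1:T}=\sum_{t=1}^T t^k\ge T^{k+1}/(k+1)$ then converts these into $(k+1)/\sqrt{T}$ and $(k+1)/T$ rates respectively, and multiplying by the factor $D$ from \eqref{eq:basic-decomposition-2} produces exactly the two privacy/variance terms in the stated bound. The regret term similarly becomes $(k+1)\E[\reg_T(x^*)]/T^{k+1}$ after dividing by $\weight_{1:T}$. No step here is particularly delicate; the only thing to keep track of is the careful use of $\weight_{1:T}\ge T^{k+1}/(k+1)$ versus the loose upper bound $\sum t^{k-1/2}\le T\cdot T^{k-1/2}$.

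For the bound on $\E[\|g_t+\gamma_t\|_*^2]$, I would write $g_t+\gamma_t = \weight_t\nabla\L(x_t) + (g_t-\weight_t\nabla\L(x_t)) + \gamma_t$ and apply $\|a+b+c\|_*^2\le 3(\|a\|_*^2+\|b\|_*^2+\|c\|_*^2)$. The middle term has $\|\weight_t\nabla\L(x_t)\|_*\le t^k G$ by Assumption \ref{as:Lipschitz}, and the other two terms are controlled by \eqref{eq:variance-norm} and \eqref{eq:gamma-bound} directly. Factoring out $3t^{2k}$ and applying $\sigma_G^2+D^2\sigma_H^2\le (\sigma_G+D\sigma_H)^2$ yields the stated expression. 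There is no real obstacle in this theorem---it is essentially an assembly of already-proved ingredients---so I expect the entire proof to be a short bookkeeping exercise; the only potential pitfall is keeping the factors of $(k+1)$, $\lambda$, and $V$ in exact agreement with the target formula.
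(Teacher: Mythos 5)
Your proposal is correct and follows essentially the same route as the paper's proof: Jensen's inequality applied to \eqref{eq:variance-norm} and \eqref{eq:gamma-bound}, the relaxation $\sqrt{\sigma_G^2+D^2\sigma_H^2}\le\sigma_G+D\sigma_H$, the sums $\sum_{t\le T}t^{k-1/2}\le T^{k+1/2}$ and $\sum_{t\le T}t^{k-1}\le T^{k}$, division by $\weight_{1:T}\ge T^{k+1}/(k+1)$, and the three-term $\|a+b+c\|_*^2\le 3(\|a\|_*^2+\|b\|_*^2+\|c\|_*^2)$ split for the second moment. All the constant factors you flagged as the ``only potential pitfall'' do in fact work out to the stated values.
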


\begin{remark}
\label{rmk:base-result}
As an example, let's consider the Gaussian distribution $\N(0,I)$ on the 2-norm, which is a $(d,\alpha)$-RDP distribution for all $\alpha>1$ (Remark \ref{rmk:Gaussian-RDP-dist}). For many popular online learners (OSD, OMD, FTRL), if $\E[\|\nabla\ell_t(w_t)\|_*^2]\leq \hat G^2$ for all $t$, then $\E[\reg_T(x^*)] \leq O(D\hat{G}\sqrt{T})$. Hence, Theorem \ref{thm:DP-OTB-general-norm} with $\D=\N(0,I)$ implies that
\begin{align*}
    \E[\L(x_T)-\L(x^*)] = O\left(\frac{D(G+D\sigma_H)}{\sqrt{T}} + \frac{\sqrt{d}D(G+DH)\log T}{\rho T}\right).
\end{align*}
This bound is of $\tilde O(1/\sqrt{T}+\sqrt{d}/\rho T)$, which can be translated  to an equivalent $(\epsilon,\delta)-$DP bound of $\tilde O(1/\sqrt{T} + \sqrt{d\log(1/\delta)}/\epsilon T)$. This bound matches the optimal rate for private stochastic optimization with convex and smooth losses (\cite{bassily2019private}).
\end{remark}

\begin{proof}[Proof of Theorem \ref{thm:DP-OTB-general-norm}]
From Eq. \eqref{eq:basic-decomposition-2}, we have:
\begin{align*}
    & \E[\L(x_T)-\L(x^*)] 
    \leq \frac{1}{\weight_{1:T}} \E\left[ \reg_T(x^*) + D\sum_{t=1}^T (\|\weight_t\nabla\L(x_t)-g_t\|_* + \|\gamma_t\|_*) \right] \\
    \intertext{Recall the bounds of $\E[\|\weight_t\nabla\L(x_t)-g_t\|_*^2]$ and $\E[\|\gamma_t\|_*^2]$ in \eqref{eq:variance-norm} and \eqref{eq:gamma-bound}. By Jensen's inequality, $\E[\|X\|_*]\leq \sqrt{\E[\|X\|_*^2]}$. Moreover, since $\weight_t=t^k$ and $k\geq 1$, it holds that $\weight_{1:T} \geq T^{k+1}/k+1$, so:}
    &\leq \frac{\E[\reg]}{\weight_{1:T}}  + \frac{D}{\weight_{1:T}}\sum_{t=1}^T \frac{2(k+1)(\sigma_G+D\sigma_H) t^{k-\frac{1}{2}}}{\sqrt{\lambda}} + \frac{\sqrt{8V}(k+1)(G+DH)\log_2(2T) t^{k-1}}{\sqrt{\lambda}\rho}\\
    &\leq \frac{(k+1)\E[\reg_T(x^*)]}{T^{k+1}} + \frac{2(k+1)^2D}{\sqrt{\lambda}}\left(\frac{\sigma_G+D\sigma_H}{\sqrt{T}}+\frac{\sqrt{2V}(G+DH)\log_2(2T)}{\rho T}\right).
\end{align*}
For the second part of the theorem, 
\begin{align*}
    &\E[\|g_t+\gamma_t\|_*^2]
    \leq 3\E[\|\weight_t\nabla\L(x_t)\|_*^2 + \|g_t-\weight_t\nabla\L(x_t)\|_*^2 + \|\gamma_t\|_*^2] \\
    \intertext{We bound the first term by Lipschitzness, the second by \eqref{eq:variance-norm}, and the third by \eqref{eq:gamma-bound}}
    &\leq 3t^{2k}G^2 + \frac{12(k+1)^2(\sigma_G^2+D^2\sigma_H^2)t^{2k-1}}{\lambda}
    + \frac{24V(k+1)^2(G+DH)^2\log_2^2(2T)t^{2k-2}}{\lambda \rho^2} \\
    &= 3t^{2k}\left(G^2 + \frac{4(k+1)^2}{\lambda}\left(\frac{(\sigma_G+D\sigma_H)^2}{t}+\frac{2V(G+DH)^2\log_2^2(2T)}{\rho^2t^2}\right) \right).
\end{align*}
\end{proof}

\section{The Optimistic Case}
\label{sec:optimistic}

In this section, we show that choosing an optimistic online learner \citep{chiang2012online, rakhlin2013online, steinhardt2014eg} will accelerate our DP online-to-batch algorithm. Optimistic algorithms  are provided with additional ``hints'' in the form of $\hat \ell_t(w)=\langle \hat g_t, w\rangle$ as an approximation of the true loss $\ell_t(w)=\langle g_t,w\rangle$, and they can incorporate $\hat \ell_t$ to decide $w_t$. The regret of an optimistic algorithm depends on the quality of hints: if $\hat{g}_t\approx g_t$, then it achieves a low regret.  Formally, in this paper, we say an online learning algorithm is \textit{optimistic w.r.t. norm $\|\cdot\|$} if its regret is the following:
\begin{equation}
    \reg_T(x^*) \leq O\left( D\sqrt{\sum_{t=1}^T \| \hat{g}_t - g_t\|_*^2 } \right), \label{eq:optimistic-OL}
\end{equation}
where $D$ denotes the diameter of the learner's domain.

A common choice of the hint $\hat{g}_t$ is $g_{t-1}$, the gradient in the last round since intuitively, one could expect $g_{t-1}\approx g_t$ when the loss functions are smooth. In this section, we also follow this choice. Recall that in Algorithm \ref{alg:DP-OTB}, the online learner receives $t$-th loss $\ell_t(w) = \langle g_t+\gamma_t, w\rangle$, where $g_t=\sum_{i=1}^t\delta_i$ is the sum of gradient differences, and $\gamma_t$ is some noise. Therefore, we define the $t$-th hint as $\hat{g}_t = g_{t-1}+\gamma_{t-1}$.

\begin{restatable}{theorem}{ThmOptimistic}
\label{thm:optimistic}
Suppose Assumption \ref{as:sc-norm} - \ref{as:smooth} hold, and $\D$ is a $(V,\alpha)$-RDP distribution. Set $\weight_t=t^k$ and $\sigma_t^2$ as defined in \eqref{eq:RDP-noise},
If the online learner is optimistic (satisfying \eqref{eq:optimistic-OL}) with $t$-th gradient $\bar{g}_t=g_t+\gamma_t$ and $t$-th hint $\hat{g}_t=\bar{g}_{t-1}$, then
\begin{align*}
    \frac{\E[\reg_T(x^*)]}{\weight_{1:T}} 
    \leq O\left( D(G+DH)\left(1+\frac{\sqrt{V}\log T}{\sqrt{\lambda}\rho}\right)\frac{1}{T^{3/2}} \right).
\end{align*}
\end{restatable}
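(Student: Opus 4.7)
The plan is to invoke the optimistic regret guarantee and reduce the claim to estimating $\sum_t \|\bar{g}_t - \bar{g}_{t-1}\|_*^2$, where $\bar g_t = g_t + \gamma_t$ denotes the loss vector delivered to the learner. Instantiating \eqref{eq:optimistic-OL} with the hint $\hat g_t = \bar g_{t-1}$ gives
\begin{align*}
\reg_T(x^*) \leq O\left(D\sqrt{\sum_{t=1}^T \|\bar g_t - \bar g_{t-1}\|_*^2}\right).
\end{align*}
Writing $\bar g_t - \bar g_{t-1} = \delta_t + (\gamma_t - \gamma_{t-1})$ and using $(a+b)^2 \leq 2a^2 + 2b^2$, it suffices to control a deterministic sum $\sum_t \|\delta_t\|_*^2$ and a stochastic sum $\sum_t \E[\|\gamma_t - \gamma_{t-1}\|_*^2]$. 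Jensen's inequality will then pull the expectation inside the square root, and dividing by $\weight_{1:T} \geq T^{k+1}/(k+1)$ will yield the stated rate.

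To bound $\|\delta_t\|_*$, I would decompose
\begin{align*}
\delta_t = (\weight_t - \weight_{t-1})\nabla\ell(x_t, z_t) + \weight_{t-1}\bigl(\nabla\ell(x_t, z_t) - \nabla\ell(x_{t-1}, z_t)\bigr).
\end{align*}
Lipschitzness controls the first piece by $(\weight_t - \weight_{t-1}) G$; smoothness controls the second by $\weight_{t-1} H \|x_t - x_{t-1}\|$; and the anytime identity $x_t - x_{t-1} = \weight_t(w_t - x_{t-1})/\weight_{1:t}$ yields $\|x_t - x_{t-1}\| \leq \weight_t D/\weight_{1:t}$. For $\weight_t = t^k$ with $k \geq 1$, we have $\weight_t - \weight_{t-1} \leq k t^{k-1}$ and $\weight_t/\weight_{1:t} \leq (k+1)/t$, so $\|\delta_t\|_* \leq (k+1)(G + DH) t^{k-1}$. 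Squaring and summing then gives $\sum_t \|\delta_t\|_*^2 \leq O((k+1)^2 (G+DH)^2 T^{2k-1})$.

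For the noise sum, I would exploit the binary-counter structure of \textsc{Noise}: both $\gamma_t$ and $\gamma_{t-1}$ are signed sums over the index sets $I_t$ and $I_{t-1}$, and their difference is again a signed sum of at most $O(\log t)$ of the underlying independent mean-zero noises $R_i$, each with $\E[\|R_i\|_*^2] \leq V \sigma_i^2$. Applying the martingale-in-norm bound of Lemma \ref{lem:martingale-bound-general-norm} in the same manner as used to derive \eqref{eq:privacy-norm} gives $\E[\|\gamma_t - \gamma_{t-1}\|_*^2] \leq O(\log(t) V \max_{i \leq t} \sigma_i^2 / \lambda)$. Substituting $\sigma_t^2$ from \eqref{eq:RDP-noise} and summing over $t$ yields $\sum_t \E[\|\gamma_t - \gamma_{t-1}\|_*^2] \leq O(V(G+DH)^2 \log^2(T) T^{2k-1}/(\lambda \rho^2))$. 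Combining with the gradient-difference bound and dividing by $\weight_{1:T}$ produces the claimed $O(D(G+DH)(1 + \sqrt{V}\log T/(\sqrt{\lambda}\rho))/T^{3/2})$ rate.

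The main obstacle is this noise step: the tree aggregation was engineered so that $\gamma_t$ itself stays small, but $\gamma_t - \gamma_{t-1}$ is a distinct quantity, and some care is required to verify from the binary representation of $t$ that the symmetric difference $I_t \triangle I_{t-1}$ indexes only $O(\log t)$ fresh, independent noise samples, so that the martingale bound carries over with just a logarithmic overhead rather than accumulating across the entire history.
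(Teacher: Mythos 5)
Your proof is correct and follows the paper's overall route: invoke the optimistic regret guarantee \eqref{eq:optimistic-OL}, bound $\|\delta_t\|_*\leq (k+1)(G+DH)t^{k-1}$ via Lipschitzness, smoothness, and the stability identity $\|x_t-x_{t-1}\|\leq \weight_tD/\weight_{1:t}$, control the noise contribution using the martingale bound, pull the expectation inside the square root by Jensen, and divide by $\weight_{1:T}\geq T^{k+1}/(k+1)$. The one place you diverge is the noise term. You bound $\E[\|\gamma_t-\gamma_{t-1}\|_*^2]$ directly by arguing that $\gamma_t-\gamma_{t-1}$ is a signed combination of the noises indexed by $I_t\triangle I_{t-1}$ and applying Lemma \ref{lem:martingale-bound-general-norm}. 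This is valid, and the ``care'' you worry about at the end is actually immediate: $|I_t\triangle I_{t-1}|\leq |I_t|+|I_{t-1}|\leq 2\log_2(2t)$ since each $I_s$ has at most $\lfloor\log_2 s\rfloor+1$ elements, each $R_i$ appears with coefficient $\pm 1$, and the $R_i$ are independent and mean-zero so the martingale bound carries over with no modification. The paper avoids this altogether by using the cruder inequality $\|\delta_t+\gamma_t-\gamma_{t-1}\|_*^2\leq 3\|\delta_t\|_*^2+3\|\gamma_t\|_*^2+3\|\gamma_{t-1}\|_*^2$ and then simply re-using the already-established bound \eqref{eq:gamma-bound} on $\E[\|\gamma_t\|_*^2]$, so the obstacle you flag never arises. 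Both routes give the same $T^{k-1/2}$ order; yours is marginally tighter in constants at the cost of one additional combinatorial observation, and the paper's is shorter because it recycles an existing estimate.
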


The proof is in Appendix \ref{app:optimistic}.
As an immediate corollary, if we further assume Assumption \ref{as:sigma-G} and \ref{as:sigma-H}, then Theorem \ref{thm:DP-OTB-general-norm} applies. Together with this theorem, they imply that optimistic learners achieve the following convergence rate that is adaptive to the variance:
\begin{align*}
    \E[\L(x_T)-\L(x^*)] = O\left( \frac{D(\sigma_G+D\sigma_H)}{\sqrt{T}} + \frac{\sqrt{d}D(G+DH)\log T}{\rho T} \right).
\end{align*}
Compared to the bound in the non-optimistic case (Remark \ref{rmk:base-result}), this bound has $\sigma_G$ instead of $G$ in the first term. Thus, when the gradient $\nabla\ell(x_t,z_t)$ has low variance, i.e., $\sigma_G\ll G$, the optimistic bound outperforms the standard bound.
\section{The Strongly Convex Case}
\label{sec:strongly-convex}

In this section, we prove that in the case of strong convexity, our algorithm can be improved by regularizing the loss of online learner.
If $\L$ is strongly convex, then we can prove a similar result to Theorem \ref{lem:basic-lemma} (the proof is in Appendix \ref{app:strongly-convex}).

\begin{restatable}{lemma}{LemmaSCBasic}
\label{lem:strongly-convex-regularized-loss}
Suppose $\L$ is $\mu$-strongly convex w.r.t. $\|\cdot\|$. If we replace $\ell_t(w)=\langle g_t+\gamma_t,w\rangle$ with $\bar \ell_t(w)=\ell_t(w)+\frac{\weight_t\mu}{4}\|w-x_t\|^2$ in Algorithm \ref{alg:DP-OTB}, and denote the associated regret by $\overline \reg_T$, then
\begin{align*}
    \weight_{1:T} (\L(x_T) - \L(x^*))
    &\leq \overline\reg _T(x^*) +\sum_{t=1}^T \langle \weight_t\nabla\L(x_t)-g_t-\gamma_t, w_t-x^*\rangle - \frac{\weight_t\mu}{8}\|w_t-x^*\|^2 \\
    &\leq \overline\reg_T(x^*) + \sum_{t=1}^T \frac{2\|\weight_t\nabla\L(x_t)-g_t-\gamma_t\|_*^2}{\weight_t \mu}.
\end{align*}
\end{restatable}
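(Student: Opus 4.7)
My plan is to follow the same template as the proof of Theorem \ref{lem:basic-lemma}, but replacing convexity with $\mu$-strong convexity and accounting for the extra quadratic penalty in the regularized loss $\bar\ell_t$. The first step is a ``telescoping plus convexity'' manipulation: using $\weight_{1:t}x_t = \weight_{1:t-1}x_{t-1}+\weight_t w_t$, one writes $\weight_{1:T}\L(x_T) - \sum_t \weight_t \L(x_t) = \sum_t \weight_{1:t-1}(\L(x_t)-\L(x_{t-1}))$, and then the (plain) convex upper bound on $\L(x_t)-\L(x_{t-1})$ turns this into $\sum_t \weight_t\langle \nabla\L(x_t), w_t-x_t\rangle$. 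Combining this with the $\mu$-strongly convex lower bound $\L(u)\ge \L(x_t)+\langle \nabla\L(x_t),u-x_t\rangle+\frac{\mu}{2}\|u-x_t\|^2$ (summed with weights $\weight_t$) yields
\begin{align*}
\weight_{1:T}(\L(x_T)-\L(u)) \le \sum_{t=1}^T \weight_t\langle \nabla\L(x_t), w_t-u\rangle - \sum_{t=1}^T \frac{\weight_t\mu}{2}\|x_t-u\|^2.
\end{align*}

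The second step is to introduce the regularized regret. By definition,
\begin{align*}
\overline\reg_T(u) = \sum_{t=1}^T \langle g_t+\gamma_t, w_t-u\rangle + \sum_{t=1}^T \frac{\weight_t\mu}{4}\bigl(\|w_t-x_t\|^2 - \|u-x_t\|^2\bigr),
\end{align*}
so adding and subtracting $\langle g_t+\gamma_t, w_t-u\rangle$ inside the bound of step~1 gives the error term $\sum_t\langle \weight_t\nabla\L(x_t)-g_t-\gamma_t, w_t-u\rangle$ plus leftover quadratic terms $-\frac{\weight_t\mu}{4}\|w_t-x_t\|^2$ and $\bigl(\frac{\weight_t\mu}{4}-\frac{\weight_t\mu}{2}\bigr)\|x_t-u\|^2 = -\frac{\weight_t\mu}{4}\|x_t-u\|^2$. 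The main trick now is the triangle inequality $\|w_t-u\|^2 \le 2\|w_t-x_t\|^2+2\|x_t-u\|^2$, which allows us to absorb both quadratic terms into $-\frac{\weight_t\mu}{8}\|w_t-u\|^2$, producing exactly the first inequality of the lemma.

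For the second inequality, I apply Fenchel--Young per coordinate $t$ in the usual form $\langle a,b\rangle \le \frac{\|a\|_*^2}{2c}+\frac{c}{2}\|b\|^2$ with $c = \weight_t\mu/4$, so that $\langle a,b\rangle - \frac{\weight_t\mu}{8}\|b\|^2 \le \frac{2\|a\|_*^2}{\weight_t\mu}$; summing over $t$ with $a = \weight_t\nabla\L(x_t)-g_t-\gamma_t$ and $b = w_t-u$ yields the stated bound.

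The only place that requires attention is the second step, where one must carefully track which quadratic pieces come from strong convexity versus from the $\frac{\weight_t\mu}{4}\|w-x_t\|^2$ regularizer and check that the coefficients line up so the triangle-inequality absorption produces the clean $-\frac{\weight_t\mu}{8}\|w_t-u\|^2$ term; everything else is routine. Note that no strong convexity of $\|\cdot\|^2$ (Assumption \ref{as:sc-norm}) is needed for this lemma, only strong convexity of $\L$.
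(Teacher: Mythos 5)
Your proof is correct and takes essentially the same route as the paper. The paper first establishes a strongly-convex analogue of Theorem~\ref{lem:basic-lemma} (Lemma~\ref{lem:basic-lemma-sc}) and then substitutes the regularized-regret identity into it; your write-up merges these two steps into one pass, but the ingredients—applying strong convexity of $\L$ at each $x_t$, telescoping via the Abel-summation/convexity trick to get $\weight_t\langle\nabla\L(x_t),w_t-x_t\rangle$, re-introducing $\langle g_t+\gamma_t,w_t-u\rangle$ against $\overline\reg_T$ so that both residual quadratics carry coefficient $\tfrac{\weight_t\mu}{4}$, absorbing them into $-\tfrac{\weight_t\mu}{8}\|w_t-u\|^2$ via $\|a+b\|^2\le 2\|a\|^2+2\|b\|^2$, and closing with Fenchel--Young (or equivalently $\sup_x ax-bx^2=a^2/4b$)—are identical. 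Your observation that Assumption~\ref{as:sc-norm} is not needed for this lemma is also correct and consistent with the paper's hypotheses.
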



Compared to Lemma \ref{lem:basic-lemma} and Equation \ref{eq:basic-decomposition-2}, in the strongly convex case, there is an additional term $-\frac{\weight_t\mu}{8}\|w_t-x^*\|^2$, which allows the improved convergence rate:

\begin{theorem}
\label{thm:strongly-convex}
Suppose Assumption \ref{as:sc-norm} - \ref{as:sigma-H} hold, and $\D$ is a $(V,\alpha)$-RDP distribution. Also suppose $\L$ is $\mu$-strongly convex. Set $\weight_t=t^k$ and $\sigma_t^2$ as defined in \eqref{eq:RDP-noise}. Then $\E[\L(x_T) - \L(x^*)]$ is bounded by:
\begin{align*}
    \frac{(k+1)\E[\overline\reg_T(x^*)]}{T^{k+1}} 
    + \frac{16(k+1)^3}{\lambda \mu} \left( \frac{(\sigma_G+D\sigma_H)^2}{T} + \frac{2V(G+DH)^2\log_2^2(2T)}{\rho^2T^2} \right).
\end{align*}
Moreover, for all $\bar g_t\in \partial \bar \ell_t(w_t)$,
\begin{align*}
    \E[\|\bar g_t\|_*^2] \leq t^{2k}\left( 4G^2 + \mu^2D^2 + \frac{16(k+1)^2}{\lambda}\left(\frac{(\sigma_G+D\sigma_H)^2}{t} + \frac{2V(G+DH)^2\log_2^2(2T)}{\rho^2t^2}\right) \right).
\end{align*}
\end{theorem}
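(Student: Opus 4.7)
The plan is to combine Lemma~\ref{lem:strongly-convex-regularized-loss} with the variance bounds \eqref{eq:variance-norm} and \eqref{eq:gamma-bound} in essentially the same way that the proof of Theorem~\ref{thm:DP-OTB-general-norm} combines Theorem~\ref{lem:basic-lemma} with those bounds; the key gain over the general-convex case is that strong convexity turns the first-order cross term $\langle\weight_t\nabla\L(x_t)-g_t-\gamma_t,w_t-x^*\rangle$ into a squared-error term $\|\weight_t\nabla\L(x_t)-g_t-\gamma_t\|_*^2/(\weight_t\mu)$ that shrinks faster once integrated against the schedule $\weight_t=t^k$.

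Starting from Lemma~\ref{lem:strongly-convex-regularized-loss} and taking expectations, I have
\begin{equation*}
  \weight_{1:T}\,\E[\L(x_T)-\L(x^*)] \le \E[\overline\reg_T(x^*)] + \sum_{t=1}^T \frac{2\,\E[\|\weight_t\nabla\L(x_t)-g_t-\gamma_t\|_*^2]}{\weight_t\mu}.
\end{equation*}
I then apply $\|a+b\|_*^2\le 2\|a\|_*^2+2\|b\|_*^2$ to split each error term into the intrinsic estimation variance $\E[\|\weight_t\nabla\L(x_t)-g_t\|_*^2]$, controlled by \eqref{eq:variance-norm}, and the noise contribution $\E[\|\gamma_t\|_*^2]$, controlled by \eqref{eq:gamma-bound}. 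Using $\sigma_G^2+D^2\sigma_H^2 \le (\sigma_G+D\sigma_H)^2$, each summand is at most
\begin{equation*}
  \tfrac{16(k+1)^2}{\lambda\mu}\Bigl((\sigma_G+D\sigma_H)^2\, t^{k-1} + \tfrac{2V(G+DH)^2\log_2^2(2T)}{\rho^2}\, t^{k-2}\Bigr).
\end{equation*}
Summing with $\sum_{t=1}^T t^{k-1}\le T^k$ and $\sum_{t=1}^T t^{k-2}\le T^{k-1}$ (the $k=1$ case contributes a logarithm that is absorbed into the existing $\log_2^2(2T)$ factor) and dividing by $\weight_{1:T}\ge T^{k+1}/(k+1)$ recovers the claimed $1/T$ and $1/T^2$ rates with combined constant $16(k+1)^3/(\lambda\mu)$.

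For the bound on $\E[\|\bar g_t\|_*^2]$, write $\bar g_t = g_t+\gamma_t + \tfrac{\weight_t\mu}{4}q_t$ for some $q_t \in \partial\|w_t-x_t\|^2$. A standard first-order argument valid for any norm (compare $\|x+h\|^2\le(\|x\|+\|h\|)^2$ with the subgradient inequality and let $\|h\|\to 0$) gives $\|q_t\|_* \le 2\|w_t-x_t\|\le 2D$, so the regularizer piece has dual norm at most $t^k\mu D/2$. Rather than the 2-way split used for the convergence bound, I decompose $\bar g_t$ into four parts --- $\weight_t\nabla\L(x_t)$, $g_t-\weight_t\nabla\L(x_t)$, $\gamma_t$, and the regularizer subgradient --- and apply $(a+b+c+d)^2\le 4(a^2+b^2+c^2+d^2)$. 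Bounding the first by $t^kG$ via Lipschitzness, the middle two by \eqref{eq:variance-norm} and \eqref{eq:gamma-bound}, and the last by $t^k\mu D/2$, collecting produces exactly the stated expression.

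The proof is largely mechanical; the only real choice is the 4-way rather than 2-way split in the gradient-norm bound, which is what gives the sharper constants $4G^2$ and $\mu^2 D^2$ in place of the looser $6G^2$ and $\mu^2 D^2/2$ that the 2-way split would deliver. The subgradient estimate $\|q_t\|_*\le 2\|w_t-x_t\|$ in a non-Euclidean norm is the only ingredient beyond the previously established machinery that requires any thought.
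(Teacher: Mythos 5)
Your proof is correct and, in both parts, follows essentially the same route as the paper's. Part one: start from Lemma~\ref{lem:strongly-convex-regularized-loss}, split the squared error with $\|a+b\|_*^2\le 2\|a\|_*^2+2\|b\|_*^2$, invoke \eqref{eq:variance-norm} and \eqref{eq:gamma-bound}, sum, and divide by $\weight_{1:T}\ge T^{k+1}/(k+1)$ — this is exactly the paper's computation. Part two: the four-way split of $\bar g_t$ into $\weight_t\nabla\L(x_t)$, $g_t-\weight_t\nabla\L(x_t)$, $\gamma_t$, and $\tfrac{\weight_t\mu}{4}q_t$, together with $\|q_t\|_*\le 2D$, is also exactly what the paper does (Proposition~\ref{prop:subdiff-chain-rule} gives $\|q_t\|_*\le 2D$ via a chain rule for subdifferentials; your first-order limiting argument is an equally valid elementary alternative).

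Two small points. First, your aside that the four-way split yields the \emph{sharper} constants $4G^2$ and $\mu^2D^2$ compared to a $2\times 3$ split's $6G^2$ and $\mu^2D^2/2$ has a sign error on the second term: $\mu^2D^2$ is the \emph{looser} coefficient ($1 > 1/2$). Neither split uniformly dominates; the four-way split is used because it reproduces exactly the stated bound, not because it is better termwise. Second, your remark about the $k=1$ case — that $\sum_{t=1}^T t^{-1}$ is a logarithm rather than $\le T^0$ — is actually a subtlety that the paper's own proof glosses over by writing $\sum t^{k-2}\le T^{k-1}$ without qualification; your version is slightly more careful there, though as you note the extra logarithm only affects the already logarithmic lower-order term.
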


\begin{remark}
As an example, again consider the case $\D=\N(0,I)$ with $L_2$ norm. Online subgradient descent (OSD) with appropriate learning rate on $\mu_t$-strongly convex losses $\ell_t$ achieves:
\begin{align*}
    \reg_T(u) \leq \sum_{t=1}^T \frac{\|g_t\|_2^2}{2\sum_{i=1}^t\mu_i},
\end{align*}
where $g_t\in \partial\ell_t(w_t)$. In our case, $\|\cdot\|_2^2$ is $2$-strongly convex and the regularized loss $\bar \ell_t$ is $\frac{\weight_t\mu}{2}$-strongly convex, i.e. $\mu_t=\mu t^k/2$ and $\sum_{i=1}^t \mu_i = O(\mu t^{k+1})$. Therefore, by Theorem \ref{thm:strongly-convex},
\begin{align*}
    \E[\overline\reg_T(x^*)] 
    &\leq O\left( \frac{T^k}{\mu}\left( G^2+\mu^2D^2 + \frac{(\sigma_G+D\sigma_H)^2}{T} + \frac{d(G+DH)^2\log^2T}{\rho^2T^2}  \right) \right).
\end{align*}
Consequently,
\begin{align*}
    \E[\L(x_T)-\L(x^*)]
    \leq O\left(  \frac{(G+\mu D+D\sigma_H)^2}{\mu T} + \frac{d(G+DH)^2\log^2T}{\mu\rho^2T^2} \right).
\end{align*}
This again matches the optimal private convergence rates.
\end{remark}

\begin{proof}[Proof of Theorem \ref{thm:strongly-convex}]
We have already bounded $\E[\|\weight_t\nabla\L(x_t)-g_t\|_*^2]$ and $\E[\|\gamma_t\|_*^2]$ in \eqref{eq:variance-norm} and \eqref{eq:gamma-bound} respectively, so
\begin{align*}
    \E[\|\weight_t\nabla\L(x_t)-g_t-\gamma_t\|_*^2]
    &\leq 2\E[\|\weight_t\nabla\L(x_t)-g_t\|_*^2+\|\gamma_t\|_*^2] \\
    &\leq \frac{8(k+1)^2t^{2k-1}}{\lambda}\left((\sigma_G+D\sigma_H)^2+ \frac{2V(G+DH)^2\log_2^2(2T)}{\rho^2 t}\right).
\end{align*}
Upon substituting this into Lemma \ref{lem:strongly-convex-regularized-loss} and replace $\weight_t=t^k$, we get:
\begin{align*}
    & \weight_{1:T} \E[\L(x_T)-\L(x^*)] \\
    \leq & \E[\overline\reg_T(x^*)] + \sum_{t=1}^T \frac{16(k+1)^2t^{k-1}}{\lambda\mu}\left((\sigma_G+D\sigma_H)^2+ \frac{2V(G+DH)^2\log_2^2(2T)}{\rho^2 t}\right) \\
    \leq & \E[\overline\reg_T(x^*)] + \frac{16(k+1)^2}{\lambda \mu} \left( (\sigma_G+D\sigma_H)^2T^k + \frac{2V(G+DH)^2\log_2^2(2T)T^{k-1}}{\rho^2} \right).
\end{align*}
Dividing both sides by $\weight_{1:T} \geq T^{k+1}/(k+1)$ proves the first part of the theorem.

For the second part, recall that $\bar \ell_t(w)=\langle g_t+\gamma_t,w\rangle + \frac{\weight_t\mu}{4}\|w-x_t\|^2$. Therefore, for all $\bar g_t\in\partial \bar\ell_t(w)$, $\bar g_t = g_t+\gamma_t + \frac{\beta_t\mu}{4}v$, where $v\in \partial\|w_t-x_t\|^2$.
We follow the same argument in Theorem \ref{thm:DP-OTB-general-norm}: 
\begin{align*}
    \E[\|\bar g_t\|_*^2]
    &\leq 4\E[\|\weight_t\nabla\L(x_t)\|_*^2 + \|\weight_t\nabla\L(x_t)-g_t\|_*^2 + \|\gamma_t\|_*^2 + \|\tfrac{\weight_t\mu}{4} v\|_*^2] \\
    \intertext{Here $v\in\partial\|w_t-x_t\|^2$. We bound the first term by Lipschitz, the second by \eqref{eq:variance-norm}, and the third by \eqref{eq:gamma-bound}. Moreover, by chain rule (Proposition \ref{prop:subdiff-chain-rule}), we can show that $\|v\|_* \leq 2D$, so:}
    &\leq 4t^{2k}G^2 + \frac{16(k+1)^2(\sigma_G+D\sigma_H)^2t^{2k-1}}{\lambda} \\
    &\quad \ + \frac{32V(k+1)^2(G+DH)^2t^{2k-2}\log_2^2(2T)}{\lambda\rho^2} + \mu^2D^2t^{2k} \\
    &\leq t^{2k}\left( 4G^2 + \mu^2D^2 + \frac{16(k+1)^2}{\lambda}\left(\frac{(\sigma_G+D\sigma_H)^2}{t} + \frac{2V(G+DH)^2\log_2^2(2T)}{\rho^2t^2}\right) \right).
\end{align*}
\end{proof}

\section{Parameter-free Algorithm}
\label{sec:parameter-free}

In this section, we apply Algorithm~\ref{alg:DP-OTB} with a ``parameter-free'' online learner. These are algorithms that guarantee $\reg_T(u)\le \tilde O(\|u\|\sqrt{T})$ for all competitors $u$ simultaneously \citep{orabona2016coin, cutkosky2018black, mhammedi2020lipschitz}. By shifting coordinates, it is possible to obtain $\reg_T(u)\le \tilde O(\|u-x_0\|\sqrt{T})$ for any pre-specified point $x_0$. Thus, if $x_0\approx u$ is some good initialization, perhaps generated by pretraining, this bound yields significantly smaller regret than if we had used the worst-case diameter bound $\|u-x_0\|\le D$.

In order to obtain this refined bound with  privacy, we need to make a small modification to our conversion. For simplicity, we focus on Euclidean space with 2-norm, and we assume the distribution $\D$ is in addition sub-Gaussian, i.e., for $R\sim \D$,
\begin{align*}
    \P\{\sup_{\|a\|_2=1} \langle R, a\rangle \geq \epsilon \} \leq \exp\left(-\frac{\epsilon^2}{2\sigma^2}\right).
\end{align*}
In general, the proof extends to any Banach space and any distribution $\D$ that concentrates on it.

In previous analysis (Equation \eqref{eq:basic-decomposition-2}), we roughly bound $\|w_t-x^*\|\leq D$. However, in this section, we come up with a finer high probability bound that maintains a dependence on $\|w_t\|, \|x^*\|$. We then replace the loss $\ell_t(w)$ in Algorithm \ref{alg:DP-OTB} with a regularized loss $\ell_t(w)+\xi_t\|w\|_2+\nu_t\|w\|_2^2$, and we show that the new algorithm with regularized loss can achieve a parameter-free bound. The complete proof is presented in Appendix \ref{app:parameter-free}.

\begin{restatable}{theorem}{ThmParameterFreeA}
\label{thm:parameter-free-2-norm-step-1}
Suppose w.r.t. 2-norm, $W$ is bounded by $D$ and $\ell$ is $G$-Lipschitz and $H$-smooth. Suppose $\D$ is $(V,\alpha)$-RDP distribution and is $\sigma_\D$-sub-Gaussian. If we set $\weight_t=t^3$ (i.e. $k=3$) and set $\sigma_t^2$ as defined in \eqref{eq:RDP-noise}, then with probability at least $1-\delta$,
\begin{align*}
    \L(x_T)-\L(x^*) \leq \frac{4}{T^4}\left( \reg_T(x^*) + \sum_{t=1}^T \xi_t(\|w_t\|_2+\|x^*\|_2) + \nu_t(\|w_t\|_2^2+\|x^*\|_2^2) \right).
\end{align*}
where $C$ is a universal constant, $A=8\sqrt{2}C^2, A'=8\sqrt{d}\sigma_\D C^2, \kappa=1+DH/G$, 
and
\begin{align*}
    &\xi_t = AG\Phi t^{5/2} + A'(G+DH)\frac{\Phi\log_2(2T)t^2}{\rho} , \
    \nu_t = 28AH\Phi t^{5/2}, \ 
    \Phi = \sqrt{\log\frac{20dT\log(2\kappa T)}{\delta}}.
\end{align*}
\end{restatable}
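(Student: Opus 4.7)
The plan is to apply Theorem \ref{lem:basic-lemma} to obtain the decomposition
\[
    \weight_{1:T}(\L(x_T)-\L(x^*)) \le \reg_T(x^*) + \sum_{t=1}^T \langle e_t, w_t - x^*\rangle,\qquad e_t := \weight_t\nabla\L(x_t) - g_t - \gamma_t,
\]
and then to bound each inner product pointwise with high probability by $\xi_t(\|w_t\|_2 + \|x^*\|_2) + \nu_t(\|w_t\|_2^2 + \|x^*\|_2^2)$. Since $\weight_{1:T} = \sum_{t=1}^T t^3 \ge T^4/4$ for $k=3$, the $4/T^4$ normalization is immediate. Decomposing $e_t = -\Delta_t^{(G)} - \gamma_t$ with $\Delta_t^{(G)} := g_t - \weight_t\nabla\L(x_t) = \sum_{i\le t}M_i$, the increments form a Hilbert-space martingale with $\E[M_i \mid \F_{i-1}, x_i] = 0$, and smoothness of $\ell$ together with $\|x_i - x_{i-1}\|_2 \le (k+1)\|w_i - x_{i-1}\|_2/i$ gives $\|M_i\|_2 \le O(k\, i^{k-1}(G + H\|w_i - x_{i-1}\|_2))$.

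The first major step is a Hilbert-space martingale concentration (for example, Pinelis' inequality for 2-smooth Banach spaces). Because the quadratic variation $\sum_i \|M_i\|_2^2$ depends on the random trajectory through $\|w_i - x_{i-1}\|_2$, I would peel on a geometric grid of $O(\log \kappa)$ buckets for the value $G + H\max_{i\le t}\|w_i - x_{i-1}\|_2 \in [G, G + DH]$; this is exactly what produces the $\log(2\kappa T)$ factor inside $\Phi$. Conditional on a bucket, Pinelis yields with probability $1 - \delta/(4T\log \kappa)$
\[
    \|\Delta_t^{(G)}\|_2 \le C\Phi\Bigl(Gt^{k-1/2} + H\sqrt{\textstyle\sum_{i\le t}i^{2(k-1)}\|w_i - x_{i-1}\|_2^2}\Bigr).
\]
Separately, since $\gamma_t = \sum_{i\in I_t}R_i$ is a sum of at most $\log_2(2T)$ independent $\sigma_\D\sigma_i$-sub-Gaussian vectors in $\R^d$, the standard sub-Gaussian tail for $\|\cdot\|_2$ (with a $\sqrt{d}$ factor) together with the explicit $\sigma_i$ from \eqref{eq:RDP-noise} yields, with probability $1 - \delta/(4T)$,
\[
    \|\gamma_t\|_2 \le C'\sqrt{d}\,\sigma_\D\,\Phi\,(G+DH)\log_2(2T)\,t^{k-1}/\rho.
\]

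The second major step turns these $\|\cdot\|_2$ bounds into the required $\xi_t,\nu_t$ form through Cauchy-Schwarz $|\langle e_t, w_t - x^*\rangle| \le \|e_t\|_2(\|w_t\|_2 + \|x^*\|_2)$. The $G$-part of the martingale bound and the $\|\gamma_t\|_2$ bound contribute directly to the linear term $\xi_t = AG\Phi t^{5/2} + A'(G+DH)\Phi\log_2(2T)\,t^2/\rho$. For the $H$-part, I would first use $\|w_i - x_{i-1}\|_2 \le 2\max_{j\le i}\|w_j\|_2$ (since $x_{i-1}$ is a convex combination of $w_1,\dots,w_{i-1}$), upper-bounding the square-root term by $2t^{k-1/2}\max_{j \le t}\|w_j\|_2$, and then apply Young's inequality with a $t$-dependent splitting parameter $\alpha_t$: choosing $\alpha_t$ so that the quadratic-in-norms residual lines up as $\nu_t(\|w_t\|^2 + \|x^*\|^2)$ with $\nu_t \propto H\Phi t^{5/2}$, and using $(\max_{j \le t}\|w_j\|_2)^2 \le \sum_{j \le t}\|w_j\|_2^2$ to reindex the leftover max-norm contribution into the same per-round $\nu_t$ tower (at the cost of a constant factor, reflected in the $28$ of $\nu_t = 28AH\Phi t^{5/2}$).

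Finally, a union bound across $t \in [T]$ and across the $O(\log \kappa)$ peeling buckets gives total failure probability at most $\delta$, and dividing by $\weight_{1:T} \ge T^4/4$ yields the claim. The main obstacle is the third step: the AM-GM must be executed so that the quadratic residual lands at the per-round scale $\nu_t \sim t^{5/2}$ rather than aggregating to a uniform-in-$T$ penalty across rounds, and getting both the correct $t$-dependence of $\nu_t$ and the final constants to fall out of cascading Young inequalities is the bulk of the technical work.
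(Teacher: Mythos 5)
Your high-level plan matches the paper's: start from Theorem~\ref{lem:basic-lemma}, Cauchy-Schwarz each inner product, concentrate the martingale $\weight_t\nabla\L(x_t)-g_t=\sum_i X_i$ and the tree-aggregation noise $\gamma_t$ with high probability (the paper uses the norm-sub-Gaussian concentration of Jin et al.~rather than Pinelis/peeling, but this is an implementation detail; the $\log(2\kappa T)$ inside $\Phi$ arises in the paper from the $\log\log(B/b)$ term with $B/b\le(\kappa T)^5$, which plays the role your peeling would), and then convert to the $\xi_t,\nu_t$ form via Young's inequality and a change of summation. Up to that last step, your sketch is sound.

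The gap is in your handling of the $H$-part. You propose to first replace $\|w_i-x_{i-1}\|_2$ by $2\max_{j\le i}\|w_j\|_2$ inside the quadratic variation, so the square root becomes $2t^{5/2}\max_{j\le t}\|w_j\|_2$, and then apply Young's with $\alpha_t\propto H\Phi t^{5/2}$ and reindex via $(\max_{j\le t}\|w_j\|_2)^2\le\sum_{j\le t}\|w_j\|_2^2$, claiming the reindexing costs only a constant. That claim is false. After your Young's step the leftover term scales as $\sum_t H\Phi\,t^{5/2}(\max_{j\le t}\|w_j\|_2)^2\le H\Phi\sum_j\|w_j\|_2^2\sum_{t\ge j}t^{5/2}$, and $\sum_{t\ge j}^T t^{5/2}\sim T^{7/2}$ uniformly in $j$, not $j^{5/2}$. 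So the coefficient of $\|w_j\|_2^2$ you end up with is of order $H\Phi T^{7/2}$ for \emph{every} $j$, which is larger than $\nu_j=28AH\Phi j^{5/2}$ by a factor as bad as $T^{7/2}$. If you trace this through to Theorem~\ref{thm:parameter-free-main}, the term $\sum_t\nu_t\|x^*\|_2^2/T^4$ would diverge like $\sqrt{T}$ instead of decaying like $1/\sqrt{T}$.

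The fix---and what the paper actually does---is to \emph{not} bound $\|w_i-x_{i-1}\|_2$ by a max. You must keep the $i$-weighted sum $\sum_{i\le t}i^4\|w_i-x_{i-1}\|_2^2$ intact, apply Young's first to get $\frac{1}{2\lambda_t}\sum_{i\le t}i^4\|w_i-x_{i-1}\|_2^2 + \frac{\lambda_t}{2}\|w_t-x^*\|_2^2$ with $\lambda_t\propto t^{5/2}$, expand $\|w_i-x_{i-1}\|_2^2\le 2\|w_i\|_2^2+2\|x_{i-1}\|_2^2$ and the convex-combination bound $\|x_{i-1}\|_2^2\le\sum_{j<i}\frac{\weight_j}{\weight_{1:i-1}}\|w_j\|_2^2$, and only \emph{then} swap the order of summation. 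Because $1/\lambda_t\propto t^{-5/2}$ is summable, the tail sum $\sum_{t\ge i}1/\lambda_t\sim i^{-3/2}$ survives the swap, and the net coefficient of $\|w_i\|_2^2$ becomes $i^4\cdot i^{-3/2}=i^{5/2}$, exactly the $\nu_i$ you need. Your max-bound collapses the $i^4$ structure too early, after which no choice of $\alpha_t$ can recover it.
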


\begin{restatable}{theorem}{ThmParameterFree}
\label{thm:parameter-free-main}
Following the assumptions and notations in Theorem \ref{thm:parameter-free-2-norm-step-1}, if we replace $\ell_t(w)$ in Algorithm \ref{alg:DP-OTB} with regularized loss $\bar \ell_t(w) = \ell_t(w) + \xi_t\|w\|_2 + \nu_t\|w\|_2^2$ and denote the associated regret as $\overline \reg_t$, then with probability at least $1-\delta$, $\L(x_T)-\L(x^*)$ is bounded by:
\begin{align*}
    \frac{4\overline \reg_T(x^*)}{T^4} + \frac{8A\|x^*\|(G+28\|x^*\|H)\Phi}{\sqrt{T}} + \frac{8A'\|x^*\|(G+DH)\Phi\log_2(2T)}{\rho T}.
\end{align*}
Moreover, with probability at least $1-\delta$, for all $t$ and for all $w\in W, \bar g_t \in \partial \bar \ell_t(w)$,
\begin{align*}
    \|\bar g_t\|_2 \leq Gt^3 + A(2G+57DH)\Phi t^{5/2} + 2A'(G+DH)\frac{\Phi\log_2(2T)t^2}{\rho}.
\end{align*}
\end{restatable}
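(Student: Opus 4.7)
The plan is to invoke Theorem~\ref{thm:parameter-free-2-norm-step-1} as a black box and exploit a clean cancellation between the added regularizers and the $\|w_t\|$-dependent terms on the right-hand side. The key observation is that Theorem~\ref{thm:parameter-free-2-norm-step-1} bounds $\L(x_T)-\L(x^*)$ in terms of $\reg_T(x^*)$, the regret against the \emph{linear} losses $\langle g_t+\gamma_t,\cdot\rangle$, whereas the modified algorithm feeds the learner the regularized losses $\bar\ell_t$. These two regrets are related by
\begin{align*}
\reg_T(x^*) = \overline\reg_T(x^*) - \sum_{t=1}^T \xi_t\bigl(\|w_t\|_2-\|x^*\|_2\bigr) - \sum_{t=1}^T \nu_t\bigl(\|w_t\|_2^2-\|x^*\|_2^2\bigr).
\end{align*}
Substituting into Theorem~\ref{thm:parameter-free-2-norm-step-1}, the $\|w_t\|_2$- and $\|w_t\|_2^2$-terms cancel, the $\|x^*\|_2$-terms double, and we obtain
\begin{align*}
\L(x_T)-\L(x^*) \leq \frac{4}{T^4}\Bigl(\overline\reg_T(x^*) + 2\|x^*\|_2\sum_{t=1}^T \xi_t + 2\|x^*\|_2^2\sum_{t=1}^T \nu_t\Bigr).
\end{align*}

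Next, I would plug in the explicit formulas for $\xi_t$ and $\nu_t$ and use the crude sum bounds $\sum_{t=1}^T t^{5/2}\le T^{7/2}$ and $\sum_{t=1}^T t^2\le T^3$. After multiplying by $4/T^4$, the $AG\Phi$-piece of $\sum \xi_t$ contributes $8AG\|x^*\|_2\Phi/\sqrt T$, the $A'$-piece contributes $8A'\|x^*\|_2(G+DH)\Phi\log_2(2T)/(\rho T)$, and $\sum\nu_t$ contributes $224AH\|x^*\|_2^2\Phi/\sqrt T$. Combining the first and third pieces into the factor $G+28\|x^*\|_2H$ yields the first claimed bound.

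For the subgradient bound, I differentiate $\bar\ell_t$: any $\bar g_t\in\partial\bar\ell_t(w_t)$ has the form $(g_t+\gamma_t)+\xi_t v+2\nu_t w_t$ for some $v$ with $\|v\|_2\le 1$, so $\|\bar g_t\|_2 \le \|g_t+\gamma_t\|_2+\xi_t+2\nu_t D$ (using $\|w_t\|_2\le D$). I then split $\|g_t+\gamma_t\|_2$ via the triangle inequality into $\|\weight_t\nabla\L(x_t)\|_2 \le Gt^3$ (by Lipschitzness) plus high-probability bounds $\|g_t-\weight_t\nabla\L(x_t)\|_2\le A(G+DH)\Phi t^{5/2}$ and $\|\gamma_t\|_2\le A'(G+DH)\Phi\log_2(2T)t^2/\rho$. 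Summing everything, the $t^{5/2}$-coefficient becomes $A[(G+DH)+G+56DH]=A(2G+57DH)$ and the $t^2$-coefficient becomes $2A'(G+DH)\log_2(2T)/\rho$, matching the target. The hard part is supplying the two high-probability bounds with exactly the constants $A(G+DH)$ and $A'(G+DH)$; these are not visible in the statement of Theorem~\ref{thm:parameter-free-2-norm-step-1} but must be extracted from its proof, via a martingale/sub-Gaussian concentration argument applied to $g_t=\sum_i\delta_i$ and to the tree-aggregation noise $\gamma_t$, combined with a union bound over $t\in[T]$ that accounts for the $\log T$ inside $\Phi$.
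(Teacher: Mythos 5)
Your proposal follows essentially the same route as the paper's proof: rewrite $\overline\reg_T$ in terms of $\reg_T$ and the regularizers, substitute into Theorem~\ref{thm:parameter-free-2-norm-step-1} so the $\|w_t\|$-terms cancel and the $\|x^*\|$-terms double, bound $\sum_t\xi_t$ and $\sum_t\nu_t$ by their $T$-scaled top terms, and for the subgradient decompose $\bar g_t=(g_t+\gamma_t)+\xi_t v+2\nu_t w_t$ and split $g_t+\gamma_t$ into $\weight_t\nabla\L(x_t)$, $g_t-\weight_t\nabla\L(x_t)$, and $\gamma_t$. The two high-probability bounds you flagged as needing extraction are indeed exactly Eq.~\eqref{eq:variance-concentration} and \eqref{eq:privacy-concentration} inside the proof of Theorem~\ref{thm:parameter-free-2-norm-step-1}, with precisely the constants $A(G+DH)$ and $A'(G+DH)$ you anticipated, so your tally $A(2G+57DH)$ and $2A'(G+DH)$ matches the paper.
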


\begin{remark}
If $\|\bar g_t\|_2 \leq \hat G$ for all $t$, parameter-free algorithms achieve regret bound $\reg_T(u) = \tilde O(\|u\|_2\hat G\sqrt{T})$. Therefore, with $\D=\N(0,I)$, $\D$ is $1$-sub-Gaussian and $A'=O(\sqrt{d})$, so Theorem \ref{thm:parameter-free-main} implies that with probability $1-\delta$,
\begin{align*}
    \frac{\overline\reg_T(x^*)}{T^4} = \tilde O\left( \frac{\|x^*\|_2G}{\sqrt{T}} + \frac{\|x^*\|_2(G+DH)}{T} + \frac{\|x^*\|_2\sqrt{d}(G+DH)}{\rho T^{3/2}} \right).
\end{align*}
Consequently,
\begin{align*}
    \L(x_T) - \L(x^*) \leq \tilde O\left( \frac{\|x^*\|_2(G+DH)}{\sqrt{T}} + \frac{\|x^*\|_2\sqrt{d}(G+DH)}{\rho T} \right).
\end{align*}
\end{remark}

\begin{proof}[Proof of Theorem \ref{thm:parameter-free-main}]
The regularized regret satisfies
\begin{align*}
    \overline \reg_T(x^*) 
    &= \reg_T(x^*) + \sum_{t=1}^T \xi_t(\|w_t\|_2-\|x^*\|_2)+\nu_t(\|w_t\|_2^2-\|x^*\|_2^2).
\end{align*}
Hence, upon substituting this equation into Theorem \ref{thm:parameter-free-2-norm-step-1}, we get: with probability at least $1-\delta$,
\begin{align*}
    &\L(x_T)-\L(x^*) 
    \leq \frac{4\overline\reg_T(x^*)}{T^4} + \frac{8}{T^4} \sum_{t=1}^T \xi_t\|x^*\|_2 + \nu_t\|x^*\|_2^2 \\
    &\leq \frac{4\reg_T(x^*)}{T^4} + \frac{8(AG\Phi\|x^*\|_2+28AH\Phi\|x^*\|_2^2)}{\sqrt{T}} + \frac{8A'(G+DH)\Phi\log_2(2T)\|x^*\|}{\rho T}.
\end{align*}
The second inequality is from $\sum_{t=1}^T \xi_t \leq T\xi_T$ because $\xi_t$ is increasing with $t$ (so is $\nu_t$). 

For the second part of the theorem, for each fixed $t$ and for all $\bar g_t\in \bar\ell_t(w_t)$, $\bar g_t = g_t+\gamma_t + \xi_t u+ 2\nu_t w_t$, where $u\in \partial \|w_t\|_2$ and thus $\|u\|_2 \leq 1$. Therefore,
\begin{align*}
    \|\bar g_t\|_2
    &\leq \| g_t - \weight_t\nabla\L(x_t)\|_2 + \|\weight_t\nabla\L(x_t)\|_2 + \|\gamma_t\|_2 + \|\xi_t u\|_2 + \|2\nu_tw \|_2
\end{align*}
By Lipschitzness, $\|\weight_t\nabla\L(x_t)\|_2 \leq Gt^3$. Since $W$ is bounded, $\|2\nu_tw_t\|_2\leq 2D\nu_t$. Also, $\|\xi_tu\|_2\leq \xi_t$. Moreover, we can prove (in Eq. \ref{eq:variance-concentration} and \ref{eq:privacy-concentration}) that for each $t$, with probability at least $1-\delta/2T$,
\begin{align*}
    & \|\weight_t\nabla\L(x_t) - g_t\|_2
    \leq 8C^2\Phi \sqrt{\sum_{i=1}^t i^4(G+H\|w_i-x_{i-1}\|_2)^2} 
    \leq A\Phi (G+DH) t^{5/2}, \\
    & \|\gamma_t\|_2 \leq \frac{A'}{\rho}(G+DH)\Phi \log_2(2T)t^2.
\end{align*}
Upon taking the union bound for all $t$ and  the definition of $\xi_t,\nu_t$, we get the desired bound.
\end{proof}
\section{Conclusion}\label{sec:conclusion}
We have presented a new online-to-batch conversion that produces private stochastic optimization algorithms on smooth losses. Online algorithms achieving the optimal $O(\sqrt{T})$ regret automatically achieve the optimal $\tilde O(1/\sqrt{T} + \sqrt{d}/\epsilon T)$ convergence rate. Combining this technique with the literature on online learning can yield new private optimization algorithms. 

\textbf{Limitations:} Our algorithm requires smoothness, and unlike some other bounds, we cannot tolerate large $H$. In the worst case when $H=\sqrt{T}$ and $\sigma_H=H$, our standard bound in Remark \ref{rmk:base-result} becomes $O(1)$. In other words, we need to assume $H=o(\sqrt{T})$ to ensure a non-trivial bound. Removing this restriction would significantly improve the generality of the procedure,

The dependence on $H$ comes from the sensitivity of $\delta_t$ (Lemma \ref{lem:variance-expectation-bound}), where we apply smoothness to bound $\|\weight_t(\nabla\ell(x_t,z_t)-\nabla\ell(x_{t-1},z_t))\|\leq \beta_t H \|x_t-x_{t-1}\|$ and use the stability of $x_t$ to further bound $\|x_t-x_{t-1}\| \leq D\beta_t/\beta_{1:t}$, which are necessary steps in order to bound the sensitivity of $\delta_t$ by $O(t^{k-1})$. Hence, it's not clear how to remove the smoothness assumption.

\bibliographystyle{apalike}
\bibliography{references.bib}

\begin{thebibliography}{}

\bibitem[Abadi et~al., 2016]{abadi2016deep}
Abadi, M., Chu, A., Goodfellow, I., McMahan, H.~B., Mironov, I., Talwar, K.,
  and Zhang, L. (2016).
\newblock Deep learning with differential privacy.
\newblock In {\em Proceedings of the 2016 ACM SIGSAC conference on computer and
  communications security}, pages 308--318.

\bibitem[Asi et~al., 2021]{asi2021private}
Asi, H., Duchi, J., Fallah, A., Javidbakht, O., and Talwar, K. (2021).
\newblock Private adaptive gradient methods for convex optimization.
\newblock In {\em International Conference on Machine Learning}, pages
  383--392. PMLR.

\bibitem[Bassily et~al., 2020]{bassily2020stability}
Bassily, R., Feldman, V., Guzm{\'a}n, C., and Talwar, K. (2020).
\newblock Stability of stochastic gradient descent on nonsmooth convex losses.
\newblock {\em Advances in Neural Information Processing Systems},
  33:4381--4391.

\bibitem[Bassily et~al., 2019]{bassily2019private}
Bassily, R., Feldman, V., Talwar, K., and Thakurta, A. (2019).
\newblock Private stochastic convex optimization with optimal rates.
\newblock In {\em Proceedings of the 33rd International Conference on Neural
  Information Processing Systems}, pages 11282--11291.

\bibitem[Bauschke et~al., 2011]{bauschke2011convex}
Bauschke, H.~H., Combettes, P.~L., et~al. (2011).
\newblock {\em Convex analysis and monotone operator theory in Hilbert spaces},
  volume 408.
\newblock Springer.

\bibitem[Cesa-Bianchi et~al., 2004]{cesa2004generalization}
Cesa-Bianchi, N., Conconi, A., and Gentile, C. (2004).
\newblock On the generalization ability of on-line learning algorithms.
\newblock {\em Information Theory, IEEE Transactions on}, 50(9):2050--2057.

\bibitem[Chan et~al., 2011]{chan2011private}
Chan, T.-H.~H., Shi, E., and Song, D. (2011).
\newblock Private and continual release of statistics.
\newblock {\em ACM Transactions on Information and System Security (TISSEC)},
  14(3):1--24.

\bibitem[Chaudhuri et~al., 2011]{chaudhuri2011differentially}
Chaudhuri, K., Monteleoni, C., and Sarwate, A.~D. (2011).
\newblock Differentially private empirical risk minimization.
\newblock {\em Journal of Machine Learning Research}, 12(3).

\bibitem[Chiang et~al., 2012]{chiang2012online}
Chiang, C.-K., Yang, T., Lee, C.-J., Mahdavi, M., Lu, C.-J., Jin, R., and Zhu,
  S. (2012).
\newblock Online optimization with gradual variations.
\newblock In {\em Conference on Learning Theory}, pages 6--1.

\bibitem[Cutkosky, 2019]{cutkosky2019anytime}
Cutkosky, A. (2019).
\newblock Anytime online-to-batch, optimism and acceleration.
\newblock In {\em International Conference on Machine Learning}, pages
  1446--1454.

\bibitem[Cutkosky and Orabona, 2018]{cutkosky2018black}
Cutkosky, A. and Orabona, F. (2018).
\newblock Black-box reductions for parameter-free online learning in banach
  spaces.
\newblock In {\em Conference On Learning Theory}, pages 1493--1529.

\bibitem[Dwork et~al., 2006]{dwork2006calibrating}
Dwork, C., McSherry, F., Nissim, K., and Smith, A. (2006).
\newblock Calibrating noise to sensitivity in private data analysis.
\newblock In {\em Theory of cryptography conference}, pages 265--284. Springer.

\bibitem[Dwork et~al., 2010]{dwork2010differential}
Dwork, C., Naor, M., Pitassi, T., and Rothblum, G.~N. (2010).
\newblock Differential privacy under continual observation.
\newblock In {\em Proceedings of the forty-second ACM symposium on Theory of
  computing}, pages 715--724.

\bibitem[Dwork and Roth, 2014]{dwork2014algorithmic}
Dwork, C. and Roth, A. (2014).
\newblock The algorithmic foundations of differential privacy.
\newblock {\em Found. Trends Theor. Comput. Sci.}, 9(3-4):211--407.

\bibitem[Feldman et~al., 2020]{feldman2020private}
Feldman, V., Koren, T., and Talwar, K. (2020).
\newblock Private stochastic convex optimization: optimal rates in linear time.
\newblock In {\em Proceedings of the 52nd Annual ACM SIGACT Symposium on Theory
  of Computing}, pages 439--449.

\bibitem[Hazan, 2019]{hazan2019introduction}
Hazan, E. (2019).
\newblock Introduction to online convex optimization.
\newblock {\em arXiv preprint arXiv:1909.05207}.

\bibitem[Hoory et~al., 2021]{hoory2021learning}
Hoory, S., Feder, A., Tendler, A., Erell, S., Peled-Cohen, A., Laish, I.,
  Nakhost, H., Stemmer, U., Benjamini, A., Hassidim, A., et~al. (2021).
\newblock Learning and evaluating a differentially private pre-trained language
  model.
\newblock In {\em Findings of the Association for Computational Linguistics:
  EMNLP 2021}, pages 1178--1189.

\bibitem[Jin et~al., 2019]{jin2019short}
Jin, C., Netrapalli, P., Ge, R., Kakade, S.~M., and Jordan, M.~I. (2019).
\newblock A short note on concentration inequalities for random vectors with
  subgaussian norm.
\newblock {\em arXiv preprint arXiv:1902.03736}.

\bibitem[Joulani et~al., 2020]{joulani2020simpler}
Joulani, P., Raj, A., Gy{\"o}rgy, A., and Szepesv{\'a}ri, C. (2020).
\newblock A simpler approach to accelerated stochastic optimization: Iterative
  averaging meets optimism.
\newblock {\em Internation Conference on Machine Learning}.

\bibitem[Kamath et~al., 2021]{kamath2021improved}
Kamath, G., Liu, X., and Zhang, H. (2021).
\newblock Improved rates for differentially private stochastic convex
  optimization with heavy-tailed data.
\newblock {\em arXiv preprint arXiv:2106.01336}.

\bibitem[Kifer et~al., 2012]{kifer2012private}
Kifer, D., Smith, A., and Thakurta, A. (2012).
\newblock Private convex empirical risk minimization and high-dimensional
  regression.
\newblock In {\em Conference on Learning Theory}, pages 25--1. JMLR Workshop
  and Conference Proceedings.

\bibitem[Kulkarni et~al., 2021]{kulkarni2021private}
Kulkarni, J., Lee, Y.~T., and Liu, D. (2021).
\newblock Private non-smooth empirical risk minimization and stochastic convex
  optimization in subquadratic steps.
\newblock {\em arXiv preprint arXiv:2103.15352}.

\bibitem[Kurakin et~al., 2022]{kurakin2022toward}
Kurakin, A., Chien, S., Song, S., Geambasu, R., Terzis, A., and Thakurta, A.
  (2022).
\newblock Toward training at imagenet scale with differential privacy.
\newblock {\em arXiv preprint arXiv:2201.12328}.

\bibitem[Li et~al., 2022]{li2022private}
Li, T., Zaheer, M., Reddi, S.~J., and Smith, V. (2022).
\newblock Private adaptive optimization with side information.
\newblock {\em arXiv preprint arXiv:2202.05963}.

\bibitem[Mehta et~al., 2022]{mehta2022large}
Mehta, H., Thakurta, A., Kurakin, A., and Cutkosky, A. (2022).
\newblock Large scale transfer learning for differentially private image
  classification.
\newblock {\em arXiv preprint arXiv:2205.02973}.

\bibitem[Mhammedi and Koolen, 2020]{mhammedi2020lipschitz}
Mhammedi, Z. and Koolen, W.~M. (2020).
\newblock Lipschitz and comparator-norm adaptivity in online learning.
\newblock {\em Conference on Learning Theory}.

\bibitem[Mironov, 2017]{mironov2017renyi}
Mironov, I. (2017).
\newblock R{\'e}nyi differential privacy.
\newblock In {\em 2017 IEEE 30th Computer Security Foundations Symposium
  (CSF)}, pages 263--275. IEEE.

\bibitem[Orabona, 2019]{orabona2019modern}
Orabona, F. (2019).
\newblock A modern introduction to online learning.
\newblock {\em arXiv preprint arXiv:1912.13213}.

\bibitem[Orabona and P{\'a}l, 2016]{orabona2016coin}
Orabona, F. and P{\'a}l, D. (2016).
\newblock Coin betting and parameter-free online learning.
\newblock In Lee, D.~D., Sugiyama, M., Luxburg, U.~V., Guyon, I., and Garnett,
  R., editors, {\em Advances in Neural Information Processing Systems 29},
  pages 577--585. Curran Associates, Inc.

\bibitem[Rakhlin and Sridharan, 2013]{rakhlin2013online}
Rakhlin, A. and Sridharan, K. (2013).
\newblock Online learning with predictable sequences.
\newblock In {\em Conference on Learning Theory (COLT)}, pages 993--1019.

\bibitem[Shalev-Shwartz, 2011]{shalev2011online}
Shalev-Shwartz, S. (2011).
\newblock Online learning and online convex optimization.
\newblock {\em Foundations and Trends in Machine Learning}, 4(2):107--194.

\bibitem[Steinhardt and Liang, 2014]{steinhardt2014eg}
Steinhardt, J. and Liang, P. (2014).
\newblock Adaptivity and optimism: An improved exponentiated gradient
  algorithm.
\newblock In {\em International Conference on Machine Learning (ICML)}.

\bibitem[Yu et~al., 2021]{yu2021differentially}
Yu, D., Naik, S., Backurs, A., Gopi, S., Inan, H.~A., Kamath, G., Kulkarni, J.,
  Lee, Y.~T., Manoel, A., Wutschitz, L., et~al. (2021).
\newblock Differentially private fine-tuning of language models.
\newblock {\em arXiv preprint arXiv:2110.06500}.

\end{thebibliography}

\section*{Checklist}


\begin{enumerate}

\item For all authors...
\begin{enumerate}
  \item Do the main claims made in the abstract and introduction accurately reflect the paper's contributions and scope?
    \answerYes{}
  \item Did you describe the limitations of your work?
    \answerYes{See discussion in Section \ref{sec:conclusion}.}
  \item Did you discuss any potential negative societal impacts of your work?
    \answerNo{This paper addresses mathematical problems, and we do not anticipate negative societal impact.}
  \item Have you read the ethics review guidelines and ensured that your paper conforms to them?
    \answerYes{}
\end{enumerate}

\item If you are including theoretical results...
\begin{enumerate}
  \item Did you state the full set of assumptions of all theoretical results?
    \answerYes{See Section \ref{sec:intro} for all assumptions.}
        \item Did you include complete proofs of all theoretical results?
    \answerYes{In the main text, we only leave the proofs for the most important results because of page limitation. However, the complete proofs of all results are included in the supplementary material.}
\end{enumerate}

\item If you ran experiments...
\begin{enumerate}
  \item Did you include the code, data, and instructions needed to reproduce the main experimental results (either in the supplemental material or as a URL)?
    \answerNo{We did not run experiments.}
  \item Did you specify all the training details (e.g., data splits, hyperparameters, how they were chosen)?
    \answerNo{We did not run experiments.}
        \item Did you report error bars (e.g., with respect to the random seed after running experiments multiple times)?
    \answerNo{We did not run experiments.}
        \item Did you include the total amount of compute and the type of resources used (e.g., type of GPUs, internal cluster, or cloud provider)?
    \answerNo{We did not run experiments.}
\end{enumerate}

\item If you are using existing assets (e.g., code, data, models) or curating/releasing new assets...
\begin{enumerate}
  \item If your work uses existing assets, did you cite the creators?
    \answerNo{We did not run experiments.}
  \item Did you mention the license of the assets?
    \answerNo{We did not run experiments.}
  \item Did you include any new assets either in the supplemental material or as a URL?
    \answerNo{We did not run experiments.}
  \item Did you discuss whether and how consent was obtained from people whose data you're using/curating?
    \answerNo{We did not run experiments.}
  \item Did you discuss whether the data you are using/curating contains personally identifiable information or offensive content?
    \answerNo{We did not run experiments.}
\end{enumerate}

\item If you used crowdsourcing or conducted research with human subjects...
\begin{enumerate}
  \item Did you include the full text of instructions given to participants and screenshots, if applicable?
    \answerNo{We did not use crowdsourcing nor conduct research with human subjects.}
  \item Did you describe any potential participant risks, with links to Institutional Review Board (IRB) approvals, if applicable?
    \answerNo{We did not use crowdsourcing nor conduct research with human subjects.}
  \item Did you include the estimated hourly wage paid to participants and the total amount spent on participant compensation?
    \answerNo{We did not use crowdsourcing nor conduct research with human subjects.}
\end{enumerate}

\end{enumerate}


\newpage

\appendix

\section{Proofs for Convergence (Section \ref{sec:OTB})}
\label{app:proof-A}

\basiclemma*

\begin{proof}
Since $\L$ is convex,
\begin{align*}
    \sum_{t=1}^T \weight_t ( \L(x_t) - \L(u) )
    &\leq \sum_{t=1}^T \weight_t \langle \nabla\L(x_t), x_t-w_t+w_t-u \rangle \\
    &= \sum_{t=1}^T \langle\nabla\L(x_t), \weight_t(x_t-w_t)\rangle + \langle \weight_t\nabla\L(x_t)-g_t+g_t, w_t-u\rangle.
\end{align*}
By construction of $x_t$, it holds that $\weight_t(x_t-w_t) = \weight_{1:t-1}(x_{t-1}-x_t)$. By convexity,
\begin{align*}
    \langle \nabla\L(x_t), \weight_{1:t-1}(x_{t-1}-x_t) \rangle \leq \weight_{1:t-1} \left( \L(x_{t-1}) - \L(x_t) \right).
\end{align*}
Next, we move $\sum \weight_t\L(x_t)$ to the right, giving:
\begin{align*}
    -\weight_{1:T} \L(u) 
    &\leq \sum_{t=1}^T \left( \weight_{1:t-1}\L(x_{t-1}) - \weight_{1:t}\L(x_t) + \langle \weight_t\nabla\L(x_t)-g_t+g_t,w_t-u\rangle \right) \\
    &= -\weight_{1:T}\L(x_T) + \reg_T(u) + \sum_{t=1}^T \langle \weight_t\nabla\L(x_T) - g_t,w_t-u\rangle,
\end{align*}
where the equality follows from (i) the telescopic sum of $\weight_{1:t-1}\nabla\L(x_{t-1})+\weight_{1:t}\nabla\L(x_t)$ and (ii) the definition of regret that $\reg_T(u)=\sum_{t=1}^T \langle g_t,w_t-u\rangle$.
\end{proof}

\begin{restatable}{lemma}{GeneralMartingaleBound}
\label{lem:martingale-bound-general-norm}
Suppose $\|\cdot\|^2$ is $\lambda$-strongly convex w.r.t. $\|\cdot\|$, and let $\{X_t\}$ be a sequence of random vectors such that (i) $\E[\|X_t\|_*]<\infty$ and (ii) $\E[X_{t+1}\mid X_{1:t}]=0$ for all $t$. Then,
\begin{align*}
    \E\left[ \left\| \sum_{t=1}^T X_t \right\|_*^2 \right]
    &\leq \frac{2}{\lambda}\sum_{t=1}^T\E[\|X_t\|_*^2].
\end{align*}
\end{restatable}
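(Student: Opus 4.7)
The plan is to exploit the duality between strong convexity of $\|\cdot\|^2$ and smoothness of $\|\cdot\|_*^2$, then use the martingale difference property to kill cross terms in a telescoping argument. Let $S_t = \sum_{i=1}^t X_i$ (so $S_0 = 0$) and let $\F_t$ denote the $\sigma$-algebra generated by $X_1, \ldots, X_t$.

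The first step is to establish a dual-smoothness inequality: I would show that there is a (measurable) selection $h:\RR^d \to \RR^d$ of $\partial(\|\cdot\|_*^2)$ such that for all $a, b \in \RR^d$,
\[
    \|a+b\|_*^2 \;\leq\; \|a\|_*^2 + \langle h(a), b\rangle + \frac{2}{\lambda}\|b\|_*^2.
\]
This follows from the standard fact that the Fenchel conjugate of a $\lambda$-strongly convex function is $(1/\lambda)$-smooth. A direct calculation yields $(\|\cdot\|^2)^*(y) = \|y\|_*^2/4$, so $\|\cdot\|_*^2/4$ is $(1/\lambda)$-smooth in $\|\cdot\|_*$; multiplying the resulting smoothness inequality through by $4$ gives the displayed bound with the claimed constant.

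Next, I apply this inequality with $a = S_{t-1}$ and $b = X_t$ and take conditional expectation given $\F_{t-1}$. Since $h(S_{t-1})$ is $\F_{t-1}$-measurable and $\E[X_t \mid \F_{t-1}] = 0$ by hypothesis (ii), the linear cross term vanishes in expectation, leaving
\[
    \E[\|S_t\|_*^2 \mid \F_{t-1}] \;\leq\; \|S_{t-1}\|_*^2 + \frac{2}{\lambda}\E[\|X_t\|_*^2 \mid \F_{t-1}].
\]
Taking total expectations gives $\E[\|S_t\|_*^2] - \E[\|S_{t-1}\|_*^2] \leq \tfrac{2}{\lambda}\E[\|X_t\|_*^2]$, and summing over $t = 1,\ldots,T$ telescopes to the desired bound.

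The main obstacle is the dual-smoothness step: pinning down the sharp constant $2/\lambda$ in a general (not necessarily smooth) norm, and ensuring that a measurable selection $h$ of $\partial(\|\cdot\|_*^2)$ exists so the conditioning argument is legitimate. Both are standard facts in convex analysis but require care in the non-Euclidean setting; once they are in hand, the rest of the proof is a routine martingale telescoping that mirrors the familiar Hilbert-space identity $\E[\|\sum X_t\|^2] = \sum \E[\|X_t\|^2]$, with the factor $2/\lambda$ accounting for the loss incurred by passing through the general norm.
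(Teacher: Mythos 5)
Your proof is correct, and it takes a genuinely different route from the paper's. The paper proves the lemma by an online-learning reduction: it runs FTRL with regularizer $\psi(x)=\|x\|^2$ against the linear losses $\ell_t(x)=\langle X_t,x\rangle$, uses the martingale property to kill $\E[\langle X_t,w_t\rangle]$, invokes the standard FTRL regret bound $\reg_T(u)\le \psi(u)/\eta + \tfrac{\eta}{2\lambda}\sum_t\|X_t\|_*^2$, picks the smart competitor $u^*=\|M_T\|_*\argmax_{\|u\|=1}\langle M_T,-u\rangle$, and then optimizes $\eta$ to obtain the factor $\inf_\eta \eta^2/(\eta-1)=4$. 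You instead go to the underlying convex duality directly: compute $(\|\cdot\|^2)^*=\tfrac14\|\cdot\|_*^2$, invoke strong convexity $\Leftrightarrow$ smoothness of the conjugate to get $\|a+b\|_*^2\le\|a\|_*^2+\langle h(a),b\rangle+\tfrac{2}{\lambda}\|b\|_*^2$, then kill the cross term by conditioning on $\F_{t-1}$ and telescope. In effect you have unwound the paper's argument one level (the FTRL regret bound it invokes is itself proved via exactly this conjugate-smoothness inequality), landing on the same constant $2/\lambda$ without the $\inf_\eta$ optimization. Your version is more elementary and self-contained; the paper's is a neat illustration of the ``regret trick'' that the authors apparently favor for uniformity with the rest of the paper. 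One small simplification you could make: since $\|\cdot\|^2$ is assumed strongly convex, its conjugate $\|\cdot\|_*^2/4$ is automatically continuously differentiable (strong convexity of a finite convex function forces differentiability of the everywhere-finite conjugate), so you do not need a measurable-selection argument for $h$ --- just take $h=\nabla(\|\cdot\|_*^2)$, which is continuous. Also worth flagging, though it afflicts the paper's proof equally: the stated hypothesis $\E[\|X_t\|_*]<\infty$ should really be $\E[\|X_t\|_*^2]<\infty$ for the conditional-expectation manipulations (Cauchy--Schwarz on the cross term) to be legitimate, and an induction on $\E[\|S_t\|_*^2]<\infty$ tidies that up.
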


\begin{proof}
We use the regret approach to prove this statement. For simplicity, denote $M_T=\sum_{t=1}^T X_t$. Consider an online learner which receives $\ell_t(x)=\langle X_t, x\rangle$ as $t$-th loss and updates $w_{t+1}$. Then by definition of regret, for any $u$,
\begin{align*}
    - \langle M_T, u\rangle \leq \reg_T(u) - \sum_{t=1}^T \langle X_t, w_t\rangle.
\end{align*}
Since $w_t$ only depends on $X_{1:t-1}$ but \textit{not} on $X_t$, $w_t$ is constant given $X_{1:t-1}$. Therefore,
\begin{align*}
    \E[\langle X_t, w_t\rangle]
    &= \E_{X_{1:t-1}}\E_{X_t}[\langle X_t, w_t\rangle | X_{1:t-1}] \\
    &= \E_{X_{1:t-1}} \left[\langle \E_{X_t}[X_t|X_{1:t-1}], w_t\rangle \right] = 0,
\end{align*}
where the second equality follows from the assumption that $\E[X_t|X_{1:t-1}]=0$. Therefore,
\begin{align*}
    \E[\langle M_T, -u\rangle] \leq \E[\reg_T(u)].
\end{align*}
Recall the definition of the dual norm that $\|M_T\|_* = \sup_{\|x\|=1} \langle M_T,x\rangle$. Therefore, if we define $u^*=\|M_T\|_*\argmax_{\|u\|=1}\langle M_T,-u\rangle$, then it holds that
\begin{align*}
    \langle M_T,-u^*\rangle = \|M_T\|_*^2 \quad\textit{and}\quad \|u^*\|=\|M_T\|_*.
\end{align*}
Let the follow-the-regularized-leader (FTRL) algorithm be the online learner. \cite{orabona2019modern} proved that for any regularizer $\psi$ that is $\lambda$-strongly convex w.r.t. $\|\cdot\|$,  FTRL achieves the following regret:
\begin{align*}
    \reg_T(u) \leq \frac{\psi(u)}{\eta} +\frac{\eta}{2\lambda} \sum_{t=1}^T\|X_t\|_*^2.
\end{align*}
Since we assume $\|\cdot\|^2$ is $\lambda$-strongly convex w.r.t. $\|\cdot\|$, we can define $\psi(x)=\|x\|^2$ and get:
\begin{align*}
    \E[\langle M_T, -u^*\rangle]
    = \E[\|M_T\|_*^2] 
    &\leq \E[\reg_T(u^*)] 
    \leq \E\left[ \frac{\|M_T\|_*^2}{\eta} + \frac{\eta}{2\lambda} \sum_{t=1}^T \|X_t\|_*^2 \right].
\end{align*}
Equivalently, upon moving terms around we have:
\begin{align*}
    \E[\|M_T\|_*^2]
    &\leq \E\left[ \frac{\eta^2}{2\lambda (\eta-1)} \sum_{t=1}^T\|X_t\|_*^2 \right] 
    \leq \E\left[\frac{2}{\lambda}\sum_{t=1}^T\|X_t\|_*^2\right].
\end{align*}
The second inequality holds because $\inf_\eta \frac{\eta^2}{\eta-1}=4$ when $\eta=2$.
\end{proof}

In this paper, we will always set $\weight_t=t^k$ for some $k\geq 1$. The following proposition gives relevant bounds for $\weight_t-\weight_{t-1}$ and $\weight_t^2/\weight_{1:t}$.

\begin{proposition}
\label{prop:alphas}
If $\weight_t=t^k$, then (i) $\weight_t-\weight_{t-1} \leq kt^{k-1}$ and (ii) $\weight_t^2/\weight_{1:t} \leq (k+1)t^{k-1}$.
\end{proposition}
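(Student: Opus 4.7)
The proposition is a clean calculus exercise, so the plan is to dispatch each part with a short, standard argument.

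For part (i), I would use convexity of the map $s\mapsto s^k$ on $[0,\infty)$ (which holds since $k\geq 1$). For a convex differentiable function $f$ we have $f(t)-f(t-1)\leq f'(t)\cdot 1$, and applied to $f(s)=s^k$ this immediately yields $t^k-(t-1)^k\leq kt^{k-1}$. Equivalently, one could invoke the mean value theorem to write $t^k-(t-1)^k=k\xi^{k-1}$ for some $\xi\in(t-1,t)$, and then use $\xi^{k-1}\leq t^{k-1}$ (valid because $k-1\geq 0$, so $s\mapsto s^{k-1}$ is nondecreasing on $[0,\infty)$).

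For part (ii), I would lower-bound $\weight_{1:t}=\sum_{i=1}^t i^k$ by an integral. Since $s\mapsto s^k$ is nondecreasing on $[0,\infty)$, each summand satisfies $i^k\geq \int_{i-1}^{i} s^k\,ds$, hence
\begin{align*}
\weight_{1:t}\;\geq\;\int_0^{t} s^k\,ds\;=\;\frac{t^{k+1}}{k+1}.
\end{align*}
Dividing $\weight_t^2=t^{2k}$ by this bound gives $\weight_t^2/\weight_{1:t}\leq (k+1)t^{2k-(k+1)}=(k+1)t^{k-1}$, which is exactly the claim.

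Neither step is really an obstacle — both are one-liners once the correct inequality (convexity in (i), integral comparison in (ii)) is identified. The only minor point worth being explicit about is that both arguments rely on $k\geq 1$, which is indeed the regime in which the proposition is used throughout the paper.
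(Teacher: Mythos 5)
Your proposal is correct and matches the paper's argument essentially line for line: part (i) via the mean value theorem (your convexity phrasing is equivalent), and part (ii) via the integral comparison $\weight_{1:t}\geq \int_0^t s^k\,ds = t^{k+1}/(k+1)$. Nothing to add.
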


\begin{proof}
For the first part, by mean value theorem, there exists some $\tau\in [t-1,t]$ such that
\begin{align*}
    \weight_t - \weight_{t-1} = t^k - (t-1)^k
    = k\tau^{k-1} \leq kt^{k-1}.
\end{align*}
For the second part, for any increasing function $f$, it holds that $\sum_{i=1}^t f(i) \geq \int_0^t f(x) \, dx$, so
\begin{align*}
    \weight_{1:t} = \sum_{i=1}^t i^k \geq \int_0^t x^k \, dx = \frac{t^{k+1}}{k+1}.
\end{align*}
Hence, $\weight_t^2/\weight_{1:t} \leq (k+1)t^{k-1}$.
\end{proof}

\begin{restatable}{lemma}{LemmaVarianceBound}
\label{lem:variance-expectation-bound}
Suppose $\|\cdot\|^2$ is $\lambda$-strongly convex, $W$ is bounded by $D$, and $\ell$ is $G$-Lipschitz and $H$-smooth. If we set $\weight_t=t^k$, then
\[
\|\delta_t\|_* \leq (k+1) (G+H\|w_t-x_{t-1}\|) t^{k-1}.
\]
If we further assume Assumption \ref{as:sigma-G} and \ref{as:sigma-H},
then $\weight_t\nabla\L(x_t)-g_t = \sum_{i=1}^tX_i$ such that:
\begin{align*}
(i)\ & X_i = [\weight_i\nabla\L(x_i)-\weight_{i-1}\nabla\L(x_{i-1})] - [\weight_i\nabla\ell(x_i,z_i)-\weight_{i-1}\nabla\ell(x_{i-1},z_i)], \\
(ii)\ & \E[X_i|z_{1:i-1}]=0, \quad \text{and} \quad
(iii)\ \E[\|X_i\|_*^2] 
\leq 2(k+1)^2(\sigma_G^2+D^2\sigma_H^2)i^{2k-2}.
\end{align*}
Consequently,
\[
\E[\|\weight_t\nabla\L(x_t)-g_t\|_*^2] \leq 4(k+1)^2(\sigma_G^2+D^2\sigma_H^2)t^{2k-1}/\lambda.
\]
\end{restatable}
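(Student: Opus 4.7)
The plan is to handle the three claims in sequence: first the pointwise bound on $\|\delta_t\|_*$, then the telescoping decomposition with zero-mean increments, and finally the norm bound via the martingale lemma.

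For the bound on $\|\delta_t\|_*$, I would split $\delta_t = \weight_t\nabla\ell(x_t,z_t) - \weight_{t-1}\nabla\ell(x_{t-1},z_t)$ as
\[
\delta_t = \weight_t\bigl(\nabla\ell(x_t,z_t) - \nabla\ell(x_{t-1},z_t)\bigr) + (\weight_t - \weight_{t-1})\nabla\ell(x_{t-1},z_t),
\]
apply $H$-smoothness and $G$-Lipschitzness to the two pieces, and then use the stability identity $x_t - x_{t-1} = (\weight_t/\weight_{1:t})(w_t - x_{t-1})$ to get $\|x_t - x_{t-1}\| = (\weight_t/\weight_{1:t})\|w_t - x_{t-1}\|$. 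Proposition \ref{prop:alphas} then converts $\weight_t^2/\weight_{1:t}$ and $\weight_t-\weight_{t-1}$ into constants times $t^{k-1}$, yielding the stated $(k+1)(G+H\|w_t-x_{t-1}\|)t^{k-1}$ bound.

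For the decomposition, since $\weight_0=0$ I have the deterministic telescope $\weight_t\nabla\L(x_t) = \sum_{i=1}^t \bigl[\weight_i\nabla\L(x_i) - \weight_{i-1}\nabla\L(x_{i-1})\bigr]$, and by construction $g_t = \sum_{i=1}^t \delta_i$, so subtracting term by term gives the required $X_i$. Conditional mean zero follows from the observation that $w_i$ (and hence $x_i$) is produced by the online learner after seeing the first $i-1$ losses, which depend only on $z_{1:i-1}$; thus $x_i$ and $x_{i-1}$ are $z_{1:i-1}$-measurable, while $z_i$ is independent of $z_{1:i-1}$, so $\E[\nabla\ell(x_i,z_i)\mid z_{1:i-1}]=\nabla\L(x_i)$ and similarly for $x_{i-1}$.

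The main obstacle, and the key algebraic step, is the second-moment bound on $X_i$. I would re-group the same $X_i$ as
\[
X_i = (\weight_i - \weight_{i-1})\bigl[\nabla\L(x_i) - \nabla\ell(x_i,z_i)\bigr] + \weight_{i-1}\bigl[(\nabla\L(x_i)-\nabla\L(x_{i-1})) - (\nabla\ell(x_i,z_i)-\nabla\ell(x_{i-1},z_i))\bigr],
\]
so that Assumption \ref{as:sigma-G} handles the first bracket and Assumption \ref{as:sigma-H} handles the second (giving a factor $\sigma_H^2\|x_i - x_{i-1}\|^2$). Applying $(a+b)^2 \leq 2a^2 + 2b^2$, bounding $\|x_i - x_{i-1}\| \leq D\weight_i/\weight_{1:i}$ via the diameter assumption, and invoking Proposition \ref{prop:alphas} to show that both $(\weight_i-\weight_{i-1})^2$ and $\weight_{i-1}^2\weight_i^2/\weight_{1:i}^2$ are at most $(k+1)^2 i^{2k-2}$ produces the claimed $2(k+1)^2(\sigma_G^2 + D^2\sigma_H^2)i^{2k-2}$.

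The final consequence is immediate: Lemma \ref{lem:martingale-bound-general-norm} applied to the martingale difference sequence $\{X_i\}$ under the $\lambda$-strongly convex norm squared gives $\E[\|\sum_{i=1}^t X_i\|_*^2] \leq (2/\lambda)\sum_{i=1}^t \E[\|X_i\|_*^2]$, and summing the bound from the previous step using $\sum_{i=1}^t i^{2k-2} \leq t^{2k-1}$ (valid for $k\geq 1$) yields the stated $4(k+1)^2(\sigma_G^2+D^2\sigma_H^2)t^{2k-1}/\lambda$.
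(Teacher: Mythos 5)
Your proposal is correct and follows essentially the same route as the paper. The only difference is cosmetic: in the variance step you split $X_i$ as $(\weight_i-\weight_{i-1})[\nabla\L(x_i)-\nabla\ell(x_i,z_i)] + \weight_{i-1}\bigl[(\nabla\L(x_i)-\nabla\L(x_{i-1}))-(\nabla\ell(x_i,z_i)-\nabla\ell(x_{i-1},z_i))\bigr]$, whereas the paper pins the Lipschitz-type piece at $x_{i-1}$ and carries the factor $\weight_i$ (rather than $\weight_{i-1}$) on the smoothness-type piece; both are valid algebraic regroupings, both are covered by Assumptions \ref{as:sigma-G} and \ref{as:sigma-H}, and both produce the same constant $2(k+1)^2(\sigma_G^2+D^2\sigma_H^2)i^{2k-2}$ after applying Proposition \ref{prop:alphas} and the stability bound $\|x_i-x_{i-1}\|\le D\weight_i/\weight_{1:i}$. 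Everything else—the telescope, the conditional-mean-zero observation, and the final invocation of Lemma \ref{lem:martingale-bound-general-norm} with $\sum_{i\le t}i^{2k-2}\le t^{2k-1}$—matches the paper's argument.
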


\begin{proof}
For the first part, note that
\begin{align*}
    \delta_t &= \weight_t\nabla\ell(x_t,z_t)-\weight_{t-1}\nabla\ell(x_{t-1},z_t) \\
    &= (\weight_{t}-\weight_{t-1}) \nabla\ell(x_{t-1},z_t) + \weight_t(\nabla\ell(x_t,z_t)-\nabla\ell(x_{t-1},z_t)).
\end{align*}
Since $\ell$ is $G$-Lipschitz and $H$-smooth, $\|\nabla\ell(x_{t-1},z_t)\|_*\leq G$ and
\begin{align*}
    \|\nabla\ell(x_t,z_t)-\nabla\ell(x_{t-1},z_t)\|_* 
    &\leq H\|x_t-x_{t-1}\| \leq (\weight_t/\weight_{1:t})H\|w_t-x_{t-1}\|,
\end{align*}
where the second inequality follows from the definition of $x_t$ that $\weight_{1:t}(x_t-x_{t-1})=\weight_t(w_t-x_{t-1})$. The first result then follows from Proposition \ref{prop:alphas}.

For the second part, by telescopic sum $\weight_t\nabla\L(x_t) = \sum_{i=1}^t \weight_i\nabla\L(x_i) - \weight_{i-1}\nabla\L(x_{i-1})$, and recall that $g_t=\sum_{i=1}^t \delta_i$. Therefore,
\begin{align*}
    \weight_t\nabla\L(x_t)-g_t 
    &= \sum_{i=1}^t [\weight_i\nabla\L(x_i) - \weight_{i-1}\nabla\L(x_{i-1})] - [\weight_i\nabla\ell(x_i,z_i)-\weight_{i-1}\nabla\ell(x_{i-1},z_i)].
\end{align*}
We denote each summand by $X_i$, and we can check $X_i$ satisfies condition (ii) and (iii). First, since we assume $\nabla\L(w) = \E_z[\nabla\ell(w,z)]$ for all $w$, it holds that $\E[X_i|z_{1:i-1}]=0$. Second, we decompose $X_i$ in the same way as the first part: 
\begin{align*}
    \E[\|X_i\|_*^2]
    &= \E[ \|(\weight_i-\weight_{i-1})[\nabla(\L(x_{i-1})-\nabla\ell(x_{i-1},z_i)] \\
    &\quad\ + \weight_{i}( [\nabla\L(x_i)-\nabla\ell(x_i,z_i)] - [\nabla\L(x_{i-1})-\nabla\ell(x_{i-1},z_i)] ) \|_*^2 ] \\
    \intertext{Recall the assumption of $\sigma_G^2$ and $\sigma_H^2$ and that $\|x_i-x_{i-1}\| \leq \weight_i/\weight_{1:i}\|w_i-x_{i-1}\|$.}
    &\leq 2(\weight_i-\weight_{i-1})^2\sigma_G^2 + 2\weight_i^2 \sigma_H^2\|x_i-x_{i-1}\|^2 \\
    &\leq 2(\weight_i-\weight_{i-1})^2\sigma_G^2 + 2(\weight_i^2/\weight_{1:i})^2 \sigma_H^2\|w_i-x_{i-1}\|^2 \\
    &\leq 2(k+1)^2(\sigma_G^2 + D^2\sigma_H^2)i^{2k-2}.
\end{align*}
The last inequality follows from the assumption that $\|w_i-x_{i-1}\|\leq D$ and Proposition \ref{prop:alphas}.

The last part of the theorem is a direct result from Lemma \ref{lem:martingale-bound-general-norm}:
\begin{align*}
    \E[ \| \nabla\L(x_t)-g_t \|_*^2]
    \leq \frac{2}{\lambda} \sum_{i=1}^t \E[\|X_i\|_*^2] 
    &\leq \frac{4(k+1)^2}{\lambda}(\sigma_G^2+D^2\sigma_H^2)t^{2k-1}.
\end{align*}
\end{proof}

\begin{restatable}{lemma}{LemmaPrivacyBound}
\label{lem:privacy-expectation}
Suppose $\E[R_t]=0$ and $\E[\|R_t\|_*^2]\leq \bar\sigma_t^2$, then
\begin{align*}
    \E[\|\gamma_t\|_*^2]
    &\leq 2(\max_{i\leq t}\bar\sigma_i^2)\log_2(2t)/\lambda.
\end{align*}
\end{restatable}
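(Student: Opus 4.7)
The plan is to reduce this directly to the martingale bound in Lemma \ref{lem:martingale-bound-general-norm}. By construction in the \textsc{Noise} subroutine, $\gamma_t = \sum_{i \in I_t} R_i$, where $I_t \subseteq \{1,\ldots,t\}$ is the set of cumulative binary-expansion indices and each $R_i = \sigma_i \tilde R_i$ is drawn independently from a scaled copy of $\D$. Since $\E[R_i] = 0$ and $\E[\|R_i\|_*^2] \leq \bar\sigma_i^2$, the summands are independent mean-zero random vectors with bounded second moments, hence trivially satisfy the martingale-difference hypothesis $\E[R_i \mid R_{1:i-1}] = 0$ required by Lemma \ref{lem:martingale-bound-general-norm}.

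Next I would control $|I_t|$. By inspection of the \textsc{Noise} loop, $I_t$ has exactly one element per nonzero bit in $\mathrm{bin}(t)$, so $|I_t|$ is at most the bit length $\lfloor \log_2 t\rfloor + 1 \leq \log_2(2t)$. Applying Lemma \ref{lem:martingale-bound-general-norm} to the (re-indexed) sequence $\{R_i\}_{i \in I_t}$ yields
\begin{align*}
    \E[\|\gamma_t\|_*^2] \;\leq\; \frac{2}{\lambda} \sum_{i \in I_t} \E[\|R_i\|_*^2] \;\leq\; \frac{2}{\lambda}\,|I_t|\,\max_{i \leq t} \bar\sigma_i^2 \;\leq\; \frac{2\log_2(2t)}{\lambda}\,\max_{i\leq t}\bar\sigma_i^2,
\end{align*}
which is the claimed bound.

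There is essentially no obstacle here: the proof is a one-line application of the earlier martingale inequality together with the elementary observation that $|I_t|$ is logarithmic in $t$. The only small subtlety to be careful about is that Lemma \ref{lem:martingale-bound-general-norm} is stated for a sequence $\{X_t\}$ rather than an indexed sub-collection, but independence of the $R_i$'s across all $i$ means any sub-sequence automatically inherits the martingale-difference property, so no extra argument is needed.
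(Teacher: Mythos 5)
Your proof is correct and follows the same route as the paper: apply the martingale bound of Lemma \ref{lem:martingale-bound-general-norm} to the independent mean-zero summands $\{R_i\}_{i\in I_t}$ and use $|I_t|\le\log_2(2t)$. The added remark about sub-collections of independent vectors inheriting the martingale-difference property is a reasonable clarification but not a departure from the paper's argument.
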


\begin{proof}
By construction, $\gamma_t=\sum_{i\in I_t}R_i$, and $R_i$'s are independent and mean-zero. Therefore, Lemma \ref{lem:martingale-bound-general-norm} can be applied, which yields
\begin{align*}
    \E[\|\gamma_t\|_*^2]
    &\leq \frac{2}{\lambda}\sum_{i\in I_t} \E[\|R_i\|_*^2] 
    \leq \frac{2}{\lambda} \sum_{i\in I_t} \bar\sigma_i^2
    \leq 2(\max_{i\leq t}\bar\sigma^2) \log_2(2t)/\lambda.
\end{align*}
The last inequality is from the fact that $|I_t|\leq \log_2(2t)$.
\end{proof}

\section{Proofs for RDP (Section \ref{sec:OTB})}
\label{app:tree-aggregation-RDP}

In this section, we prove the tree aggregation mechanism for RDP mechanisms implemented in Algorithm \ref{alg:DP-OTB} correctly composes individual RDP mechanisms. Before that, we will first prove a general composition theorem for RDP. 

\subsection{Advanced Composition for RDP}

Throughout this section, we use the subscript $1:T$ to denote a sequence of $T$ elements. 
We denote $\Z$ the data space and $Z=(z_1,\ldots,z_T), Z'=(z_1',\ldots,z_T')$ neighboring datasets in $\Z^T$ that differs only at the $q$-th element (i.e., $z_t\neq z_t'$ if and only if $t=q$).
We consider RDP mechanisms $F_1,\ldots,F_T$ such that $F_1:\Z^T\to W$ and $F_t:\Z^T\times W^{t-1}\to W$.  We assume that for each $t$, there exists some index set $S_t\subseteq[T]$ such that $F_t$ only depends on $S_t$. Formally, we assume:

\begin{assumption}
\label{as:data-dependence}
Let $Z,Z'\in\Z$ be two neighboring datasets which differs only at the $q$-th element. Each $F_t$ associates with $S_t\subseteq [T]$ such that if $q\not\in S_t$, then $F_t(Z,x_{1:t-1})=F_t(Z',x_{1:t-1})$ for all $x_{1:t-1}\in W^{t-1}$.
\end{assumption}

For a fixed norm $\|\cdot\|$, we assume $\D$ is a $(V,\alpha)$-RDP distribution on norm $\|\cdot\|$ (see Definition \ref{def:RDP-dist}), and we define the sensitivity of $F_t$ w.r.t. $\|\cdot\|$ as $\Delta_t(x_{1:t-1})$, a function of inputs $x_{1:t-1}$:
\begin{align*}
    \Delta_t(x_{1:t-1}) = \sup_{|Z-Z'|=1} \|F_t(Z,x_{1:t-1}) - F_t(Z',x_{1:t-1})\|_*.
\end{align*}
We also define the output as $\hat{f}_t = F_t(Z,\hat{f}_{1:t-1})+\sigma_t\zeta_t$, where $\zeta_t\sim \D$ and $\sigma_t^2\geq \Delta_t(\hat{f}_{1:t-1})^2/\rho^2$. In particular, $\sigma_t$ only depends on $\hat f_{1:t-1}$ and does not depend on $\hat f_{t:T}$, i.e., the future.
The pseudo-code of this composition is in Algorithm \ref{alg:gaussianaggregation}. For simplicity, we assume $F$'s are deterministic mechanisms, but we can extend to random mechanisms by treating the random generator as part of the input.

\begin{algorithm}
   \caption{Advanced Composition for RDP Mechanisms}
   \label{alg:gaussianaggregation}
   \begin{algorithmic}[1]
      \State{\bfseries Input: } Dataset $Z$; functions $F_1,\dots,F_T$ with sensitivity $\Delta_1,\ldots,\Delta_T$; $(V,\alpha)$-RDP distribution $\D$; privacy constants $\rho_1,\ldots,\rho_T$.
      \State Sample random $\zeta_1\sim \D$ and compute $\sigma_1^2 \geq \Delta_1^2/\rho_1^2$.
      \State Set $f_1 = F_1(Z)$ and $\hat f_ 1 = f_1 + \sigma_1\zeta_1$
      \For{$t=2,\dots,T$}
      \State Sample random $\zeta_t\sim \D$ and compute $\sigma_t^2\geq \Delta_t(\hat{f}_{1:t-1})^2/\rho_t^2$.
      \State Set $f_t =F_t(Z, \hat f_{1:t-1})$ and $\hat f_t = f_t + \sigma_t\zeta_t$.
      \EndFor
      \State \textbf{Return} $\hat f_1,\dots,\hat f_T$.
   \end{algorithmic}
\end{algorithm}

By definition of RDP distribution, each $\hat{f}_t$ is $(\alpha,\alpha\rho^2/2)$-RDP. We will also show that the composition $(\hat{f}_1,\ldots,\hat{f}_T)$ is also RDP.

\begin{theorem}\label{thm:renyiaggregation}
We define $\IN(q)=\{t:q\in S_t\}$ and $\OUT(q)=\{t:q\not\in S_t\}$.
If $F_1,\ldots,F_T$ satisfy Assumption \ref{as:data-dependence}, then
Algorithm \ref{alg:gaussianaggregation} is 
$(\alpha, S)$-RDP, where
\begin{align*}
    S = \max_{q\in[T]} \sum_{t\in \IN(q)} \alpha\rho_t^2/2.
\end{align*}
\end{theorem}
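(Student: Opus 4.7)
Proof plan. The overall strategy is an induction on $T$ using the chain rule for Rényi divergence, coupled with a case analysis of each round according to whether the differing index lies in $S_t$ or not. Fix an arbitrary pair of neighboring datasets $Z, Z'$ that differ only at index $q \in [T]$, and write $P$ and $Q$ for the joint distributions of $(\hat f_1,\ldots,\hat f_T)$ produced by Algorithm \ref{alg:gaussianaggregation} run on $Z$ and $Z'$ respectively. The goal is to show $D_\alpha(P \Vert Q) \le \sum_{t \in \IN(q)} \alpha\rho_t^2/2$, which is at most $S$; taking the supremum over neighboring pairs (equivalently, over $q$) then yields the statement.

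The first step is to invoke the standard chain rule for Rényi divergence: for joint distributions $P_{X,Y}, Q_{X,Y}$,
\begin{align*}
D_\alpha(P_{X,Y}\Vert Q_{X,Y}) \le D_\alpha(P_X\Vert Q_X) + \sup_{x} D_\alpha(P_{Y\mid X=x}\Vert Q_{Y\mid X=x}).
\end{align*}
Applied inductively to $(\hat f_1,\ldots,\hat f_T)$, this gives
\begin{align*}
D_\alpha(P\Vert Q) \le \sum_{t=1}^T \sup_{\hat f_{1:t-1}} D_\alpha\bigl(\hat f_t(Z)\mid \hat f_{1:t-1} \,\big\Vert\, \hat f_t(Z')\mid \hat f_{1:t-1}\bigr).
\end{align*}
So it suffices to bound each conditional Rényi divergence by $\alpha\rho_t^2/2$ when $t\in\IN(q)$ and by $0$ otherwise.

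Next, I would split into cases. Fix any realization $\hat f_{1:t-1}$; then $\sigma_t$ is a determined constant (it is a function of $\hat f_{1:t-1}$ only), and the same $\sigma_t$ is used on both $Z$ and $Z'$. For $t \in \OUT(q)$, Assumption \ref{as:data-dependence} gives $F_t(Z,\hat f_{1:t-1}) = F_t(Z',\hat f_{1:t-1})$, so the two conditional distributions of $\hat f_t$ are identical noise distributions with identical means, and the Rényi divergence is zero. For $t \in \IN(q)$, the two conditional distributions are $F_t(Z,\hat f_{1:t-1}) + \sigma_t\zeta_t$ and $F_t(Z',\hat f_{1:t-1}) + \sigma_t\zeta_t$ with $\zeta_t\sim \D$. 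Since
\begin{align*}
\|F_t(Z,\hat f_{1:t-1}) - F_t(Z',\hat f_{1:t-1})\|_* \le \Delta_t(\hat f_{1:t-1})
\end{align*}
by the definition of sensitivity, and since $\sigma_t^2 \ge \Delta_t(\hat f_{1:t-1})^2/\rho_t^2$ by construction, property (iii) of Definition \ref{def:RDP-dist} yields $D_\alpha \le \alpha\rho_t^2/2$.

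Summing the per-round bounds then gives $D_\alpha(P\Vert Q) \le \sum_{t\in\IN(q)} \alpha\rho_t^2/2 \le S$, which completes the proof. The main conceptual subtlety — and the point I would treat most carefully — is that the noise scale $\sigma_t$ is itself \emph{data-dependent} through $\hat f_{1:t-1}$; a careless setup could allow $\sigma_t$ on $Z$ versus $Z'$ to differ. The cure is precisely the conditioning implicit in the chain rule above: the supremum fixes a single realization $\hat f_{1:t-1}$, common to both distributions, so $\sigma_t(\hat f_{1:t-1})$ is the same number on both sides and the RDP-distribution condition applies verbatim. Everything else is routine once this conditioning is handled cleanly.
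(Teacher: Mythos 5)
Your proof is correct and follows essentially the same route as the paper: both condition on the realized prefix $\hat f_{1:t-1}$ so that $\sigma_t$ is fixed and identical across the two runs, then sum per-round Rényi contributions that are zero for $t\in\OUT(q)$ and at most $\alpha\rho_t^2/2$ for $t\in\IN(q)$. The only cosmetic difference is that you invoke the Rényi chain rule as a named lemma, whereas the paper re-derives it inline by unrolling the nested integral $\int P^\alpha Q^{1-\alpha}$ from inside out.
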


As an immediate corollary, if we set $\rho_t=\rho$ for all $t$ and define $U=\max_q |\IN(q)|$, then Algorithm \ref{alg:gaussianaggregation} is $(\alpha, U\alpha\rho^2/2)$-RDP.

\begin{proof}
Let $Z,Z'$ be any neighboring datasets and assume they differ at $q$, and we denote $\hat f_t = F_t(Z,\hat f_{1:t-1})+\sigma_t(\hat f_{1:t-1})\zeta_t$ and $\hat f'_t = F_t(Z',\hat f'_{1:t-1})+\sigma_t(\hat f'_{1:t-1})\zeta_t$. In this proof, we use the notation $\sigma_t(\hat f_{1:t-1})$ to emphasize that $\sigma_t$ satisfying $\sigma_t^2\geq \Delta_t(\hat f_{1:t-1})^2/\rho_t^2$ is a function of $\hat f_{1:t-1}$.

The probability density of the joint distribution of $\hat{f}_{1:T}$, say $P$, and the density of $\hat f'_{1:T}$, say $Q$, are:
\begin{align*}
    P(x_{1:T}) = \prod_{t=1}^T P_t(x_t|x_{1:t-1}), \quad Q(x_{1:T}) = \prod_{t=1}^T Q_t(x_t|x_{1:t-1}),
\end{align*}
where $P_t(\cdot|x_{1:t-1})$ is the density of $(\hat f_t | \hat f_{1:t-1}=x_{1:t-1}) = F_t(Z,x_{1:t-1})+\sigma_t(x_{1:t-1})\zeta_t$ and $Q_t(\cdot|x_{1:t-1})$ is the density of $F_t(Z',x_{1:t-1})+\sigma_t(x_{1:t-1})\zeta_t$. Note that $\sigma_t$ is the same for $P_t$ and $Q_t$.

By definition of R\'enyi divergence,
\begin{align}
    D_\alpha(P\|Q) = \frac{1}{\alpha-1}\log \int P(x_{1:T})^\alpha Q(x_{1:T})^{1-\alpha}\, dx_{1:T}. \label{eq:RDP-renyi-divergence-defn}
\end{align}
Previous multiplication rule implies that:
\begin{align*}
    P(x_{1:T})^\alpha Q(x_{1:T})^{1-\alpha}
    = \prod_{t\in \IN(q)\sqcup \OUT(q)} \left( P_t(x_t|x_{1:t-1})^\alpha Q_t(x_t|x_{1:t-1})^{1-\alpha}\right) 
\end{align*}
For all $t\in\OUT(q)$ (i.e., $q\not\in S_t$), Assumption \ref{as:data-dependence} implies that $F_t(Z,x_{1:t-1})=F_t(Z',x_{1:t-1})$, so $P_t(\cdot|x_{1:t-1}) = Q_t(\cdot|x_{1:t-1})$ and
\begin{align}
    \int P_t(x_t|x_{1:t-1})^\alpha Q_t(x_t|x_{1:t-1})^{1-\alpha}\, dx_t
    = \int P_t(x_t|x_{1:t-1}) \, dx_t = 1.
    \label{eq:RDP-composition-1}
\end{align}
The second inequality holds because $P_t(\cdot|x_{1:t-1})$ is a probability density. 

On the other hand, for all $t\in \IN(q)$,
\begin{align*}
    \sigma_t(x_{1:t-1})^2
    &\geq \Delta_t(x_{1:t-1})^2/\rho_t^2 \\
    &\geq \|F_t(Z,x_{1:t-1})-F_t(Z',x_{1:t-1})\|_*^2/\rho_t^2,
\end{align*}
so by Definition \ref{def:RDP-dist},
\begin{align*}
    D_\alpha(P_t\|Q_t)
    &= \frac{1}{\alpha-1}\log\int P_t(x|x_{1:t-1})^\alpha Q_t(x|x_{1:t-1})^{1-\alpha} \, dx
    \leq \alpha\rho_t^2/2.
\end{align*}
Equivalently,
\begin{align}
    \int P_t(x|x_{1:t-1})^\alpha Q_t(x|x_{1:t-1})^{1-\alpha} \, dx \leq \exp((\alpha-1)\alpha\rho_t^2/2).
    \label{eq:RDP-composition-2}
\end{align}

Note that $P_t,Q_t$ only depend on $x_{1:t-1}$ and \textit{not} on $x_{t:T}$, so we can rearrange the integral in \eqref{eq:RDP-renyi-divergence-defn} as:
\begin{align*}
    &\int P(x_{1:T})^\alpha Q(x_{1:T})^{1-\alpha} \, dx_{1:T} \\
    = &\int P_T(x_T|x_{1:T-1})^\alpha Q_T(x_T|x_{1:T-1})^{1-\alpha} \dotsm \left(\int P_1(x_1)^\alpha Q_1(x_1)^{1-\alpha} \, dx_1\right) \dotsm dx_T \\
    \intertext{Evaluating the composite integral from inside to outside with \eqref{eq:RDP-composition-1} and \eqref{eq:RDP-composition-2} gives:}
    \leq & \exp\left(\sum_{t\in\IN(q)} (\alpha-1)\alpha\rho_t^2/2\right).
\end{align*}
In conclusion, for all $|Z-Z'|=1$,
\begin{align*}
    D_\alpha(P\|Q) = \frac{1}{\alpha-1} \log \int P(x_{1:T})^\alpha Q(x_{1:T})^{1-\alpha}\, dx_{1:T} \leq \max_q \sum_{t\in \IN(q)}\alpha\rho_t^2/2 = S.
\end{align*}
\end{proof}

\subsection{Algorithm \ref{alg:DP-OTB} is RDP}

Now we are ready to prove the tree aggregation in Algorithm \ref{alg:DP-OTB}. 

\TreeRDP*

\begin{proof}
Recall the definition of $I_t$ in Algorithm \ref{alg:DP-OTB}: we define $s_0=0$, and $s_i=\max_k\{s_{i-1}+2^k:2^k|t-s_{i-1}\}$ until $s_n=t$ for some $n$, and we define $I_t=\{s_1,\ldots,s_n\}$. For example, $I_4=\{4\}$ and $I_7=\{4,6,7\}$. We then define $S_t=\{s_{n-1}+1,s_{n-1}+2,\ldots, t\}$ (e.g., $S_4=\{1,2,3,4\}$ and $S_7=\{7\}$). Observe that $\{S_i:i\in I_t\}$ is a partition of $[t]$.

Let $Z,Z'$ be neighboring datasets that differ at the $q$-th element. 
Define $F_t:\Z^T\times W^{t-1}\to W$ as:
\begin{align*}
    F_t(Z,\hat f_{1:t-1}) 
    &= \sum_{i\in S_t} \delta_i(Z, \hat f_{1:t-1}) \\
    &= \sum_{i\in S_t} \weight_i\nabla\ell\left(x_i,z_i\right)-\weight_{i-1}\nabla\ell\left(x_{i-1},z_i\right),
\end{align*}
where $x_i$'s are the parameters as defined in Algorithm \ref{alg:DP-OTB}, and $z_i$ is the $i$-th data in $Z$. We then define $\hat f_t = F_t(Z,\hat f_{1:t-1}) + \sigma_t(\hat f_{1:t-1})\tilde R_t$ and $\hat f_t' = F_t(Z',\hat f'_{1:t-1}) + \sigma_t(\hat f'_{1:t-1})\tilde R_t$, where $\tilde R_t\sim\D$. 

For simplicity, we denote $\delta_t=\delta_t(Z,\hat f_{1:t-1})$ and $\sigma_t=\sigma_t(\hat f_{1:t-1})$ (and $\delta_t',\sigma_t'$ respectively). We also denote $x_i,w_i$ and $x_i',w_i'$ as parameters w.r.t. $Z,Z'$ respectively.
Since $\{S_i:i\in I_t\}$ partitions $[t]$,
\begin{align*}
    g_t+\gamma_t = \sum_{i=1}^t \delta_i + \sum_{i\in I_t} R_i
    = \sum_{i\in I_t} \left(\sum_{j\in S_i} \delta_j + \sigma_i \tilde R_i\right)
    = \sum_{i\in I_t} \hat f_i.
\end{align*}
By construction of Algorithm \ref{alg:DP-OTB}, $x_1,\ldots,x_t$ are determined by $\{g_i+\gamma_i\}_{i=1}^{t-1}$ and equivalently by $\hat f_{1:t-1}$. In particular, they do not depend on $f_{t:T}$. Consequently, this implies that (i) if $\hat f_{1:t-1}=\hat f'_{1:t-1}$ then $x_{1:t}=x'_{1:t}$ and $w_{1:t}=w'_{1:t}$ and (ii) if in addition $z_i=z_i'$ then $\delta_i=\delta_i'$.

This implies that $F_t$'s satisfy Assumption \ref{as:data-dependence}: if $q\not\in S_t$ (i.e., $z_i=z_i'$ for all $i\in S_t$), then for any fixed $f_{1:t-1}\in W^{t-1}$, $F_t(Z, f_{1:t-1}) = F_t(Z', f_{1:t-1})$ because $\delta_i=\delta_i'$ for all $i\in S_t$. Consequently, Theorem \ref{thm:renyiaggregation} can be applied, which states that if $\sigma_t(\hat f_{1:t-1})^2\geq \Delta_t(\hat f_{1:t-1})^2/\rho^2$, then $(\hat f_1,\ldots, \hat f_T)$ is $(\alpha, U\alpha\rho^2/2)$-RDP where $U=\max_q|\IN(q)|$ and $\IN(q)=\{t:q\in S_t\}$. Note that $U\leq \log_2(2T)$.

The sensitivity of $F_t$ at fixed $f_{1:t-1}\in W^{t-1}$ is bounded by:
\begin{align*}
    \Delta_t(f_{1:t-1})
    &= \sup_{\substack{|Z-Z'|=1}} \|F_t(Z, f_{1:t-1}) - F_t(Z', f_{1:t-1})\|_* \\
    &= \sup_{q\in S_t}\|\delta_q - \delta_q'\|_*
    \leq \sup_{q\in S_t} \|\delta_q\|_* + \sup_{q\in S_t}\|\delta_q'\|_*
    \intertext{We proved that $\|\delta_i\|_*\leq (k+1)i^{k-1}(G+H\|w_i-x_{i-1}\|)$ and $\|\delta_i'\|_*\leq (k+1)i^{k-1}(G+H\|w_i'-x'_{i-1}\|)$ (Lemma \ref{lem:variance-expectation-bound}). Note that $w_i=w_i'$ and $x_i=x_i'$ for all $i\leq t$. Also note that $i\leq t$ for all $i\in S_t$, so:}
    &\leq 2(k+1)t^{k-1}(G+H\max_{i\in[t]}\|w_i-x_{i-1}\|).
\end{align*}
Since $U\leq \log_2(2T)$ and $\sigma_t^2$ as defined in \eqref{eq:RDP-noise} satisfies the condition $\sigma_t^2 \geq \Delta_t(\hat f_{1:t-1})\log_2(2T)/\rho^2$, Theorem \ref{thm:renyiaggregation} and post-processing imply that $\{g_t+\gamma_t\}_{t=1}^T$ is $(\alpha,\alpha\rho^2/2)$-RDP (so is Algorithm \ref{alg:DP-OTB}).
\end{proof}

\section{Further Discussions about Differential Privacy}
\label{app:further-DP}

\subsection{Example of RDP Distribution}

In this subsection, we prove that the multivariate Gaussian distribution $\N(0,I)$ is a $(d,\alpha)$-RDP distribution w.r.t. 2-norm on $\RR^d$ for all $\alpha>0$ (Definition \ref{def:RDP-dist}). Namely, $\N(0,I)$ satisfies the following three properties: let $R\sim \N(0,I)$, then (i) $\E[R]=0$, (ii) $\E[\|R\|_2^2] \leq d$, and (iii) for all $\rho>0$ and $\mu,\mu'\in\RR^d$, if $\sigma^2 \geq \|\mu-\mu'\|_2^2/\rho^2$ then $D_\alpha(P\|Q) \leq \alpha\rho^2/2$, where $P,Q$ denote the distribution of $\sigma R+\mu$ and $\sigma R+\mu'$ respectively.

The first property follows immediately from the definition of $\N(0,I)$. For the second property, $R=(r_1,\ldots,r_d)$ where $r_i\sim N(0,1)$ iid., so $\E[\|R\|_2^2]=\sum_{i=1}^d \E[r_i^2]=d$. To check the third property, we need the following lemma:

\begin{lemma}\label{lemma:renyigaussian}
$D_\alpha(\N(\mu,\sigma^2I)\| \N(\mu',\sigma^2I))=\alpha \|\mu-\mu'\|^2/2\sigma^2$.
\end{lemma}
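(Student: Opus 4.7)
}

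The plan is to expand the definition of R\'enyi divergence directly using the Gaussian density, and reduce the integral to a Gaussian normalization by completing the square. Let $p$ and $q$ denote the densities of $\mathcal{N}(\mu,\sigma^2 I)$ and $\mathcal{N}(\mu',\sigma^2 I)$ respectively. Since the normalization constant $(2\pi\sigma^2)^{-d/2}$ is the same for both, the quantity $p(x)^\alpha q(x)^{1-\alpha}$ has the same prefactor, and the exponent becomes
$$-\frac{1}{2\sigma^2}\bigl(\alpha\|x-\mu\|^2 + (1-\alpha)\|x-\mu'\|^2\bigr).$$

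The main algebraic step is to rewrite this quadratic in $x$ as a single squared distance to the convex combination $\bar\mu=\alpha\mu+(1-\alpha)\mu'$ plus a constant. First I would expand both squared norms, collect terms in $x$, and observe that the coefficient of $\|x\|^2$ is $1$ while the linear term is $-2\langle x,\bar\mu\rangle$, so completing the square yields $\|x-\bar\mu\|^2$ plus the constant $\alpha\|\mu\|^2+(1-\alpha)\|\mu'\|^2-\|\bar\mu\|^2$. A short computation (expanding $\|\bar\mu\|^2=\alpha^2\|\mu\|^2+2\alpha(1-\alpha)\langle\mu,\mu'\rangle+(1-\alpha)^2\|\mu'\|^2$) collapses this constant to exactly $\alpha(1-\alpha)\|\mu-\mu'\|^2$.

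Plugging back in, the integrand becomes the density of $\mathcal{N}(\bar\mu,\sigma^2 I)$ times the scalar $\exp\bigl(-\alpha(1-\alpha)\|\mu-\mu'\|^2/(2\sigma^2)\bigr)$, so the integral over $\mathbb{R}^d$ equals that scalar. Taking $\log$ and dividing by $\alpha-1$ gives
$$D_\alpha(p\|q) = \frac{-\alpha(1-\alpha)\|\mu-\mu'\|^2}{2(\alpha-1)\sigma^2} = \frac{\alpha\|\mu-\mu'\|^2}{2\sigma^2},$$
which is the claimed identity. There is no real obstacle here: the only place to be careful is the sign when dividing by $\alpha-1$ (recall $\alpha>1$, so $\alpha-1>0$ but $1-\alpha<0$, and the two negatives cancel correctly). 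Because the covariances are the same isotropic $\sigma^2 I$, we never need matrix determinant or inverse computations beyond the single completing-the-square step.
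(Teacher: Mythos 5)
Your proposal is correct and follows essentially the same route as the paper's proof: both expand the R\'enyi divergence integral, show that $\alpha\|x-\mu\|^2+(1-\alpha)\|x-\mu'\|^2 = \|x-\bar\mu\|^2 + \alpha(1-\alpha)\|\mu-\mu'\|^2$ with $\bar\mu=\alpha\mu+(1-\alpha)\mu'$, and integrate the resulting Gaussian density to $1$. The only cosmetic difference is that the paper obtains the identity by substituting $x-\mu = (x-\bar\mu)-(1-\alpha)(\mu-\mu')$ and $x-\mu'=(x-\bar\mu)+\alpha(\mu-\mu')$, whereas you collect coefficients in $x$ and complete the square directly; the underlying algebra is identical.
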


Consequently, for all $\sigma^2 \geq \|\mu-\mu'\|_2^2/\rho^2$, $D_\alpha(\N(0,\sigma^2I)\| \N(\mu, \sigma^2 I)) \leq \alpha\rho^2/2$. This proves that $\N(0,I)$ is indeed a $(d,\alpha)$-RDP distribution.

\begin{proof}[Proof of Lemma \ref{lemma:renyigaussian}]
The density of $\N(\mu,\sigma^2 I)$ is $(2\pi\sigma^2)^{-d/2}\exp(-\|x-\mu\|_2^2/2\sigma^2)$. For short we denote $A=(2\pi\sigma^2)^{-d/2}$ and $B=1/2\sigma^2$. Then
\begin{align*}
    &D_\alpha(\N(\mu,\sigma^2 I)\|\N(\mu',\sigma^2 I))\\
    &=\frac{1}{\alpha-1}\log\left(\int_{\R^d} A^\alpha\exp(-B\alpha\|x-\mu\|_2^2) A^{1-\alpha}\exp(-B(1-\alpha)\|x-\mu'\|_2^2)\ dx\right) \\
    &= \frac{1}{\alpha-1}\log \left( \int_{\RR^d} A\exp \left(-B(\alpha\|x-\mu\|_2^2+(1-\alpha)\|x-\mu'\|_2^2) \right) dx \right).
\end{align*}
Next, observe that
\begin{align*}
    & x-\mu= (x - \alpha\mu - (1-\alpha)\mu') -(1-\alpha)(\mu-\mu'), \\
    & x-\mu' = (x-\alpha\mu-(1-\alpha)\mu')+\alpha(\mu-\mu').
\end{align*}
Consequently, upon expanding out $\|x-\mu\|_2^2, \|x-\mu'\|_2^2$, we get:
\begin{align*}
    \alpha\|x-\mu\|_2^2 + (1-\alpha)\|x-\mu'\|_2^2
    = \|x-\alpha\mu-(1-\alpha)\mu'\|_2^2 + \alpha(1-\alpha)\|\mu-\mu'\|_2^2.
\end{align*}
Note that $A\exp(-B\|x-\alpha\mu-(1-\alpha)\mu'\|_2^2)$ is the density of $\N(\alpha\mu+(1-\alpha)\mu', \sigma^2I)$, so it integrates to $1$. Therefore,
\begin{align*}
    &D_\alpha(\N(\mu,\sigma^2 I)\|\N(\mu',\sigma^2 I)) \\
    &= \frac{1}{\alpha-1}\log\left( \int_{\RR^d} A\exp(-B\|x-\alpha\mu-(1-\alpha)\mu'\|_2^2) \exp(-B\alpha(1-\alpha)\|\mu-\mu'\|_2^2) \ dx \right) \\
    &= \frac{1}{\alpha-1}\log\left( \exp(-B\alpha(1-\alpha)\|\mu-\mu'\|_2^2) \right)
    = B\alpha\|\mu-\mu'\|_2^2.
\end{align*}
Recall that $B=1/2\sigma^2$, and this completes the proof.
\end{proof}

\subsection{Extension to Pure-DP Mechanisms}

In the main text, we focus on Renyi differential privacy, and we defined RDP-distribution (Definition \ref{def:RDP-dist}) accordingly. We can always extend our result in a pure differential privacy setting.

\newcommand{\DPnorm}{\text{$(K,\D)$-DP-able}}
\begin{definition}[$V$-DP distribution]
\label{def:DP-dist}
A distribution $\D$ on $\RR^d$ is said to be a \textit{DP distribution on norm $\|\cdot\|$ with variance constant $V$} (or simply $\D$ is $V$-DP on $\|\cdot\|$) if it satisfies that for $R\sim\D$ (i) $\E[R]=0$, (ii) $\E[\|R\|_*^2]\leq V$, and (iii) for all $\epsilon>0$ and $\mu,\mu'\in\RR^d$, if $\sigma^2\geq \|\mu-\mu'\|_*^2/\epsilon^2$, then $p((x-\mu)/\sigma)/p((x-\mu')/\sigma) \leq \exp(\epsilon)$ for all $x\in\RR^d$, where $p(x)$ is the density of $\D$.
\end{definition}

The tree aggregation described in Appendix \ref{app:tree-aggregation-RDP} also works for pure DP mechanisms as well. Therefore, if we assume $\D$ in Algorithm \ref{alg:DP-OTB} with a $V$-DP distribution and change the definition of $\sigma_t^2$ in \eqref{eq:RDP-noise} correspondingly, Algorithm \ref{alg:DP-OTB} can be modified to an purely $\epsilon$-DP mechanism.

Next, we can show that exponential mechanism in general norm satisfies this definition:

\begin{restatable}{theorem}{TheoremExponentialMechanism}
\label{thm:exponential-mechanism-general-norm}
Consider a probability density $p(x) = A\exp(-\|x\|_*)$ on $(\RR^d,\|\cdot\|)$, where $A$ is some normalization constant. Also define $V=\int_{\RR^d} \|x\|_*^2 A\exp(-\|x\|_*)\, dx$. Then distribution $\D$ with density $p$ is a $\K$-DP distribution.
\end{restatable}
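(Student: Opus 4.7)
The plan is to verify the three conditions of Definition \ref{def:DP-dist} for the distribution with density $p(x) = A\exp(-\|x\|_*)$.

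First, I would handle conditions (i) and (ii), which are essentially definitional. For (i), I would observe that $\|{-x}\|_* = \|x\|_*$, so $p$ is symmetric about the origin; hence $\E[R] = \int x\, p(x)\, dx = 0$ by the substitution $y = -x$. For (ii), the claim $\E[\|R\|_*^2] \leq V$ holds with equality by the very definition of $V$ given in the theorem statement.

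The substantive step is condition (iii). Given $\mu, \mu' \in \RR^d$ and $\sigma^2 \geq \|\mu-\mu'\|_*^2/\epsilon^2$, I need to bound the density ratio
\begin{align*}
    \frac{p((x-\mu)/\sigma)}{p((x-\mu')/\sigma)}
    &= \frac{A\exp(-\|(x-\mu)/\sigma\|_*)}{A\exp(-\|(x-\mu')/\sigma\|_*)}
    = \exp\!\left(\frac{\|x-\mu'\|_* - \|x-\mu\|_*}{\sigma}\right).
\end{align*}
By the reverse triangle inequality for the dual norm, $\|x-\mu'\|_* - \|x-\mu\|_* \leq \|\mu - \mu'\|_*$. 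Combined with the assumption $\sigma \geq \|\mu-\mu'\|_*/\epsilon$, the exponent is at most $\epsilon$, which yields the desired bound $p((x-\mu)/\sigma)/p((x-\mu')/\sigma) \leq \exp(\epsilon)$.

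There isn't really a hard step here, since the key ingredient (the reverse triangle inequality) is precisely what makes the Laplace-type density the natural analogue of the Laplace mechanism in general norms. The only thing one might want to double-check is that $A$ and $V$ are actually finite for arbitrary norms on $\RR^d$; this follows from equivalence of norms on finite-dimensional spaces together with the fact that $\int \exp(-\|x\|_2) \, dx$ and $\int \|x\|_2^2 \exp(-\|x\|_2)\, dx$ are finite, so this issue can be dispatched in a single sentence before the main argument.
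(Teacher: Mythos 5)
Your proof is correct and follows essentially the same route as the paper's: symmetry of $p$ for condition (i), the definition of $V$ for condition (ii), and the (reverse) triangle inequality applied to the density ratio for condition (iii). The extra sentence you add about finiteness of $A$ and $V$ via norm equivalence on $\RR^d$ is a sensible hygiene check that the paper omits, but the substance is identical.
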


\begin{proof}
Let $R\sim \D, \mu,\mu'\in \RR^d$, and $\sigma^2\geq 0$. Since the density $p$ is symmetric, $\E[R]=0$; and by definition, $\E[\|R\|_*^2]=V$. For the third property,
\begin{align*}
    \frac{p((x-\mu)/\sigma)}{p((x-\mu')/\sigma)} 
    = \frac{A\exp(-\|x-\mu\|_*/\sigma)}{A\exp(-\|x-\mu'\|_*/\sigma)}
    &= \exp\left( \frac{-\|x-\mu\|_*+\|x-\mu'\|_*}{\sigma} \right) \\
    \intertext{By triangular inequality, $-\|x-\mu\|_*+\|x-\mu'\|_*\leq \|\mu-\mu'\|_*$, so:}
    &\leq \exp\left(\frac{\|\mu-\mu'\|_*}{\sigma}\right).
\end{align*}
Hence, for all $\sigma \geq \|\mu-\mu'\|_*/\epsilon$, this is further bounded by $\exp(\epsilon)$.
\end{proof}

\section{Proofs for the Optimistic Case (Section \ref{sec:optimistic})}
\label{app:optimistic}

\ThmOptimistic*

\begin{proof}
Recall that $g_t=\sum_{i=1}^t \delta_i$, then
\begin{align*}
    \|\Bar{g}_t - \hat{g}_t\|_*^2
    &= \|\delta_t + \gamma_t - \gamma_{t-1}\|_*^2
    \leq 3\|\delta_t\|_*^2 + 3\|\gamma_t\|_*^2 + 3\|\gamma_{t-1}\|_*^2.
\end{align*}
We showed (Lemma \ref{lem:variance-expectation-bound}) that
\begin{align*}
    \|\delta_t\|_* 
    \leq (k+1)(G+H\|w_t-x_{t-1}\|)t^{k-1}
    \leq (k+1)(G+DH)t^{k-1}.
\end{align*}
Also recall the bound of $\E[\|\gamma_t\|_*^2]$ in \eqref{eq:gamma-bound}, so:
\begin{align*}
    \E[\|\bar{g}_t-\hat{g}_t\|_*^2]
    &\leq 3(k+1)^2(G+DH)^2t^{2k-2} + \frac{48(k+1)^2V(G+DH)^2}{\lambda\rho^2}\log_2^2(2T)t^{2k-2} \\
    &= 3(k+1)^2(G+DH)^2t^{2k-2}\left(1+\frac{16V\log_2^2(2T)}{\lambda\rho^2}\right).
\end{align*}
Recall that $\E[\reg_T(x^*)] \leq O(\E[D\sqrt{\sum_{t=1}^T\|\bar g_t-\hat g_t\|_*^2}])$. By Jensen's inequality,
\begin{align*}
    \E\left[D\sqrt{\sum_{t=1}^T\|\Bar{g}_{t}-\hat{g}_t\|_*^2}\right] 
    &\leq D\sqrt{\sum_{t=1}^T \E[\|\Bar{g}_t-\hat{g}_t\|_*^2]} \\
    &\leq \sqrt{3}(k+1)D(G+DH)\left(1+\frac{4\sqrt{V}\log_2(2T)}{\sqrt{\lambda}\rho}\right)T^{k-1/2}.
\end{align*}
Finally, dividing this bound by $\weight_{1:T}\geq T^{k+1}/(k+1)$ completes the proof.
\end{proof}

\section{Proofs for the Strongly Convex Case (Section \ref{sec:strongly-convex})}
\label{app:strongly-convex}

\begin{lemma}
\label{lem:basic-lemma-sc}
For any sequence $\weight_t>0, g_t\in\RR^d$, suppose an online learner predicts $w_t$ and receives $t$-th loss $\ell_t(w)=\langle g_t,w\rangle$, and define $x_t=\sum_{i=1}^t\frac{\weight_iw_i}{\weight_{1:t}}$. If $\L$ is $\mu$-strongly convex w.r.t. $\|\cdot\|$, then
\begin{align*}
    \weight_{1:T}(\L(x_T) - \L(x^*)) \leq \reg_T(x^*) + \sum_{t=1}^T \left( \langle \weight_t \nabla\L(x_t) - g_t, w_t-x^* \rangle - \frac{\weight_t\mu}{2}\|x_t-x^*\|^2 \right).
\end{align*}
\end{lemma}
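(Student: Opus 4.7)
The plan is to mirror the proof of Theorem \ref{lem:basic-lemma}, replacing the single application of convexity at the start with the stronger $\mu$-strong convexity inequality, which automatically carries along the negative quadratic term. Concretely, $\mu$-strong convexity of $\L$ at $x_t$ (with $x^*$ as the comparator) gives
\[
\L(x_t) - \L(x^*) \leq \langle \nabla\L(x_t), x_t - x^*\rangle - \frac{\mu}{2}\|x_t - x^*\|^2.
\]
Multiplying by $\weight_t$ and summing over $t=1,\dots,T$ produces the $-\sum_t \frac{\weight_t\mu}{2}\|x_t-x^*\|^2$ term on the right-hand side essentially for free, and leaves us to bound $\sum_t \weight_t \langle \nabla\L(x_t), x_t - x^*\rangle$ by the regret plus the variance term.

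From here the argument is verbatim the proof of Theorem \ref{lem:basic-lemma}. I would split $x_t - x^* = (x_t - w_t) + (w_t - x^*)$ and use the defining identity $\weight_t(x_t - w_t) = \weight_{1:t-1}(x_{t-1} - x_t)$, followed by convexity of $\L$ in the form
\[
\langle \nabla\L(x_t),\, \weight_{1:t-1}(x_{t-1} - x_t)\rangle \leq \weight_{1:t-1}\bigl(\L(x_{t-1}) - \L(x_t)\bigr),
\]
so that the corresponding sum telescopes. Transferring $\sum_t \weight_t \L(x_t)$ to the left, the telescope collapses to $\weight_{1:T} \L(x_T)$ on the left-hand side, exactly as in the base lemma.

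For the remaining term, I would insert and subtract $g_t$ to write $\sum_t \weight_t \langle \nabla\L(x_t), w_t - x^*\rangle = \sum_t \langle g_t, w_t - x^*\rangle + \sum_t \langle \weight_t \nabla\L(x_t) - g_t, w_t - x^*\rangle$, and identify the first sum with $\reg_T(x^*)$ by definition of the linear losses $\ell_t(w)=\langle g_t,w\rangle$. Assembling the three pieces — the telescoped $\weight_{1:T}\L(x_T)$ on the left, the regret and linear error on the right, and the carried-over $-\sum_t \frac{\weight_t\mu}{2}\|x_t - x^*\|^2$ — yields exactly the claimed inequality.

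There is no genuine obstacle: the only new ingredient relative to Theorem \ref{lem:basic-lemma} is the first strong convexity step, and everything downstream is an unchanged rearrangement. The only thing worth double-checking is that the quadratic term involves $\|x_t - x^*\|$ (as stated) rather than $\|w_t - x^*\|$ — this is consistent because strong convexity is applied at the averaged iterate $x_t$, not at the online learner's play $w_t$, so no Jensen-type conversion between the two norms is needed.
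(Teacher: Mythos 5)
Your proposal is correct and follows essentially the same route as the paper's own proof: apply the $\mu$-strong convexity lower bound at each averaged iterate $x_t$ (which carries the $-\tfrac{\weight_t\mu}{2}\|x_t-x^*\|^2$ terms along for free), then re-run the telescoping argument from Theorem~\ref{lem:basic-lemma} unchanged. Your closing observation that the quadratic term is in $\|x_t - x^*\|$ rather than $\|w_t-x^*\|$, because strong convexity is invoked at $x_t$, is exactly the right point to verify and matches the paper.
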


\begin{proof}
We start with the strong convexity identity $\L(x^*)\geq\L(x_t)+\langle\nabla\L(x_t),x^*-x_t\rangle + \frac{\mu}{2}\|x_t-x^*\|^2$:
\begin{align}
    \sum_{t=1}^T\weight_t(\L(x_t) - \L(x^*))
    &\leq \sum_{t=1}^T \weight_t\langle \nabla\L(x_t),x_t-x^*\rangle - \frac{\weight_t\mu}{2}\|x_t-x^*\|^2.
    \label{eq:sc-1}
\end{align}
With the same argument in the proof of Lemma \ref{lem:basic-lemma}, we can show:
\begin{align*}
    \weight_t\langle \nabla\L(x_t),x_t-x^*\rangle
    &= \weight_t \langle \nabla\L(x_t), x_t-w_t \rangle + \weight_t \langle \nabla\L(x_t), w_t-x^*\rangle 
    \intertext{Recall the definition that $\weight_{1:t}x_t = \weight_{1:t-1}x_{t-1}+\weight_tw_t$ and thus $\weight_t(x_t-w_t)=\weight_{1:t-1}(x_{t-1}-x_t)$. Also, since $\L$ is convex, $\langle \nabla\L(x_t),x_{t-1}-x_t\rangle \leq \L(x_{t-1})-\L(x_t)$, so:}
    &\leq \weight_{1:t-1}\L(x_{t-1}) - \weight_{1:t-1}\L(x_t) + \langle \weight_t\nabla\L(x_t) - g_t + g_t,w_t-x^*\rangle.
\end{align*}
Consequently, moving $\sum_{t=1}^T \weight_t\L(x_t)$ to the right side and taking the telescopic sum in \eqref{eq:sc-1} gives:
\begin{align*}
    -\weight_{1:T}\L(x^*)
    &\leq \sum_{t=1}^T \weight_t\langle \nabla\L(x_t),x_t-x^*\rangle -\weight_t\L(x_t) - \frac{\weight_t\mu}{2}\|x_t-x^*\|^2 \\
    &\leq -\weight_{1:T}\L(x_T) + \reg_T(x^*) + \sum_{t=1}^T \langle\weight_t\nabla\L(x_t)-g_t,w_t-x^*\rangle - \frac{\weight_t\mu}{2}\|x_t-x^*\|^2.
\end{align*}
Moving $\weight_{1:T}\L(x_T)$ to the left completes the proof.
\end{proof}

This lemma immediately implies Lemma \ref{lem:strongly-convex-regularized-loss}.

\LemmaSCBasic*

\begin{proof}
By definition, $\bar{\ell}_t(w)=\ell_t(w) + \frac{\weight_t\mu}{4}\|w-x_t\|^2$, so
\begin{align*}
    \overline \reg_T(x^*) 
    &= \sum_{t=1}^T \left( \ell_t(w_t)+\frac{\weight_t\mu}{4}\|w_t-x_t\|^2\right) - \left( \ell_t(x^*) + \frac{\weight_t\mu}{4}\|x^*-x_t\|^2 \right) \\
    &= \reg_T(x^*) + \sum_{t=1}^T \frac{\weight_t\mu}{4}(\|w_t-x_t\|^2-\|x_t-x^*\|^2).
\end{align*}
Upon substituting this equation into Lemma \ref{lem:basic-lemma-sc}, we get:
\begin{align*}
    &\quad \ \weight_{1:T}(\L(x_T) - \L(x^*)) \\
    &\leq \overline\reg_T(x^*) -\sum_{t=1}^T \frac{\weight_t\mu}{4}(\|w_t-x_t\|^2-\|x_t-x^*\|^2) \\
    &+ \sum_{t=1}^T \langle \weight_t \nabla\L(x_t)-g_t-\gamma_t, w_t-x^*\rangle - \frac{\weight_t\mu}{2} \|x_t-x^*\|^2 \\
    &\leq \overline\reg_T(x^*) +\sum_{t=1}^T \langle \weight_t\nabla\L(x_t)-g_t-\gamma_t, w_t-x_t\rangle - \frac{\weight_t\mu}{4}(\|w_t-x_t\|^2+\|x_t-x^*\|^2) \\
    &\leq \overline\reg_T(x^*) +\sum_{t=1}^T \langle\weight_t\nabla\L(x_t)-g_t-\gamma_t, w_t-x^*\rangle - \frac{\weight_t\mu}{8}\|w_t-x^*\|^2.
\end{align*}
The last inequality follows from the identity $\|w_t-x^*\|^2 \leq 2\|w_t-x_t\|^2 + 2\|x_t-x^*\|^2$.

For the second inequality in the lemma, by Fenchel-Young's inequality,
\begin{align*}
    &\langle\weight_t\nabla\L(x_t)-g_t-\gamma_t, w_t-x^*\rangle - \frac{\weight_t\mu}{8}\|w_t-x^*\|^2 \\
    \leq &\|\weight_t\nabla\L(x_t)-g_t-\gamma_t\|_*\|w_t-x^*\|-\frac{\weight_t\mu}{8}\|w_t-x^*\|^2 \\
    \intertext{For any quadratic of form $ax-bx^2$ and $a,b>0$, note that $\sup_x ax-bx^2\leq a^2/4b$, so:}
    \leq & \frac{2\|\weight_t\nabla\L(x_t)-g_t-\gamma_t\|_*^2}{\weight_t\mu}.
\end{align*}
\end{proof}

\begin{proposition}
\label{prop:subdiff-chain-rule}
Suppose $W$ is a convex bounded domain with diameter $D$, and let $u\in W$ and $f(w)=\|w-u\|^2$. Then for all $w\in W$ and $v\in \partial f(w)$, $\|v\|_*\leq 2D$.
\end{proposition}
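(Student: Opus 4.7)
The plan is to unpack the defining subgradient inequality and convert it into a direction-wise upper bound on $v$, then pass to the dual norm. Since $v \in \partial f(w)$ with $f(y) = \|y - u\|^2$, the subgradient inequality reads $\|y - u\|^2 \ge \|w - u\|^2 + \langle v, y - w\rangle$ for every $y \in \RR^d$.

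Next, I would probe this inequality along arbitrary directions. Fix $h$ with $\|h\| \le 1$ and $t > 0$, and take $y = w + t h$. The triangle inequality gives $\|w + t h - u\| \le \|w - u\| + t\|h\| \le \|w - u\| + t$, which after squaring becomes
\begin{align*}
\|w + t h - u\|^2 \le \|w - u\|^2 + 2t\|w - u\| + t^2.
\end{align*}
Substituting into the subgradient inequality and dividing by $t$ yields $\langle v, h\rangle \le 2\|w - u\| + t$, and sending $t \to 0^+$ gives $\langle v, h\rangle \le 2\|w - u\|$.

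Finally, taking the supremum over all $h$ with $\|h\| \le 1$ and using $\|v\|_* = \sup_{\|h\|\le 1}\langle v, h\rangle$ yields $\|v\|_* \le 2\|w - u\|$. Because $w, u \in W$ and $W$ has diameter $D$ by Assumption~\ref{as:bounded-domain}, we have $\|w - u\| \le D$, and the claimed bound $\|v\|_* \le 2D$ follows. I do not foresee any serious obstacle; the argument is a short manipulation of the defining subgradient inequality, and it deliberately sidesteps having to state a subdifferential chain rule for $t \mapsto t^2$ composed with $\|\cdot - u\|$, so it works uniformly in whichever norm $\|\cdot\|$ has been fixed in the preliminaries.
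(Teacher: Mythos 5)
Your proof is correct, and it takes a genuinely different (and somewhat cleaner) route than the paper. The paper invokes the subdifferential chain rule (Corollary 16.72 of Bauschke and Combettes, 2011) to write $\partial f(w) = \{2\|w-u\|\,v' : v'\in\partial\|w-u\|\}$ and then uses the $1$-Lipschitzness of $\|\cdot\|$ to conclude $\|v'\|_*\le 1$. Your argument instead probes the defining subgradient inequality directly along perturbations $y = w + th$, uses the triangle inequality to bound $\|w+th-u\|^2$, and takes $t\to 0^+$ to obtain $\langle v,h\rangle \le 2\|w-u\|$ for all $\|h\|\le 1$. What you gain is self-containedness: no citation to an external chain-rule theorem is needed, and the argument works verbatim for any norm fixed in the preliminaries. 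What the paper's approach buys is an exact description of the subdifferential (as a scaled unit-ball rather than merely a bound), which could be useful elsewhere but is not needed here. Both arrive at the same inequality $\|v\|_*\le 2\|w-u\|\le 2D$, so either serves the statement.
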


\begin{proof}
Let $\phi(r)=r^2$ and $g(w)=\|w-u\|$, then $f(w) = \phi\circ g(w)$. By chain rule of sub-differentials (Corollary 16.72 \cite{bauschke2011convex}), 
\begin{align*}
    \partial f(w)
    &= \{\alpha v' : \alpha\in \partial \phi(g(w)), v'\in\partial g(w) \} \\
    &= \{2\|w-u\| v' : v'\in\partial \|w-u\|\}.
\end{align*}
By assumption, $\|w-u\|\leq D$. Moreover, $\|\cdot\|$ is $1$-Lipschitz (because $\|x\|-\|y\|\leq \|x-y\|$), so $\|v'\|_*\leq 1$ for all $v'\in\partial \|w-u\|$. As a result, for all $v\in \partial f(w)$, $\|v\|_* = 2\|w-u\|\|v'\|_* \leq 2D$.
\end{proof}
\section{Proofs for the Parameter-free Case (Section \ref{sec:parameter-free})}
\label{app:parameter-free}

\begin{definition}
A random vector $X\in\RR^d$ is said to be $\sigma$-norm-sub-Gaussian, denoted by $\nSG(\sigma)$ if
\begin{align*}
    \P\{\|X-\E[X]\|_2 \geq \epsilon\} \leq 2\exp\left(-\frac{\epsilon^2}{2\sigma^2}\right), \ \forall \epsilon.
\end{align*}
\end{definition}

We will rely on the following concentration bound on norm-sub-Gaussian random vectors.

\begin{lemma}[Lemma 1, \cite{jin2019short}]
\label{lem:normSG}
There exists a universal $C$ such that (i) if $\|X\|\leq \sigma$, then $X$ is $\nSG(C\sigma)$ nad (ii) if $X$ is $\sigma$-sub-Gaussian, then $X$ is $\nSG(C\sqrt{d}\sigma)$. 
\end{lemma}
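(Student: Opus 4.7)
The plan is to handle the two parts separately, since they rely on very different ingredients: part (i) follows essentially from the triangle inequality and the almost-sure boundedness of $\|X\|$, while part (ii) is a standard covering-number argument that converts a directional tail bound into a full norm tail bound at the cost of a $\sqrt{d}$ factor in the sub-Gaussian parameter.

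For part (i), I would observe that $\|X\|\le \sigma$ almost surely together with Jensen's inequality gives $\|\E[X]\|\le \E[\|X\|]\le \sigma$, hence $\|X-\E[X]\|\le 2\sigma$ almost surely. Thus the event $\{\|X-\E[X]\|\ge \epsilon\}$ is empty for $\epsilon>2\sigma$ (and the desired $\nSG$ bound trivially holds), while for $\epsilon\le 2\sigma$ the probability is at most one, which is absorbed by the target bound $2\exp(-\epsilon^2/(2C^2\sigma^2))$ provided $C$ is at least a small universal constant (for example $C=2$ already gives $2\exp(-\epsilon^2/(8\sigma^2))\ge 2\exp(-1/2)>1$ on the range $\epsilon\le 2\sigma$).

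For part (ii), I would run a standard $(1/2)$-net argument on the Euclidean unit sphere $S^{d-1}$. Fix a $(1/2)$-net $\N\subset S^{d-1}$ of cardinality $|\N|\le 5^d$, and recall the standard approximation inequality
\begin{align*}
    \|X-\E[X]\|_2 = \sup_{\|v\|_2=1}\langle X-\E[X],v\rangle \le 2\max_{v\in \N}\langle X-\E[X],v\rangle.
\end{align*}
By the classical definition of a $\sigma$-sub-Gaussian random vector, for each unit $v$ the scalar $\langle X-\E[X],v\rangle$ is mean-zero and $\sigma$-sub-Gaussian, so it satisfies $\P(|\langle X-\E[X],v\rangle|\ge t)\le 2\exp(-t^2/(2\sigma^2))$. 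A union bound over $\N$ combined with the approximation inequality then yields
\begin{align*}
    \P(\|X-\E[X]\|_2 \ge \epsilon) \le 2\cdot 5^d \exp(-\epsilon^2/(8\sigma^2)).
\end{align*}

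The remaining task is to dominate this upper bound by $2\exp(-\epsilon^2/(2C^2 d\sigma^2))$ with a single universal $C$, which is where I expect the main (though elementary) difficulty to lie. I would split into two regimes. When $\epsilon$ is sufficiently large compared with $\sigma\sqrt{d}$, the combinatorial factor $5^d=\exp(d\log 5)$ is absorbed by a constant fraction of the exponent $\epsilon^2/(8\sigma^2)$, while the remaining fraction supplies the target decay $\epsilon^2/(2C^2 d\sigma^2)$; this pins down an admissible value of $C$ depending only on $\log 5$. When $\epsilon$ is small compared with $\sigma\sqrt{d}$, the target bound $2\exp(-\epsilon^2/(2C^2 d\sigma^2))$ is itself at least $1$ once $C$ is taken large enough relative to the small-$\epsilon$ threshold, so the inequality holds trivially. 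The main obstacle is just the constant-juggling needed to choose one universal $C$ that closes both regimes simultaneously (and is also at least as large as the constant required in part (i)).
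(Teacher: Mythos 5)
The paper does not prove this lemma; it is imported verbatim from \cite{jin2019short} (their Lemma 1), so there is no in-paper proof for your argument to be compared against. Your proposal is a correct and self-contained derivation of both parts, and it is the standard argument one would give.

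Two small points are worth being precise about. In part (i), you rely on $\|X-\E[X]\|\le 2\sigma$ almost surely; the factor $2$ from centering does need to end up in $C$, and your observation that $2\exp(-\epsilon^2/(2C^2\sigma^2))\ge 1$ whenever $\epsilon\le 2\sigma$ and $C\ge 2$ indeed closes this case. In part (ii) the ``constant-juggling'' you defer to the end does go through: writing $\log\bigl(5^d\exp(-\epsilon^2/(8\sigma^2))\bigr)=d\log 5-\epsilon^2/(8\sigma^2)$, the regime $\epsilon^2\ge 16 d\sigma^2\log 5$ gives $d\log5\le\epsilon^2/(16\sigma^2)$ and hence a residual exponent $-\epsilon^2/(16\sigma^2)\le -\epsilon^2/(2C^2 d\sigma^2)$ once $C^2\ge 8$; and in the complementary regime $\epsilon^2<16d\sigma^2\log5$ one has $\epsilon^2/(2C^2 d\sigma^2)<8\log5/C^2\le\log 2$ once $C^2\ge 8\log5/\log2$, so the target bound exceeds $1$ and is trivially valid. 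Taking, say, $C=5$ covers both parts simultaneously, matching your claim of a single universal constant. One notational caveat: the definition of $\sigma$-sub-Gaussian written in Section~\ref{sec:parameter-free} has a spurious $\sup$ inside the probability (which would make it a norm tail bound directly, with no $\sqrt d$); your proof correctly uses the intended directional definition, under which each one-dimensional marginal $\langle X-\E[X],v\rangle$ is $\sigma$-sub-Gaussian, which is also the reading that makes the paper's subsequent use of the lemma (and the claim that $\mathcal N(0,I)$ is $1$-sub-Gaussian) consistent.
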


\begin{lemma}[Corollary 8, \cite{jin2019short}]
\label{lem:nSG-concentration}
There exists a universal constant $C$ such that if $X_i|X_{1:i-1}$ is mean-zero $\nSG(\sigma_i)$ for all $X_1,\ldots,X_t$, then for any fixed $\delta>0$ and $B>b>0$ such that $b<\sum_{i=1}^t \sigma_i^2 \leq B$ almost surely, with probability at least $1-\delta$,
\begin{align*}
    \left\|\sum_{i=1}^t X_i\right\|_2 \leq C\sqrt{\sum_{i=1}^t \sigma_i^2\left(\log\frac{2d}{\delta}+\log\log\frac{B}{b} \right)}.
\end{align*}
\end{lemma}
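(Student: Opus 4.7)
The plan is a two-step strategy: first a concentration bound for the case when $\sum_i\sigma_i^2$ is deterministic, then a peeling argument over the random variance to handle the stated form.

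First, I would prove the fixed-variance analog: if $V=\sum_i\sigma_i^2$ is a fixed constant, then $\P\{\|\sum_i X_i\|_2\geq C_0\sqrt{V\log(2d/\delta)}\}\leq\delta$. This is a Pinelis/Freedman-type martingale concentration for vectors with norm-sub-Gaussian increments. The cleanest approach is to use the $\nSG(\sigma_i)$ hypothesis to obtain an MGF bound $\E[\exp(\lambda\langle u,X_i\rangle)\mid X_{1:i-1}]\leq\exp(c\lambda^2\sigma_i^2)$ for any unit vector $u$ (since $|\langle u,X_i\rangle|\leq\|X_i\|_2$ is scalar sub-Gaussian), then embed the vector concentration into a matrix problem (for instance via the symmetric $(d+1)\times(d+1)$ block matrix with $X_i$ off-diagonal, whose operator norm equals $\|X_i\|_2$) and invoke a matrix Freedman inequality. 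The $\log(2d/\delta)$ dependence arises from the dimension term in matrix concentration, which is far more efficient than a naive $\epsilon$-net over the unit sphere, which would yield a $\sqrt{d}$ factor instead.

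Second, I would handle the random $V_t=\sum_{i\leq t}\sigma_i^2$ by stratification. Let $k=\lceil\log_2(B/b)\rceil$ and partition $[b,B]$ into geometric intervals $I_j=(2^{j-1}b,\,2^j b]$, $j=1,\dots,k$. Define the stopping time $\tau_j=\inf\{s:V_s>2^{j-1}b\}\wedge t$; on $\{V_t\in I_j\}$ one has $V_{\tau_j}\leq 2^j b\leq 2V_t$. Apply the fixed-variance bound at the stopped index $\tau_j$ with failure probability $\delta/k$, then union-bound over $j=1,\dots,k$ to obtain
\[
\Bigl\|\sum_{i=1}^t X_i\Bigr\|_2 \leq C_0\sqrt{2V_t\,\log(2dk/\delta)} \leq C\sqrt{V_t\bigl(\log(2d/\delta)+\log\log(B/b)\bigr)},
\]
after absorbing $\log k\leq \log\log(B/b)+O(1)$ into the universal constant $C$.

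The main technical obstacle is step 1: obtaining the dimension-efficient $\sqrt{\log d}$ scaling (rather than $\sqrt{d}$) requires the matrix-Freedman machinery or an equivalent exponential-supermartingale argument that exploits the squared-norm moment bound directly, rather than a coarser projection-and-cover approach. Step 2 is then a routine geometric peeling that converts the fixed-variance concentration into the variance-adaptive form at the cost of the additive $\log\log(B/b)$ term inside the square root.
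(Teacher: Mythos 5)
The paper does not prove this lemma; it is imported verbatim as Corollary~8 of \cite{jin2019short}, so your proposal is effectively a reconstruction of that reference's proof. Your high-level structure (Step~1: a fixed-budget norm-sub-Gaussian vector martingale bound via Hermitian dilation plus matrix concentration; Step~2: geometric peeling over the random variance to gain only a $\log\log(B/b)$ penalty) is indeed the route taken there, and Step~1 is sound in outline. One caveat on Step~1: the vanilla matrix Freedman inequality requires an almost-sure bound on $\|A_i\|_{\mathrm{op}}$, whereas here $\|X_i\|_2$ is only conditionally sub-Gaussian, so you actually need a matrix concentration theorem for sub-Gaussian (not bounded) self-adjoint increments, or a truncation-plus-tail argument; this is precisely the extra work done in the cited note.

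There is, however, a genuine bug in Step~2. You set $\tau_j=\inf\{s:V_s>2^{j-1}b\}\wedge t$ with $V_s=\sum_{i\le s}\sigma_i^2$, and then try to bound $\bigl\|\sum_{i=1}^t X_i\bigr\|_2$ by the concentration applied at the stopped index. But on the event $\{V_t\in I_j\}=\{2^{j-1}b<V_t\le 2^j b\}$ one only gets $\tau_j\le t$, not $\tau_j=t$; typically $\tau_j<t$ (e.g.\ $\tau_j=1$ whenever $\sigma_1^2>2^{j-1}b$). So your fixed-variance bound controls $\bigl\|\sum_{i\le\tau_j}X_i\bigr\|_2$, which says nothing about the full sum $\sum_{i\le t}X_i$ — the tail increments $X_{\tau_j+1},\dots,X_t$ are uncontrolled. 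The fix is to stop at the \emph{upper} threshold of the bucket: e.g.\ $\tau_j=\bigl(\inf\{s:V_{s+1}>2^j b\}\bigr)\wedge t$, which is a stopping time because $\sigma_{s+1}$ is $\F_s$-measurable. Then on $\{V_t\in I_j\}$ one has $V_t\le 2^jb$, hence $\tau_j=t$ and $V_{\tau_j}\le 2^jb<2V_t$, so the fixed-budget bound applied to the stopped martingale genuinely bounds $\bigl\|\sum_{i\le t}X_i\bigr\|_2$. With that repair and a union bound over $j=1,\dots,k=\lceil\log_2(B/b)\rceil$ the claimed $\log\log(B/b)$ term falls out as you computed.
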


Recall that $\weight_t\nabla\L(x_t)-g_t=\sum_{i=1}^t X_i$ (Lemma \ref{lem:variance-expectation-bound}), where
\[
X_i = [\weight_i\nabla\L(x_i)-\weight_{i-1}\nabla\L(x_{i-1})]-[\weight_i\nabla\ell(x_i,z_i)-\weight_{i-1}\nabla\ell(x_{i-1},z_i)];
\]
and $\gamma_t = \sum_{i\in I_t}R_i$, where $R_i=\sigma_i\tilde R_i$ and $R_i\sim \D$ i.i.d. We have the following lemma:

\begin{lemma}
\label{lem:X-R-nSG}
Suppose Assumption \ref{as:bounded-domain} - \ref{as:smooth} hold w.r.t. the 2-norm, and suppose $\D$ is a $(V,\alpha)$-RDP distribution and is $\sigma_\D$-sub-Gaussian, i.e.,
\begin{align*}
    \P\{\sup_{\|a\|=1} \langle X, a\rangle \geq \epsilon \} \leq \exp\left(- \frac{\epsilon^2}{2\sigma_\D^2}\right).
\end{align*}
Also set $\weight_t=t^k$. Then there exists a universal constant $C$ such that $X_i|X_{1:i-1}$ are mean-zero $\nSG(\sigma_{X_i})$ and $R_i|R_{1:i-1}$ are mean-zero $\nSG(\sigma_{R_i})$ for all $i$, where
\begin{align*}
    & \sigma_{X_i} = 2C(k+1)(G+H\|w_i-x_{i-1}\|_2)i^{k-1}, \\
    & \sigma_{R_i} = C\sqrt{d}\sigma_\D \sigma_i.
\end{align*}
\end{lemma}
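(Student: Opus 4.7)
\textbf{Proof proposal for Lemma \ref{lem:X-R-nSG}.} The plan is to treat the $X_i$ and $R_i$ claims separately, reducing each to an application of Lemma \ref{lem:normSG} after verifying (i) a conditional mean-zero property and (ii) a deterministic bound on the conditional magnitude of the vector. Throughout, I would work with the natural filtration $\mathcal{F}_i = \sigma(z_{1:i},\tilde R_{1:i})$, and note that $w_{i+1}$, $x_i$, and $\sigma_{i+1}$ are $\mathcal{F}_i$-measurable (they are all determined by the past losses fed to $\A$, which depend on $z_{1:i}$ and $\tilde R_{1:i}$), while $z_{i+1}$ and $\tilde R_{i+1}$ are independent of $\mathcal{F}_i$.

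For the $X_i$ term, the mean-zero statement is essentially the same as in Lemma \ref{lem:variance-expectation-bound}: writing $X_i=\delta_i^{\L}-\delta_i$ with $\delta_i^{\L}=\beta_i\nabla\L(x_i)-\beta_{i-1}\nabla\L(x_{i-1})$, the fact that $x_i,x_{i-1}$ are $\mathcal{F}_{i-1}$-measurable together with $\nabla\L(x)=\E_{z}[\nabla\ell(x,z)]$ gives $\E[X_i\mid\mathcal{F}_{i-1}]=0$, and the tower property transfers this to conditioning on $X_{1:i-1}$. For the magnitude, I would repeat the algebra in the proof of Lemma \ref{lem:variance-expectation-bound}: decompose $\delta_i=(\beta_i-\beta_{i-1})\nabla\ell(x_{i-1},z_i)+\beta_i(\nabla\ell(x_i,z_i)-\nabla\ell(x_{i-1},z_i))$, apply $G$-Lipschitzness and $H$-smoothness along with $\|x_i-x_{i-1}\|_2\le(\beta_i/\beta_{1:i})\|w_i-x_{i-1}\|_2$, and invoke Proposition~\ref{prop:alphas} to obtain $\|\delta_i\|_2\le(k+1)(G+H\|w_i-x_{i-1}\|_2)i^{k-1}$. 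Since $\L$ inherits the same Lipschitz and smoothness constants, $\|\delta_i^{\L}\|_2$ obeys the same bound, so the triangle inequality yields $\|X_i\|_2\le 2(k+1)(G+H\|w_i-x_{i-1}\|_2)i^{k-1}$ pointwise. Lemma \ref{lem:normSG}(i) then produces the $\nSG(\sigma_{X_i})$ conclusion with the claimed constant.

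For the $R_i$ term, $R_i=\sigma_i\tilde R_i$ with $\sigma_i$ being $\mathcal{F}_{i-1}$-measurable and $\tilde R_i$ independent of $\mathcal{F}_{i-1}$ with distribution $\D$. Definition \ref{def:RDP-dist}(i) gives $\E[\tilde R_i]=0$, hence $\E[R_i\mid R_{1:i-1}]=0$. Since $\D$ is $\sigma_\D$-sub-Gaussian, $\tilde R_i$ is $\sigma_\D$-sub-Gaussian, and conditioning on the past simply rescales: $R_i\mid\mathcal{F}_{i-1}$ is $\sigma_i\sigma_\D$-sub-Gaussian (as a scalar times a sub-Gaussian vector). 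Lemma \ref{lem:normSG}(ii) then upgrades this to $\nSG(C\sqrt{d}\sigma_\D\sigma_i)=\nSG(\sigma_{R_i})$, and another application of the tower property gives the conditional statement with respect to $R_{1:i-1}$.

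The main subtlety is the measurability bookkeeping: one has to verify that $w_i, x_{i-1}, \sigma_i$ are genuinely $\mathcal{F}_{i-1}$-measurable so that the ``random'' factors in $\sigma_{X_i}$ and $\sigma_{R_i}$ are allowed to appear in the norm-sub-Gaussian parameter. This is immediate from the sequential structure of Algorithm \ref{alg:DP-OTB} (the online learner $\A$ sees only losses built from $z_{1:i-1}$ and $\tilde R_{1:i-1}$ before emitting $w_i$), so no serious obstacle arises; after that, both bullets are one-line invocations of Lemma \ref{lem:normSG}.
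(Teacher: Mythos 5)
Your proof is correct and follows essentially the same route as the paper's: establish conditional mean-zero from $\nabla\L(x)=\E_z[\nabla\ell(x,z)]$ and the independence of $\tilde R_i$, then get the pointwise bound $\|X_i\|_2\le 2(k+1)(G+H\|w_i-x_{i-1}\|_2)i^{k-1}$ via the $\delta_i$ decomposition and $\|x_i-x_{i-1}\|\le(\beta_i/\beta_{1:i})\|w_i-x_{i-1}\|$, and finally invoke Lemma~\ref{lem:normSG}(i) for $X_i$ and (ii) for $R_i=\sigma_i\tilde R_i$. The only addition over the paper's terse argument is your explicit filtration bookkeeping showing $w_i,x_{i-1},\sigma_i$ are $\mathcal{F}_{i-1}$-measurable, which is a legitimate (and correct) clarification rather than a different approach.
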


\begin{proof}
Since we assume $\E[\nabla\ell(x,z)]=\nabla\L(x)$ for all $x$, $\E[X_i|X_{1:i-1}]=0$. Also, since $\D$ is a $(V,\alpha)$-RDP distribution (Definition \ref{def:RDP-dist}) and $R_i=\sigma_i\tilde R_i$'s are independent, $\E[R_i|R_{1:i-1}]=\E[\tilde R_i]=0$. 

For the second part, in Lemma \ref{lem:variance-expectation-bound} we proved that
\begin{align*}
    \|\delta_i\|_2 = \|\weight_i\nabla\ell(x_i,z_i)-\weight_{i-1}\nabla\ell(x_{i-1},z_i)\|_2 \leq (k+1)(G+H\|w_i-x_{i-1}\|_2)i^{k-1}.
\end{align*}
The same bound holds for $\weight_i\nabla\L(x_i)-\weight_{i-1}\nabla\L(x_{i-1})$ following the same argument. Therefore,
\[
\|X_i\|_2 \leq 2(k+1)(G+H\|w_i-x_{i-1}\|_2)i^{k-1}.
\]
Moreover, since we assume $\tilde R_i\sim \D$ is $\sigma_\D$-sub-Gaussian, $R_i=\sigma_i\tilde R_i$ is $\sigma_i\sigma_\D$-sub-Gaussian.
Hence, by Lemma \ref{lem:normSG}, $X_i|X_{1:i-1}$ and $R_i|R_{1:i-1}$ are norm-sub-Gaussian.
\end{proof}


\ThmParameterFreeA*

\begin{proof}
We start with Eq. \eqref{eq:basic-decomposition}:
\begin{align}
    \weight_{1:T}(\L(x_T)-\L(x^*))
    &\leq R_T(x^*) + \sum_{t=1}^T\langle \weight_t\nabla\L(x_t)-g_t-\gamma_t,w_t-x^*\rangle \notag\\
    &\leq R_T(x^*) + \sum_{t=1}^T \left(\|\weight_t\nabla\L(x_t)-g_t\|_2+\|\gamma_t\|_2\right) (\|w_t-x^*\|_2). \label{eq:parameter-free-base}
\end{align}

\textbf{Step 1.}
By Lemma \ref{lem:nSG-concentration} and \ref{lem:X-R-nSG}, for each $t$, with probability $1-\delta/2T$,
\begin{align*}
    \|\weight_t\nabla\L(x_t)-g_t\|_2 
    \leq C\sqrt{\sum_{i=1}^t \sigma_{X_i}^2 \left( \log\frac{4dT}{\delta} + \log\log\frac{B}{b}\right)}.
\end{align*}
Since we choose $\weight_t=t^3$ (i.e., $k=3$),
\begin{align*}
    \sigma_{X_i} = 8Ci^2(G+H\|w_i-x_{i-1}\|_2).
\end{align*}
Next, we can bound $\sum_{i=1}^t\sigma_{X_i}^2$ as follows: for all $t$,
\begin{align*}
    \textstyle \sum_{i=1}^t \sigma_{X_i}^2 &\leq B := 64C^2(G+DH)^2T^5, \\
    \textstyle \sum_{i=t}^t \sigma_{X_i}^2 &\geq b := \sigma_{X_1}^2 = 64C^2(G+H\|w_1\|_2)^2.
\end{align*}
Recall that $\kappa = 1+DH/G$, so
\begin{align*}
    \frac{B}{b} = \frac{(G+DH)^2T^5}{(G+H\|w_1\|_2)^2} \leq (\kappa T)^5.
\end{align*}
Also recall that $\Phi=\sqrt{\log(20dT\log(2\kappa T)/\delta)}$, so
\begin{align*}
    \sqrt{\log\frac{4dT}{\delta} + \log\log \frac{B}{b}} \leq \sqrt{\log \frac{20dT\log(\kappa T)}{\delta}} \leq \Phi.
\end{align*}
Therefore, with probability at least $1-\delta/2T$,
\begin{align}
    \|\weight_t \nabla\L(x_t) - g_t\|_2 
    &\leq C\Phi \sqrt{\sum_{i=1}^t [8Ci^2(G+H\|w_i-x_{i-1}\|_2)]^2} \notag\\
    &\leq 8C^2\Phi \sqrt{\sum_{i=1}^t i^4(G+H\|w_i-x_{i-1}\|_2)^2}.
    \label{eq:variance-concentration}
\end{align}
By union bound, with probability at least $1-\delta/2$,
\begin{align}
    &\sum_{t=1}^T \|\weight_t\nabla\L(x_t)-g_t\|_2\|w_t-x^*\|_2 \notag\\
    \leq & 8C^2\Phi \sum_{t=1}^T \sqrt{\sum_{i=1}^t i^4(G+H\|w_i-x_{i-1}\|_2)^2} \|w_t-x^*\|_2 \notag
    \intertext{We use the identity $(a+b)^2\leq 2a^2+2b^2$ and $\sqrt{a+b}\leq \sqrt{a}+\sqrt{b}$:}
    \leq & 8C^2\Phi \sum_{t=1}^T \left(\sqrt{\sum_{i=1}^t 2G^2i^4} + \sqrt{\sum_{i=1}^t 2H^2\|w_i-x_{i-1}\|_2^2i^4}\right) \|w_t-x^*\|_2 \label{eq:step-1}
\end{align}
\textbf{1.1.} We bound these two sums separately. For the first sum,
recall that $A=8\sqrt{2}C^2$, so
\begin{align*}
    8C^2\sum_{t=1}^T \sqrt{\sum_{i=1}^t 2G^2i^4}\|w_t-x^*\|_2 
    \leq AG\sum_{t=1}^T t^{5/2}(\|w_t\|_2+\|x^*\|_2).
\end{align*}

\textbf{1.2.}
For the second sum, we apply Young's inequality ($ab\leq \frac{1}{2\lambda}a^2+\frac{\lambda}{2}b^2$) for each $t$:
\begin{align}
    &8C^2\sum_{t=1}^T\sqrt{\sum_{i=1}^t2H^2\|w_i-x_{i-1}\|_2^2i^4}\|w_t-x^*\|_2 \notag\\
    \leq & AH\sum_{t=1}^T\frac{1}{2\lambda_t}\sum_{i=1}^t (\|w_i-x_{i-1}\|_2^2i^4) + \frac{\lambda_t}{2}\|w_t-x^*\|_2^2 \notag
    \intertext{We first bound $\|w_i-x_{i-1}\|_2^2 \leq 2\|w_i\|_2^2+2\|x_{i-1}\|_2^2$. Recall that $x_0=0$ and for $i\geq 2$, $x_{i-1}=\sum_{j=1}^{i-1} \frac{\weight_j}{\weight_{1:{i-1}}}w_j$, so $\|x_{i-1}\|_2^2\leq \sum_{j=1}^{i-1}\frac{\weight_j}{\weight_{1:i-1}}\|w_j\|_2^2$ by convexity. Consequently,}
    \leq & AH\sum_{t=1}^T \left( \frac{1}{\lambda_t}\sum_{i=1}^t i^4\|w_i\|_2^2 + \frac{1}{\lambda_t}\sum_{i=2}^t i^4\sum_{j=1}^{i-1} \frac{\weight_j\|w_j\|_2^2}{\weight_{1:i-1}} + \lambda_t(\|w_t\|_2^2+\|x^*\|_2^2) \right). \label{eq:step-1.2}
\end{align}
We define $\lambda_t = ct^{5/2}$ for some constant $c$ to be determined later, and we apply change of summation on the first two sums:
\begin{restatable}{lemma}{LemmaChangeSum}
\label{lem:change-sum}
For any sequence $a_i,b_j,c_k$,
\begin{align*}
    \sum_{i=1}^N a_i \sum_{j=1}^i b_j = \sum_{i=1}^N b_i \sum_{j=i}^N a_j,
    \quad \textit{and} \quad 
    \sum_{i=1}^N a_i \sum_{j=1}^i b_j \sum_{k=1}^j c_k = \sum_{i=1}^N c_i \sum_{j=i}^N a_j \sum_{k=i}^j b_k.
\end{align*}
\end{restatable}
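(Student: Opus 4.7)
\medskip

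\textbf{Proof proposal for Lemma \ref{lem:change-sum}.} Both identities are purely combinatorial rearrangements of finite sums (a discrete Fubini), so the plan is simply to interpret each side as a sum over the same index set and swap the order of summation. There is no genuine obstacle here; the only care needed is tracking the inequality constraints on the indices.

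For the first identity, I would rewrite
\begin{align*}
\sum_{i=1}^N a_i \sum_{j=1}^i b_j = \sum_{(i,j):\, 1\le j\le i\le N} a_i b_j.
\end{align*}
Now I hold $j$ fixed and let $i$ range over $\{j, j+1, \dots, N\}$, giving $\sum_{j=1}^N b_j \sum_{i=j}^N a_i$. Relabeling $(j,i)\mapsto(i,j)$ yields exactly $\sum_{i=1}^N b_i \sum_{j=i}^N a_j$, as claimed.

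For the second identity, the same recipe applies with three indices. I would write
\begin{align*}
\sum_{i=1}^N a_i \sum_{j=1}^i b_j \sum_{k=1}^j c_k = \sum_{(i,j,k):\, 1\le k\le j\le i\le N} a_i b_j c_k,
\end{align*}
then pull $k$ to the outermost position and $j$ to the middle. Since the constraint is $k\le j\le i$ with $k\ge 1$ and $i\le N$, holding $k$ fixed the pair $(j,i)$ ranges over $k\le j\le i\le N$, and holding also $j$ fixed, $i$ ranges over $\{j, \dots, N\}$. This gives
\begin{align*}
\sum_{k=1}^N c_k \sum_{j=k}^N b_j \sum_{i=j}^N a_i.
\end{align*}
The right-hand side of the claimed identity is $\sum_{i=1}^N c_i \sum_{j=i}^N a_j \sum_{k=i}^j b_k$, whose index set is $\{(i,j,k): 1\le i\le k\le j\le N\}$ with summand $c_i a_j b_k$. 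Relabeling $(i,j,k)\mapsto (k',i',j')$ and noting the constraint becomes $1\le k'\le j'\le i'\le N$ with summand $c_{k'} a_{i'} b_{j'}$, I see both expressions sum the same function over the same set of triples, completing the proof.

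The only ``step'' that requires attention is the index bookkeeping in the triple sum, specifically verifying that the outer/inner ranges $j\in[i,N]$ and $k\in[i,j]$ on the target side translate (after relabeling) to the natural chain $1\le k\le j\le i\le N$ on the source side; everything else is immediate.
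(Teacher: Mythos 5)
Your proof is correct. For the first identity both you and the paper do the same thing: regroup the pairs $\{(i,j):1\le j\le i\le N\}$ by the other index. For the second identity the two arguments diverge a bit. You identify both sides directly as the sum of $a_ib_jc_k$ over the simplex $\{1\le k\le j\le i\le N\}$ by reading off the index constraints and relabeling, which is a single clean discrete-Fubini step. The paper instead first applies the two-index swap to the inner pair to rewrite the left side as $\sum_i a_i\sum_{j\le i}c_j\sum_{j\le k\le i}b_k$, introduces partial-sum notation $B_j^i=\sum_{k=j}^i b_k$, expands $B_j^i=B_j^N-B_{i+1}^N$, and then applies the two-index swap again to the outer pair before recombining into $B_i^j$. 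Both routes are elementary and correct; yours is more direct and avoids the auxiliary partial-sum bookkeeping, while the paper's makes the second identity a literal double application of the first, which is a slightly more mechanical (if longer) way to the same place.
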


\textbf{1.2.1.}  For the first summation,
\begin{align*}
    \sum_{t=1}^T \frac{1}{\lambda_t} \sum_{i=1}^t i^4\|w_i\|_2^2
    &= \sum_{t=1}^T\sum_{i=t}^T \frac{1}{\lambda_i} t^4\|w_t\|_2^2
    \intertext{For decreasing function $f$, $\sum_{i=t+1}^T f(i)\leq \int_t^T f(x)\,dx$, then:}
    &\leq \sum_{t=1}^T \left( \frac{1}{ct^{5/2}} + \int_t^\infty \frac{1}{cx^{5/2}}\,dx \right)t^4\|w_t\|_2^2
    \leq \sum_{t=1}^T \frac{5}{3c}t^{5/2}\|w_t\|_2^2.
\end{align*}
\textbf{1.2.2.} For the second term, by Proposition \ref{prop:alphas}, $\weight_{1:i-1}\geq (i-1)^4/4$, so
\begin{align*}
    \sum_{t=1}^T \frac{1}{\lambda_t} \sum_{i=2}^t i^4\sum_{j=1}^{i-1} \frac{\weight_j\|w_j\|_2^2}{\weight_{1:i-1}} 
    &\leq \sum_{t=1}^T \frac{1}{ct^{5/2}}\sum_{i=2}^{t} \frac{4i^4}{(i-1)^4} \sum_{j=1}^{i} j^3\|w_j\|_2^2 
    \intertext{For all $i\geq 2$, we can bound $i/(i-1)\leq 2$. We then apply change of summation, which gives:}
    &\leq \sum_{t=1}^T\sum_{i=t}^T\frac{1}{ci^{5/2}}\sum_{j=t}^i 64t^3\|w_t\|_2^2
    \leq \sum_{t=1}^T \frac{192}{c} t^{5/2}\|w_t\|_2^2.
\end{align*}
The last inequality is again derived from the integral bound:
\begin{align*}
    \sum_{i=t}^T \frac{1}{i^{5/2}}\sum_{j=t}^i 1 \leq \sum_{i=t}^T \frac{1}{i^{3/2}}
    \leq \frac{1}{t^{3/2}} + \int_t^\infty \frac{1}{x^{3/2}}\,dx
    \leq \frac{3}{t^{1/2}}.
\end{align*}
In conclusion, upon substituting \textbf{1.2.1.} and \textbf{1.2.2.} into \eqref{eq:step-1.2} and setting $c=14$, we get:
\begin{align*}
    &8C^2\sum_{t=1}^T\sqrt{\sum_{i=1}^t2H^2\|w_i-x_{i-1}\|_2^2i^4}\|w_t-x^*\|_2 \\
    \leq & AH \sum_{t=1}^T \left(\frac{5}{3c}t^{5/2}\|w_t\|_2^2 + \frac{192}{c}t^{5/2}\|w_t\|_2^2 + ct^{5/2}(\|w_t\|_2^2+\|x^*\|_2^2) \right) \\
    \leq & AH \sum_{t=1}^T 28 t^{5/2}(\|w_t\|_2^2+\|x^*\|_2^2).
\end{align*}
Moreover, upon substituting \textbf{1.1.} and \textbf{1.2.} into \eqref{eq:step-1}, we get: with probability at least $1-\delta/2$,
\begin{align*}
    &\sum_{t=1}^T\|\weight_t\nabla\L(x_t)-g_t\|_2\|w_t-x^*\|_2 \\
    \leq & A\Phi\sum_{t=1}^T  Gt^{5/2}(\|w_t\|_2+\|x^*\|_2) + 28Ht^{5/2} (\|w_t\|_2^2+\|x^*\|_2^2).
\end{align*}

\textbf{Step 2:} We can bound $\sum_{t=1}^T\|\gamma_t\|_2\|w_t-x^*\|_2$ in a similar way. By Lemma \ref{lem:X-R-nSG} and definition of $\sigma_t$ in \eqref{eq:RDP-noise}, $R_i|R_{1:i-1}$ is mean-zero $\nSG(\sigma_{R_i})$, where
\begin{align*}
    \sigma_{R_i} 
    &= C\sqrt{d}\sigma_\D \sigma_i
    = \frac{8\sqrt{d}\sigma_\D C}{\rho}   \sqrt{\log_2(2T)}(G+H\max_{j\in[i]}\|w_j-x_{j-1}\|_2)i^2.
\end{align*}
Next, we can bound $\sum_{i\in I_t}\sigma_{R_i}^2$ as follows: for all $t$,
\begin{align*}
    \sum_{i\in I_t} \sigma_{R_i}^2 \geq \min_{i\in I_t}\sigma_{R_i}^2 \geq b:= \frac{64d\sigma_\D^2C^2}{\rho^2}\log_2(2T)G^2.
\end{align*}
On the other hand, since $|I_t|\leq \log_2(2T)$,
\begin{align}
    \sum_{i\in I_t} \sigma_{R_i}^2 \leq B_t:= \frac{64d\sigma_\D^2C^2}{\rho^2} \log_2^2(2T) (G+DH)^2 t^4.
    \notag
\end{align}
Hence, $B_t/b \leq \log_2(2T)\kappa^2T^4 \leq (2\kappa T)^5$ (because $\log_2(2T)\leq 2T$ and $\kappa\geq 1$). By definition of $\Phi$,
\begin{align*}
    \sqrt{\log\frac{4dT}{\delta}+\log\log\frac{B_t}{b}}
    \leq \sqrt{\log\frac{20dT\log(2\kappa T)}{\delta}} = \Phi.
\end{align*}
Recall that $A'=8\sqrt{d}\sigma_\D C^2$. By Lemma \ref{lem:nSG-concentration}, for each $t$, with probability at least $1-\delta/2T$,
\begin{align}
    \|\gamma_t\|_2 
    &\leq C\sqrt{\sum_{i\in I_t}\sigma_{R_i}^2\left(\log\frac{4dT}{\delta}+\log\log\frac{B_t}{b}\right)}
    \leq \frac{A'}{\rho}(G+DH)\Phi \log_2(2T)t^2.
    \label{eq:privacy-concentration}
\end{align}
By union bound, with probability at least $1-\delta/2$,
\begin{align*}
    \sum_{t=1}^T\|\gamma_t\|_2\|w_t-x^*\|_2 
    &\leq \sum_{t=1}^T \frac{A'}{\rho}(G+DH)\Phi \log_2(2T)t^2(\|w_t\|_2+\|x^*\|_2).
\end{align*}

In conclusion, we take the union bound on the results from \textbf{step 1.} and \textbf{step 2.} and substitute it back to the starting point \eqref{eq:parameter-free-base}.  Then with probability at least $1-\delta$,
\begin{align*}
    \weight_{1:T}(\L(x_T)-\L(x^*))
    &\leq R_T(x^*) + \sum_{t=1}^T 28AH\Phi t^{5/2}(\|w_t\|_2^2+\|x^*\|_2^2)  \\
    & \quad \ + \sum_{t=1}^T \left(AG\Phi t^{5/2} + A'(G+DH)\frac{\Phi\log_2(2T)t^2}{\rho}\right)(\|w_t\|_2+\|x^*\|_2).
\end{align*}
Define $\xi_t,\nu_t$ as in the theorem, and recall that $\weight_{1:T}\geq T^4/4$. This completes the proof.
\end{proof}

\LemmaChangeSum*

\begin{proof}
The proof is basically re-pairing the summations:
\begin{align*}
    \sum_{i=1}^N\sum_{j=1}^i a_i b_j 
    &= a_1b_1 + a_2(b_1+b_2) + a_3(b_1+b_2+b_3) + \dotsm \\
    &= (a_1+\dotsm+a_N)b_1 + (a_2+\dotsm+a_N)b_2 + \dotsm \sum_{i=1}^T\sum_{j=0}^{T-i} a_{t-j} b_i.
\end{align*}
For the second part of the theorem, denote $B_j^i=\sum_{k=j}^i b_k$. By first part,
\begin{align*}
    LHS &= \sum_{i=1}^N a_i \sum_{j=1}^i c_j \sum_{k=j}^i b_k \\
    &= \sum_{i=1}^N\sum_{j=1}^i a_ic_j (B_j^N-B_{i+1}^N) \\
    &= \sum_{i=1}^N\sum_{j=i}^N a_jc_iB_i^N - a_jc_iB_{j+1}^N \\
    &= \sum_{i=1}^N\sum_{j=i}^Na_jc_iB_i^j.
\end{align*}
We then recover the lemma once we substitute $B_i^j=\sum_{k=i}^j b_k$.
\end{proof}



\end{document}